\newcommand{\withappendix}[2]{#1} 
\title{Learning Combinations of Sigmoids\\ Through Gradient Estimation}
\author{
  Stratis Ioannidis\\
  Dept.~of Electrical and Computer Engineering\\
  Northeastern University\\
  Boston, MA, 02115\\
  \texttt{ioannidis@ece.neu.edu} \\
   \And
  Andrea Montanari\\
  Dept.~of Electrical Engineering and Dept.~of Statistics\\
 Stanford University\\
 Stanford, CA 94304\\
  \texttt{montanari@stanford.edu}\\
}
\newtheorem{claim}{Claim}[section]
\newtheorem{lemma}[claim]{Lemma}
\newtheorem{theorem}{Theorem}
\newtheorem{corollary}[claim]{Corollary}
\newcommand{\si}[1]{\textcolor{red}{\bf [Stratis:  #1]}}
\newcommand{\upperb}[1]{\overline{#1}}
\newcommand{\lowerb}[1]{\underline{#1}}
\def\cR{{\mathcal{R}}}
\def\cB{{\mathcal{B}}}
\def\cG{{\mathcal{G}}}
\def\de{{\rm d}}
\def\reals{{\mathbb R}}
\def\bu{{\boldsymbol{u}}}
\def\bM{{\boldsymbol{M}}}
\def\bv{{\boldsymbol{v}}}
\def\bow{{\boldsymbol{\overline{w}}}}
\def\uf{{\upperb{f}}}
\def\lf{{\lowerb{f}}}
\def\cC{{\mathcal C}}
\def\bx{{\boldsymbol{x}}}
\def\bX{{\boldsymbol{X}}}
\def\bhx{{\boldsymbol{\hat{x}}}}
\def\bhxi{{\boldsymbol{\hat{\xi}}}}
\def\bhw{{\boldsymbol{\hat{w}}}}
\def\btw{{\boldsymbol{\tilde{w}}}}
\def\bw{{\boldsymbol{w}}}
\def\bd{{\boldsymbol{d}}}
\def\bxi{{\boldsymbol{\xi}}}
\def\bXi{{\boldsymbol{\Xi}}}
\def\bzeta{{\boldsymbol{\zeta}}}
\def\ba{{\boldsymbol{a}}}
\def\bb{{\boldsymbol{b}}}
\def\<{\langle}
\def\>{\rangle}
\def\prob{{\mathbb P}}
\def\E{{\mathbb E}} 
\def\Var{{\sf{Var}}}
\def\Cov{{\sf{Cov}}}
\def\eps{{\varepsilon}}
\def\normal{{\sf N}}
\def\id{{\rm I}}
\def\Kexp{K_{{\rm exp}}}
\def\ind{{\mathbb{I}}}
\def\arc{{\mathop{\rm arc}}}
\DeclareMathOperator{\linspan}{span}
\newcounter{packednmbr}
\newenvironment{packedenumerate}{\begin{list}{\thepackednmbr.}{\usecounter{packednmbr}\setlength{\itemsep}{0pt}\addtolength{\labelwidth}{-5pt}\setlength{\leftmargin}{\labelwidth}\setlength{\listparindent}{\parindent}\setlength{\parsep}{0pt}\setlength{\topsep}{3pt}}}{\end{list}}
\newenvironment{packeditemize}{\begin{list}{$\bullet$}{\setlength{\itemsep}{0pt}\addtolength{\labelwidth}{-5pt}\setlength{\leftmargin}{\labelwidth}\setlength{\listparindent}{\parindent}\setlength{\parsep}{0pt}\setlength{\topsep}{3pt}}}{\end{list}}
\begin{document}

\maketitle

\begin{abstract}
We develop a new approach to learn the parameters of regression models
with hidden variables. In a nutshell, we estimate the gradient of the
regression function at a set of random points, and cluster the
estimated gradients. The centers of the clusters are used as estimates
for the parameters of hidden units.
We justify this approach by studying a toy model, whereby the
regression function is a linear combination of sigmoids. We prove that
indeed the estimated  gradients concentrate around the parameter
vectors of the hidden units, and provide non-asymptotic bounds on the
number of required samples. To the best of our knowledge, no comparable
guarantees have been proven for linear combinations of sigmoids.
\end{abstract}


\section{Introduction}

Classification and regression models with hidden
variables have a long history in statistical learning. They naturally arise when learning mixtures, a topic recently receiving considerable attention~\citep{chaganty2013spectral,sun2014learning,anandkumar2012learning,anandkumar2014tensor,hsu2013learning}.
Interest on such models has also increased because of the empirical
success of deep neural networks in image and speech processing tasks 
\citep{bengio2009learning,krizhevsky2012imagenet,hinton2012deep}.
One of the most striking properties of these models is the ability to
learn high-level representations that are particularly
useful for discriminative purposes \withappendix{\citep{boureau2008sparse,mairal2009supervised,boureau2010learning,yu2013feature,humphrey2013feature}.}{\citep{mairal2009supervised,boureau2010learning,yu2013feature}.}
This ability--often referred to as `feature learning'--is yet poorly
understood. From a modeling point of view, it is unclear what are the key
elements of such high-level representations, and how are they captured (for
instance) by deep neural networks. From a computational point of view,
the corresponding empirical risk minimization problem is highly
non-convex, and it is unclear why existing algorithms are empirically
successful at learning these representations. 

In this work, we consider a regression model with response variable 
$y\in \reals$ and covariates $\bx\in\reals^d$, linked through 
the regression function:
\begin{align}
\E\big\{y|\bx\big\} \equiv r(\bx) =\textstyle\sum_{\ell=1}^k u_\ell\, f(\<\bw^\ell,\bx\>)\,,\label{eq:Model}
\end{align}
where, $\bw^1,\dots,\bw^k\in\reals^d$, $u_1,\dots,u_k\in\reals$, 
 $f:\reals\to\reals$ is the sigmoid 
$f(x) = \tanh(\beta x) = \frac{e^{\beta x} -e^{-\beta x}}{e^{\beta x} + e^{-\beta x}},$ for some  $\beta>0$, and
 $\<\ba,\bb\>=\sum_{i=1}^da_ib_i$ is the usual scalar product
in $\reals^d$.  In the \emph{general case},  the weights $u_\ell$, $\ell\in [k]\equiv\{1,\ldots,k\}$ have arbitrary signs, while in the \emph{mixture case}, they are positive and sum to one.
Our objective is  to learn the parameter vectors
$\{\bw^{\ell}\}_{\ell=1}^k$, particularly when 
 $k\ll d$; it is useful to pause for a  few remarks on this model:
\begin{packeditemize}
\item In the \emph{general case}, learning the parameter vectors $\{\bw^\ell\}_{\ell=1}^{k}$  can be viewed as a simple model for feature learning. In particular, \eqref{eq:Model}  corresponds to a two-layer neural network with
$k$ hidden units. The non-linear functions $f(\<\bw^1,\cdot\>)$, $
f(\<\bw^2,\cdot\>)$, \dots, $f(\<\bw^k,\cdot\>)$, provide a
high-level, lower-dimensional representation of the data. 
%
\item In the \emph{mixture case}, \eqref{eq:Model} is the expected label generated by a mixture of $k$ logistic classifiers, each selected with probability $u_\ell$.  Each vector $\bw^{\ell}$ is the normal to the separating hyperplane defining each classifier; learning  $\{\bw^{\ell}\}_{\ell=1}^k$ thus corresponds to learning the mixture's constituent distributions. When $k\ll d$,  the number of classifiers (or `modes') is smaller than the ambient dimension, as is the case in many applications~\citep{sun2014learning,icml2014c2_yia14,chen2014convex}.
\item In both cases, once $\{\bw^{\ell}\}_{\ell=1}^{k}$ are known, learning the full
 regression function is straightforward: fitting $\{u_\ell\}_{\ell=1}^{k}$
is a standard linear regression problem. 
%
%
\end{packeditemize}
Our approach is based on a simple remark.
The gradient of the regression function $r$  at any $\bxi\in\reals^d$ is
a linear combination of $\{\bw^{\ell}\}_{\ell =1}^k$, with coefficients
depending on $\bxi$, i.e.,
%
$\nabla r(\bx) = \sum_{\ell=1}^{k} c_{\ell}(\bxi) \, \bw^{\ell}\, $
%
(see Section~\ref{sec:intuition}).
Further, if $\bxi$ is  sufficiently far from the origin, this linear
combination is typically \emph{sparse}: it contains at most one  non-vanishing coefficient. Our algorithm thus proceeds in two steps:
$(1)$ estimate the gradient $\nabla r(\,\cdot\,)$ at $m_0$ random positions
$\bxi^1$, \dots, $\bxi^{m_0}$; $(2)$ cluster these estimates and use 
cluster centers as estimates of $\{\bw^\ell\}_{\ell=1}^k$.

Our main technical contribution is to prove that this approach is
consistent: for large  $m_0$, the algorithm
generates gradient estimates that  concentrate around $\bw^1$, \dots, $\bw^{k}$. We establish non-asymptotic bounds on the
minimum sample size that guarantees this clustering to take place. We do so under three different methods for estimating $\nabla r(\cdot)$.
Assuming access to a value oracle for $r$,  we construct a gradient estimator under which clustering succeeds with only $\Theta(d)$ oracle calls. Assuming access to covariates $\bx$ sampled from a standard Gaussian distribution, we show that clustering succeeds with access to $e^{\Theta(d)}$ samples in the general case. A dimensionality reduction method by \cite{sun2014learning} allows us to reduce the complexity to $\Theta(d)+e^{\Theta(\mathtt{poly}(k))}$ samples, in the mixture case (i.e., $\{u_\ell\}_{\ell=1}^{k}$ positive and summing to one).

The rest of the paper is organized as follows. In
Section~\ref{sec:relatedwork}, we review related work in the area. 
Section~\ref{sec:Method} describes in detail the new algorithm. 
In Section \ref{sec:Mainresult} we state our main results.  Finally,
we outline our proof in Section \ref{sec:Proof}, with many technical
details deferred to the~\withappendix{appendix.}{supplement.}

\section{Related Work} \label{sec:relatedwork}

Typical approaches to learning mixtures, like
EM~\citep{dempster1977maximum} come with no guarantees, and
suffer from convergence to local minima. Providing
guarantees for even the
idealized case of learning mixtures of Gaussians is non-trivial, and has been the subject of several recent
studies~\citep{moitra2010settling,hsu2013learning}.  
There are relatively few rigorous results that guarantee learning 
for regression models with latent variables. 
 \cite{chaganty2013spectral} consider mixtures of linear
regressions. In this setting, they show that regressing the
response from second and third order tensors of the
covariates yields coefficients, also higher order tensors,
whose decomposition reveals the model parameters. A different approach, relying only on the second-order tensor
(i.e., the covariance) and alternating minimization is followed by
\cite{icml2014c2_yia14} ,
for a mixture composed of two linear models in the absence of
noise; the same setting, in the presence of noise, is
studied by \cite{chen2014convex}. 
None of these approaches can be applied to our model:
our components are non-linear (sigmoids), while the above works 
 focus on
linear components; 
moreover, both \cite{icml2014c2_yia14} and
\cite{chen2014convex} limit their analysis to $k=2$ hidden units.

\cite{sedghi2014provable}  apply the moment method, as well as
tensor factorization techniques, to learn  mixtures of  sigmoids. 
Their contribution  is not directly comparable to ours:
they assume the
vectors $\bw^{\ell}$ are random, and require a special
non-degeneracy condition on the expectations of third derivatives of
the hidden units. For instance, this condition is not satisfied by
$f(x) = \tanh(\beta x)$ if $\bx$ has a symmetric distribution. 
 \cite{sun2014learning} consider the
problem of learning a mixture of linear
classifiers, and provide guarantees for learning the subspace
spanned by $\{\bw^\ell\}_{\ell = 1}^k$. This, in turn, can
be used for dimensionality reduction: projecting the
covariates to this space reduces the problem dimension from $d$ to $k$. We focus on
learning the parameter vectors, rather than their span; our contribution  is thus complementary to \cite{sun2014learning}. In fact, we exploit their result to reduce our algorithm's complexity under Gaussian covariates.
 
Our approach can be cast as a means to learn the parameters
of a two-stage neural network. Such networks are known to be
quite expressive~\citep{barron1993universal}, in that they can
approximate arbitrary polynomials. A recent result by
\cite{andoni2014learning} shows that, in the presence of a large
number of neurons with random parameter vectors, a polynomial can be learned through
gradient descent. Our contribution differs in two important
directions: $(i)$ we want  to learn the hidden-unit parameter vectors, rather than
approximate a given regression function, and $(ii)$ we develop explicit bounds on the
sample size, while \cite{andoni2014learning} assume infinite sample
size $n=\infty$. Namely, they assume  access to the gradient of
the regression function: a substantial part of our technical work
is devoted to proving  this can be sufficiently estimated. 
\cite{arora2014provable} prove that certain \emph{very sparse} deep
neural networks with random connection patterns can be learned in polynomial time and sample
complexity. In model (\ref{eq:Model}), this would correspond to  
random, sparse vectors $\bw^{\ell}$, $\{u_{\ell}\}_{\ell=1}^{k}$. Their techniques do not seem applicable to non-random
connections or to non-sparse graphs.
Finally, there are several celebrated results on the sample complexity of approximating a function through a neural network (see, e.g., \cite{anthony2009neural}). This is a different problem than the \emph{parameter estimation} problem we solve here. A formal understanding of parameter estimation is  crucial in understanding why neural nets learn low-dimensional representations well. Parameter estimation also naturally arises in learning mixtures, where correctly identifying the constituent components (or, modes) is of equal or greater importance than regressing the mixture function. 

\section{Modelling Assumptions and Learning Algorithm}
\label{sec:Method}


\subsection{Modeling Assumptions}\label{model}

Recall that we consider a regression model with response variable $y\in \reals$, generated through \eqref{eq:Model} from covariates $\bx\in \reals^d$. 
Note that \eqref{eq:Model} is equivalent to
%
%
$y =\sum_{\ell=1}^k u_\ell\, f(\<\bw^\ell,\bx\>) + \eps\, ,$
%
%
with $\eps$ a noise term such that $\E\{\eps|\bx\} = 0$.  
In the general case, we assume that for any  $\ell\in[k]\equiv\{1,\ldots,k\}$, the vectors $\bw^\ell$ have
unit norm, i.e., $\|\bw^\ell\|_2=1$,  
 the absolute value of each weight is at most one, i.e., $u_\ell\in
 [-1,1]$,   and  the response is bounded, i.e.,
 $y\in [-M,M]$ for some $M>0$. In the mixture case, we assume in addition that $u_\ell\geq 0$, $l\in[k]$, and $\sum_{\ell=1}^ku_\ell=1.$ 

Clearly, we cannot learn a vector $\bw^\ell$ if $u_\ell=0$, nor
distinguish two vectors $\bw^\ell,\bw^{\ell'}$, where $\ell\neq
\ell'$, if they are identical. For this reason, we make the following two
additional assumptions. First,  coefficients $u_\ell$ are bounded away from
zero, i.e., there exists a $u_0$ s.t.~$0<u_0\leq| u_\ell|<1$, for all
$\ell\in[k]$. Second,  the collinearity between any subset of 
vectors $\bw^\ell$ is also bounded. Formally, let 
$\bM= [\bw^1|\bw^2|\cdots|\bw^k]\in \reals^{d\times k}$ be the matrix comprising the vectors $\bw^\ell$ column-wise. We assume that there exists a $\kappa>0$ such that 
$\kappa\le \sigma_{\rm min}(\bM)$, 
 where 
 $\sigma_{\rm min}$ the smallest
singular value of $\bM$.  Intuitively, the existence of  $\kappa$ implies a lower bound on the angle between any two vectors $\bw^\ell,\bw^{\ell'}$. 

Our learning method relies on producing estimates of the gradient $\nabla r(\cdot)$ at arbitrary points in $\reals^d$. We produce estimators of the gradient under two different models on our ability to sample function $r$:
\begin{packeditemize}
\item \textbf{Value Oracle Model}. Under our first model, we assume access to a \emph{value oracle}: given a $\bx\in \reals^d$, the oracle produces a $y\in \reals$ governed by \eqref{eq:Model}, while successive calls to the oracle are independent. Put differently, we treat \eqref{eq:Model} as a `black box', whose inputs are under our algorithm's control. 
\item \textbf{Gaussian Covariates Model}. Under our second model, we assume that the covariates $\bx$ follow a standard Gaussian~$\normal(0,\id_{d\times d})$, while $y$ is given by \eqref{eq:Model}. Our learning algorithm has access to $n$ independent pairs $(\bx^{(1)},y^{(1)})$, \dots, $(\bx^{(n)},y^{(n)})$ generated from the above joint distribution, and must construct gradient estimates $\nabla r(\bxi)$ at different $\bxi \in \reals^d$ from this dataset alone.
\end{packeditemize}   


\subsection{Intuition Behind our Approach}\label{sec:intuition}

Consider the gradient of the expected response function $r:\reals^d\to \reals$, evaluated at a $\bxi\in \reals^d$:
\begin{align}\label{gradient}
\nabla r(\bxi)=\textstyle\sum_{\ell=1}^k u_\ell  f'(\<\bw^\ell,\bxi\>) \cdot \bw^\ell\, 
\end{align}  
Observe that $\nabla r(\bxi)$ is a linear combination of the parameter vectors $\bw^\ell$. Moreover, since $f$ is a sigmoid, $\lim_{|t|\to\infty} f'(t) =0 $. Thus, for any $\bxi$ such that $|\<\bw^\ell, \bxi\>| \gg 1$, the coefficient $ u_\ell  f'(\<\bw^\ell,\bxi\>) $ weighing the contribution of $\bw^\ell$ to the gradient $\nabla r(\bxi)$ is small. As a result,  $\bw^\ell$ contributes significantly to the gradient $\nabla r(\bxi)$ when it is approximately normal to $\bxi$, i.e., $|\<\bw^\ell, \bxi\>| \approx 0$. 

These observations motivate our approach.  Presuming the existence of an estimator of the gradient, our algorithm amounts to the following three steps:
\begin{packedenumerate}
\item Pick several $\bxi \in \reals^d$, and produce estimates of the gradient $\bw(\bxi)\approx \nabla r (\bxi)$.
\item If $\|\bw(\bxi)\|_2$ is below a threshold $w_0$, ignore this estimate. Otherwise, normalize it, producing $\btw(\bxi) = \bw(\bxi)/\|\bw(\bxi)\|_2$. 
\item Identify $k$ clusters among the resulting `candidate' vectors
  $\btw(\bxi)$, and report the centers of these $k$ clusters as the parameter vector estimates for $\{\bw^\ell\}_{\ell=1}^k$. 
\end{packedenumerate}
If a $\bw(\bxi)$ has a high norm, then $\bxi$ must be approximately normal to at least one vector in  $\{\bw^\ell\}_{\ell=1}^k$. Moreover, if it is approximately normal to only one such vector, say $\bw^1$, by \eqref{gradient} the estimated gradient $\bw(\bxi)$ will have a significant component in the direction of $\bw^1$. As such, after renormalization, all such vectors are indeed clustered around $\bw^1$. Our formal guarantees, as stated by Theorem~\ref{maintheorem}, establish that  most candidates indeed satisfy this property, with the exception of a small spurious set.

\subsection{Gradient Estimation}\label{sec:gradest}
The above approach crucially relies on estimating the response
gradient $\nabla r(\cdot)$ at an arbitrary $\bxi\in \reals^d$. We discuss our estimation process for each of the two models below.

\begin{packeditemize}
\item\noindent\textbf{Gradient Estimation in the Value Oracle Model.} Under the Value Oracle model, given a $\bxi\in \reals^d$, we generate $n_0$ i.i.d.~pairs $(\bx^{(i)},y^{(i)})$ where each $\bx^{(i)}$ is sampled from $\normal(\bxi,\id_{d\times d})$, and $y^{(i)}$ is the corresponding value returned by the oracle. The estimate  $\bw(\bxi)$ of  $\nabla r(\bxi)$ is then:
\begin{align}
\bw(\bxi) = \textstyle \frac{1}{n_0}\sum_{i=1}^{n_0} (\bx^{(i)}-\bxi)y^{(i)}. \label{voestimate}
\end{align}
\end{packeditemize}
If $m_0$ is the number of gradient estimates, the total number of oracle calls  is $n=m_0\times n_0$.

\begin{packeditemize}
\item\noindent\textbf{Gradient Estimation in the Gaussian Covariates Model.}
Given a $\bxi\in\reals^d$, we first compute
the `barycenter' of all covariates $\bx^{(i)}$ w.r.t.~the  exponential kernel
%
%
$K(\bxi,\cdot) = \exp\big\{\<\bxi,\cdot\>\big\}\, ,$ namely, 
%
%
$\bx(\bxi) = \frac{\sum_{i=1}^nK(\bxi,\bx^{(i)})\, \bx^{(i)}}{\sum_{i=1}^nK(\bxi,\bx^{(i)})}\, .$
%
%
%
%
Then, we compute the estimate $\bw(\bxi)$ of  $\nabla r(\bxi)$  as: 
\begin{align}
\bw(\bxi) \equiv \frac{\sum_{i=1}^nK(\bxi,\bx^{(i)})\,
  y^{(i)}\big(\bx^{(i)}-\bx(\bxi)\big)}{\sum_{i=1}^nK(\bxi,\bx^{(i)})} \, . \label{eq:Slope}
\end{align}
\end{packeditemize}
Note that \emph{the same} $n$ covariate/label pairs $\{(\bx^{(i)},y^{(i)})\}_{i=1}^{n}$ are used in the computation of each estimate $\bw(\bxi)$. This is in contrast to the Value Oracle model, where inputs to \eqref{eq:Model} are centered on $\bxi$.

\noindent\textbf{Correctness}. The estimates $\bw(\bxi)$ produced under either of the two models through \eqref{voestimate} and \eqref{eq:Slope}, respectively, capture the local slope of the
regression function:   both constitute
 asymptotically unbiased estimates of $\E_\bxi[\nabla r(\bX)]$, namely,
 the expectation of the gradient when
 $\bX\sim\normal(\bxi,\id_{d\times d})$ (c.f.~Lemmas~\ref{assympboundsvoestimate} and~\ref{assympboundscor}). However, our algorithm (Algoritm~\ref{algo:Candidates}) is agnostic to how the gradient is estimated. 
In principle as well as in practice,  alternative approaches (like, e.g., using different kernels, or regressing $r(\cdot)$ locally at $\bxi$ through linear approximation) could be used instead.

\subsection{Candidate Generation Algorithm}

 The entire candidate generation process  is summarized in
 Algorithm~\ref{algo:Candidates}. In short, we first produce 
 of $m_0$ i.i.d.~vectors $\bxi$, sampled from a common Gaussian distribution
 $p_{\bxi} = \normal(0,\xi_0^2\,\id_{d})$, with covariance proportional to the identity. For each such $\bxi$, we produce a
 gradient estimate $\bw(\bxi)$ using Eq.~\eqref{voestimate} or \eqref{eq:Slope}. We ignore all
 estimates whose norm is below a threshold, namely
 $\|\bw(\bxi)\|_2\le w_0$. Finally, we
 normalize the remaining estimates, thus producing  the final `candidate'
 set 
$\bw(\bxi^{1})$, \dots, $\bw(\bxi^{m})$, where $m\leq m_0$.
Both $\bxi_0$ as well as the threshold $w_0$ are design parameters, which we specify below in our convergence theorem (Theorem~\ref{maintheorem}).

Fig.~\ref{fig:execution} illustrates an execution of Algorithm~\ref{algo:Candidates}. The candidates generated are indeed close to the parameter vectors, which are succesfully recovered through simple $k$-means over these candidates.

\begin{algorithm}[!t]
\caption{\textsc{CandidateGeneration}}
\label{algo:Candidates}
\begin{small}
\begin{algorithmic}
\STATE $\ell \leftarrow 0$
\FOR{$i\in\{1,2,\dots,m_0\}$}
\STATE generate $\bxi\sim p_{\bxi}$; compute $\bw(\bxi)$ using Eq.~\eqref{voestimate} or Eq.~(\ref{eq:Slope})\label{estimateline}
\IF{$\|\bw(\bxi)\|_2\ge w_0$}
\STATE $\ell\leftarrow \ell+1$; $\bxi^{\ell} \leftarrow \bxi$; $\btw(\bxi^{\ell}) \leftarrow\bw(\bxi^{\ell})/\|\bw(\bxi^{\ell})\|_2$
\ENDIF 
\ENDFOR
\STATE $m\leftarrow \ell$;
 \textbf{return} $m$ and $\{\btw(\bxi^1),\dots,\btw(\bxi^m)\}$
\end{algorithmic}
\end{small}
\end{algorithm}

\begin{figure*}[!t]
\includegraphics[width=0.325\textwidth]{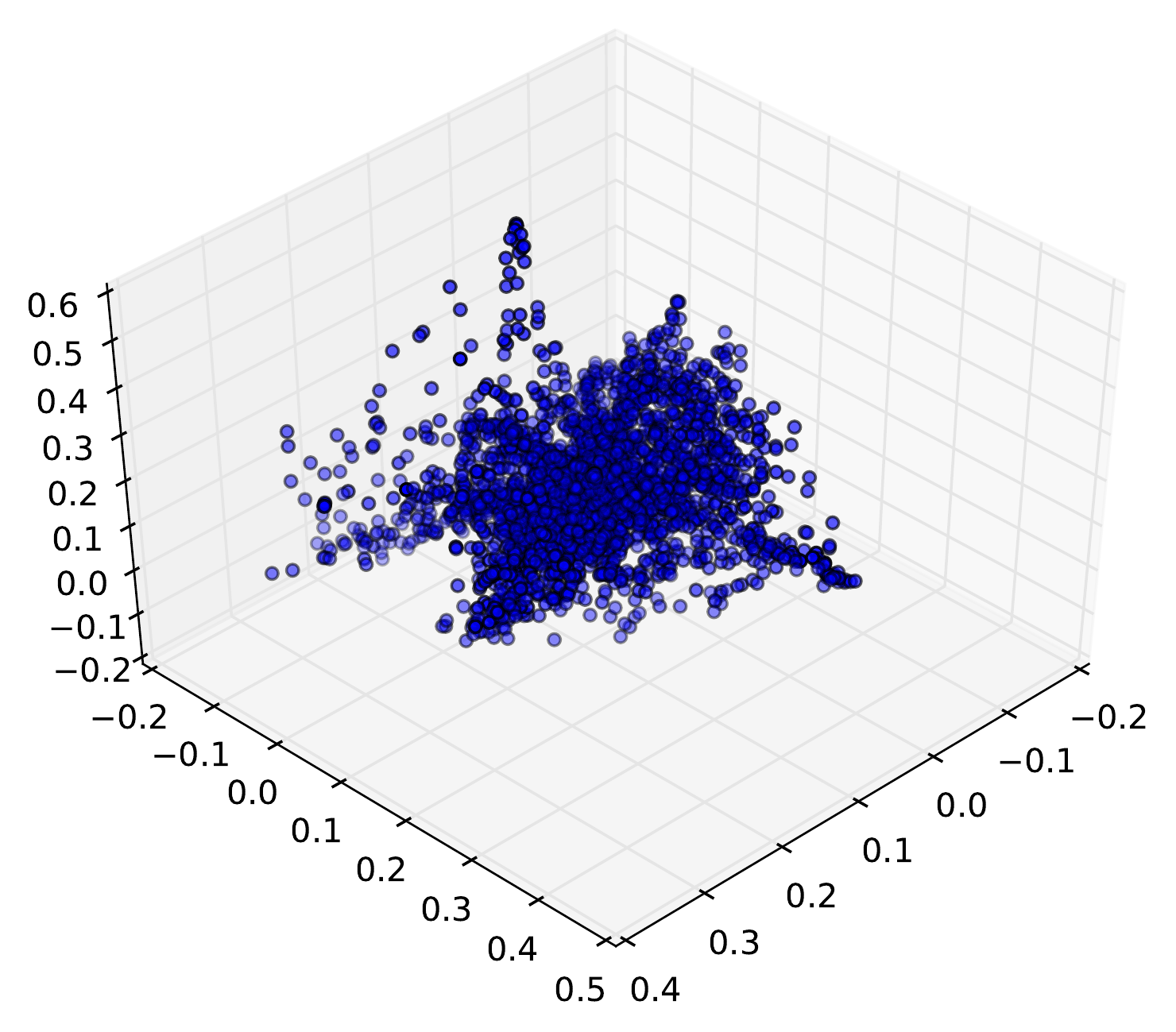}
\includegraphics[width=0.325\textwidth]{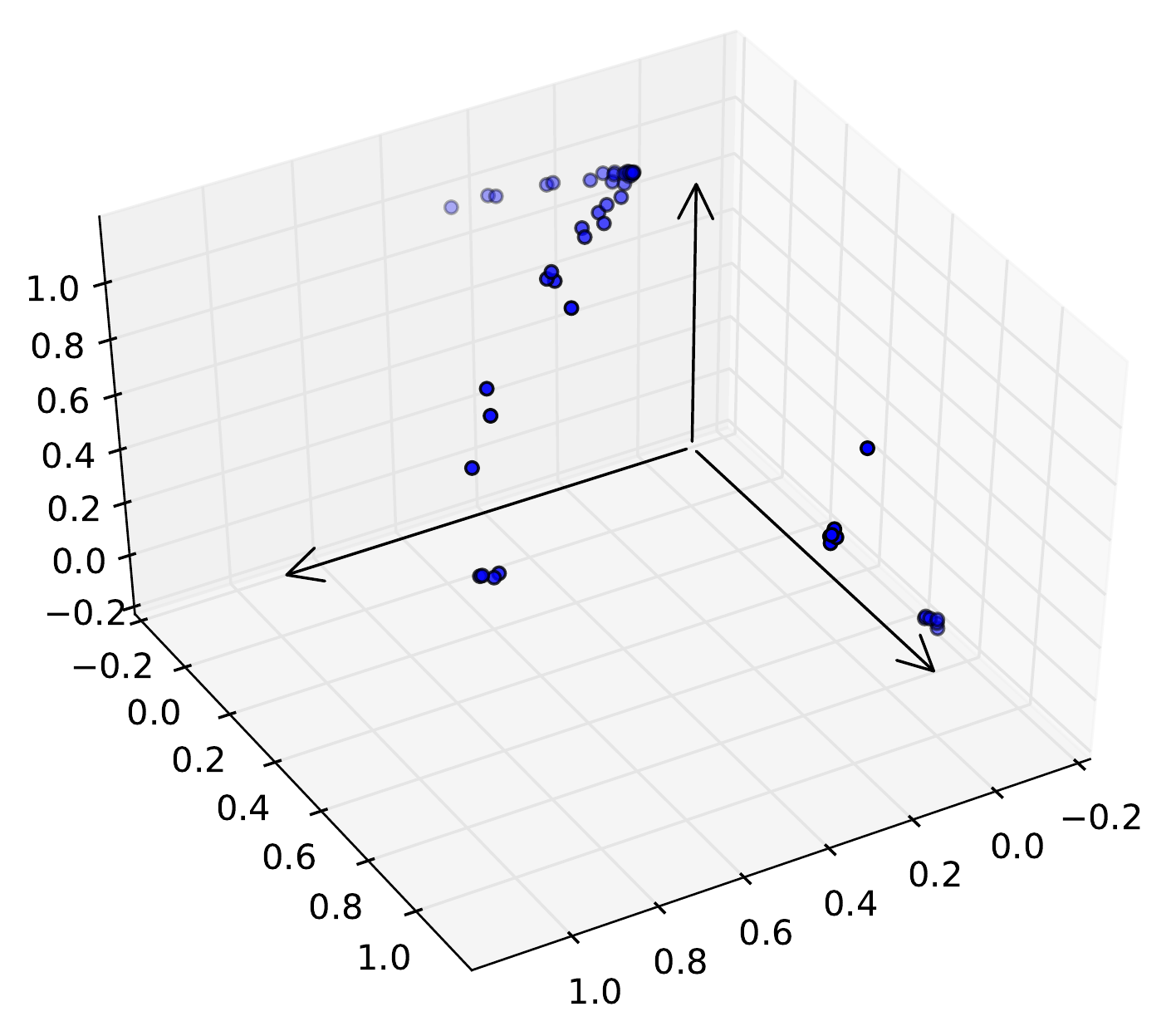}
\includegraphics[width=0.325\textwidth]{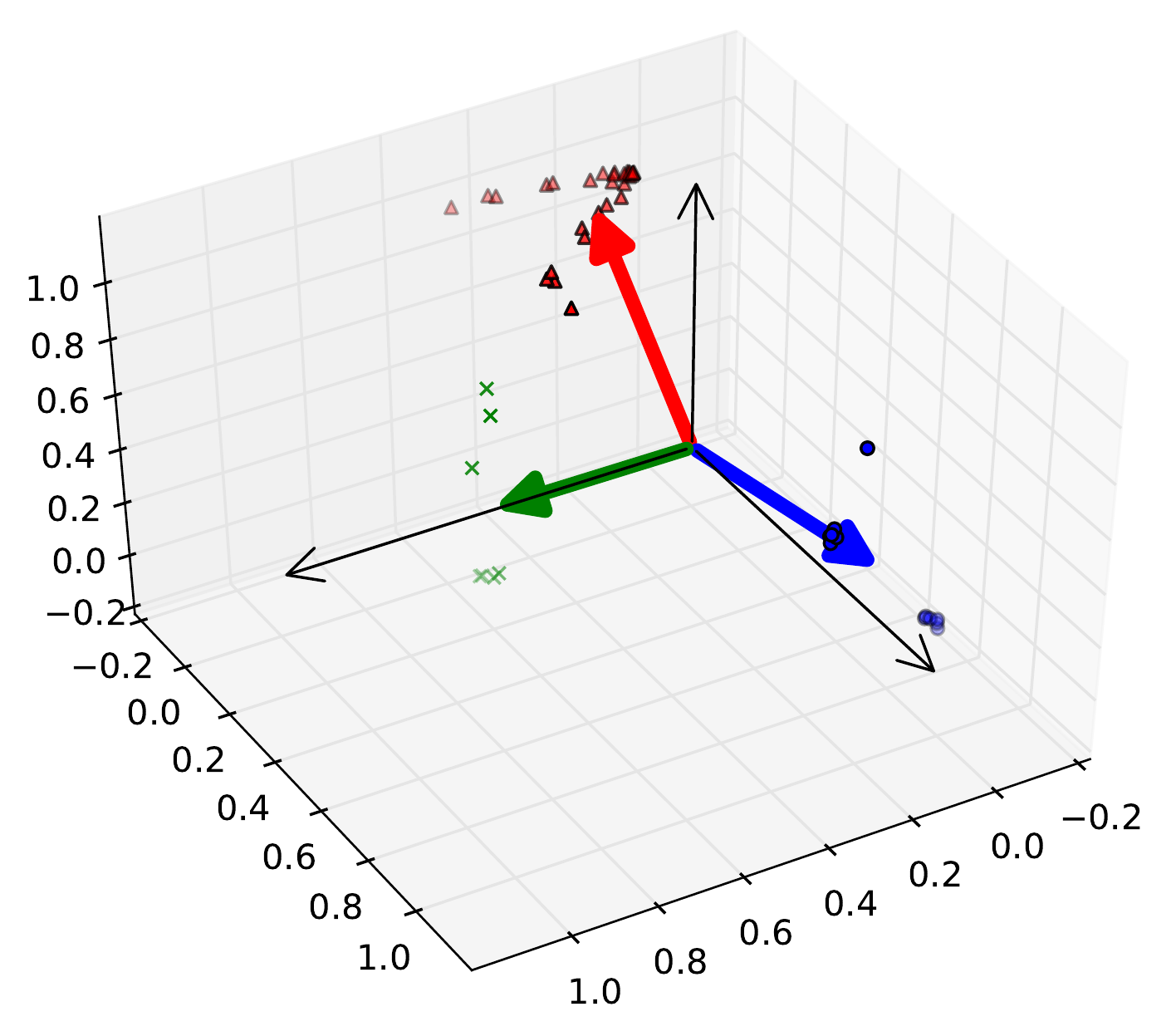}\\
\hspace*{\stretch{1}}(a)\hspace{\stretch{2}}(b)\hspace{\stretch{2}}(c)\hspace{\stretch{1}}
\caption{\small An execution of Algorithm~\ref{algo:Candidates} under the Gaussian Covariates model, for $d=3$, $n=m_0=5000$, the standard basis as parameter vectors, and  uniform weights, using  $\xi_0 = 2\sqrt{d}$. The  $m$ pre-candidate vectors $\bw(\bxi)$ are shown in subfigure (a) as points. Subfigure (b) contains the top 50 vectors with highest norm, after they have been normalized. These vectors are clustered  using $k$-means in subfigure (c), and the resulting cluster centers are indicated with thick arrows. }\label{fig:execution}
%
%
\end{figure*}

\section{Main Results}\label{sec:Mainresult}

\subsection{Generic Combinations under Value Oracle and Gaussian Covariates Models}
Our first result establishes that Algorithm~\ref{algo:Candidates} indeed produces  candidates  clustered around  $\{\bw^\ell\}_{\ell=1}^k$: 
\begin{theorem}\label{maintheorem}
Let  $\{\btw(\xi^1),\dots,\btw(\xi^m)\}$, where $m\leq m_0$, be the output of
the \textsc{CandidateGeneration} algorithm. Then, for any $\delta \in (0,0.5]$, and any $\rho\in(0,1)$,
there exist $w_0 = \Theta\left(\frac{\delta u_0^2}{k}\right) $, $\xi_0= \Theta\left(\frac{k^2}{\kappa\beta\rho}\log\frac{k}{u_0\delta} \right) $,  $\gamma=\Theta(\frac{\kappa\rho}{k^2})$, for which
the following occur
with probability at least $1-\delta$: the set of candidate indices
 $\cC \subseteq [m_0]$  can be partitioned as
%
%
$\cC = \cC_0 \cup\cC_1\cup \cdots\cup\cC_k\, $ 
%
%
so that
  if  $i\in\cC_\ell$  then
%
%
$ \big\|\btw(\bxi^{i}) -\bw^{\ell}\big\|_2\le 6 \delta\, ,$
%
and $|\cC_\ell|\geq \gamma m_0/2$, for all
$\ell\in[k]$,
while $\cC_0$ is a set of `bad' candidates such that
 $|\cC_0|\le 2\rho \gamma m_0$.
This occurs 
    for (a)
$m_0>C\frac{1}{\gamma\rho}\log\frac{k}{\delta}\, ,$ 
and (b) either
$n_0> C'\frac{dM^2}{\delta^2w_0^2} \log\frac{k}{\gamma \rho \delta}\, ,$
 under the Value Oracle model, or
$n > \frac{M^4 d^2}{\delta^2w_0^2}\max\left(C'' \Big(\frac{k}{\gamma\rho\delta}\right)^ {1 +7(1+2d^{-1})\xi_0^2}, e^{  4d \left(\frac{1}{7}+   (1+2d^{-1}) \xi_0^2\right)  } \Big)\, , $
under the Gaussian Covariates model, for some absolute constants $C,C',C''$.

\end{theorem}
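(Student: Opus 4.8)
## Proof Proposal

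The plan is to decompose the argument into (i) an \emph{idealized analysis}, where we pretend we have access to the exact population gradient $\nabla r(\bxi)$ at every query point, and (ii) a \emph{perturbation analysis}, where we control the deviation of the estimators \eqref{voestimate} and \eqref{eq:Slope} from $\nabla r(\bxi)$ uniformly over the random query points $\bxi^1,\dots,\bxi^{m_0}$. For the idealized analysis, the key geometric fact is the one highlighted in Section~\ref{sec:intuition}: writing $t_\ell = \<\bw^\ell,\bxi\>$, we have $\nabla r(\bxi) = \sum_\ell u_\ell f'(t_\ell)\bw^\ell$, and since $f'(t) = \beta(1-\tanh^2(\beta t))$ decays exponentially, only those $\bw^\ell$ with $|t_\ell| = O(\beta^{-1}\log(\cdot))$ contribute non-negligibly. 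First I would show that, when $\bxi \sim \normal(0,\xi_0^2\id)$ with $\xi_0$ chosen as in the statement, with probability bounded below by a constant times $\kappa\rho/k^2 \asymp \gamma$, the point $\bxi$ is ``$\ell$-aligned'': $|t_\ell| \lesssim \beta^{-1}$ while $|t_{\ell'}| \gtrsim \beta^{-1}\log(k/(u_0\delta))$ for all $\ell'\neq\ell$. The lower bound on the smallest singular value $\kappa$ of $\bM=[\bw^1|\cdots|\bw^k]$ is what guarantees the coordinates $(t_1,\dots,t_k) = \bM^\top\bxi$ are ``spread out'' enough (jointly Gaussian with covariance $\xi_0^2\bM^\top\bM \succeq \xi_0^2\kappa^2\id$) that the event of being close to exactly one hyperplane has non-vanishing probability; the factor $k^2$ and the $\log$ in $\xi_0$ come from a union bound over the $k-1$ ``far'' constraints and the exponential tail of $f'$. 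On such an $\ell$-aligned $\bxi$, one shows $\|\nabla r(\bxi)\|_2 \geq u_0 f'(O(\beta^{-1})) - (k-1)\cdot(\text{exp.\ small}) \gtrsim u_0 \beta$, so the norm threshold $w_0 = \Theta(\delta u_0^2/k)$ is cleared, and moreover the normalized gradient satisfies $\|\btw(\bxi) - \bw^\ell\|_2 \lesssim \delta$ — because the contribution of the off-terms to $\nabla r(\bxi)$ is at most $\delta$ relative to the dominant $\bw^\ell$ term. Conversely, if $\|\nabla r(\bxi)\|_2 \geq w_0$ then $\bxi$ must be within $O(\beta^{-1}\log(k/(u_0\delta)))$ of \emph{some} hyperplane; the ``bad'' set $\cC_0$ is exactly the $\bxi$ near two or more hyperplanes, and one bounds the probability of that event by $O(\rho\gamma)$ using the same Gaussian anti-concentration estimate (choosing the ``near two'' region to have relative measure $\rho$ of the ``near one'' region).

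Next I would handle the \emph{sampling over $m_0$ query points}: conditioning on the gradient-estimation error being small, the number of indices $i$ with $\bxi^i$ being $\ell$-aligned is $\mathrm{Binomial}(m_0, p_\ell)$ with $p_\ell \geq \gamma$, and the number in the bad set is $\mathrm{Binomial}(m_0, q)$ with $q \leq \rho\gamma$; a Chernoff bound gives $|\cC_\ell|\geq \gamma m_0/2$ for all $\ell$ and $|\cC_0|\leq 2\rho\gamma m_0$ simultaneously with failure probability $\leq \delta/2$ provided $m_0 \gtrsim \frac{1}{\gamma\rho}\log(k/\delta)$, which is exactly condition (a). This part is routine.

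The substantive work is the \emph{perturbation analysis} establishing that $\|\bw(\bxi^i) - \E_{\bxi^i}[\nabla r(\bX)]\|_2 \leq \delta w_0 / (\text{const})$ — and then that $\E_{\bxi}[\nabla r(\bX)]$ is itself close enough to $\nabla r(\bxi)$ for the preceding geometry to apply (this last point is where Lemmas~\ref{assympboundsvoestimate} and \ref{assympboundscor} enter, controlling the bias introduced by smoothing $r$ against a unit-variance Gaussian; since $f$ has Lipschitz constant $\beta$ this smoothing bias is controllable, possibly after a rescaling absorbed into constants). In the \textbf{Value Oracle model}, $\bw(\bxi) = \frac{1}{n_0}\sum_i (\bx^{(i)}-\bxi)y^{(i)}$ is an average of $n_0$ i.i.d.\ bounded-ish vectors (bounded because $|y|\leq M$ and $\bx^{(i)}-\bxi$ is a standard Gaussian), so a vector Bernstein / sub-exponential concentration bound in $d$ dimensions gives the desired accuracy once $n_0 \gtrsim \frac{dM^2}{\delta^2 w_0^2}\log(k/(\gamma\rho\delta))$ — the $\log$ coming from a union bound over the $k$ clusters and over all $m_0 \lesssim (k/(\gamma\rho\delta))^{O(1)}$ query points; this matches (b) first case. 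In the \textbf{Gaussian Covariates model}, the estimator \eqref{eq:Slope} is a \emph{self-normalized / importance-weighted} average with weights $K(\bxi,\bx^{(i)}) = e^{\<\bxi,\bx^{(i)}\>}$, and the reweighted distribution is $\normal(\bxi,\id)$ — which is why \eqref{eq:Slope} is asymptotically unbiased for $\E_\bxi[\nabla r(\bX)]$. The difficulty is that the importance weights have \emph{heavy (log-normal) tails}: $\E[K(\bxi,\bx)^2] = e^{2\|\bxi\|_2^2}$ while $\E[K(\bxi,\bx)] = e^{\|\bxi\|_2^2/2}$, so the effective sample size is only $\sim n\, e^{-\|\bxi\|^2} = n e^{-\Theta(d\xi_0^2)}$, and $\|\bxi\|_2^2$ itself concentrates around $d\xi_0^2$ with Gaussian fluctuations of size $\sqrt{d}\xi_0^2$ — this is the origin of both the $(k/(\gamma\rho\delta))^{1+7(1+2/d)\xi_0^2}$ factor (from needing the query point's norm under control, i.e. truncating the tail of $\|\bxi^i\|^2$ at the $\log$-scale set by the number of query points) and the $e^{4d(1/7 + (1+2/d)\xi_0^2)}$ factor (the worst-case effective-sample-size blow-up). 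Concretely, I would: bound $\|\bxi^i\|_2^2 \leq (1+2/d)d\xi_0^2 + O(\log(\cdot))$ uniformly over $i$; on that event, write the numerator and denominator of \eqref{eq:Slope} as sums of i.i.d.\ terms whose $p$-th moments I control via Gaussian MGF computations (the ``$7$'' and the $(1+2/d)$ are bookkeeping constants from a Hölder/moment split of $K \cdot y \cdot (\bx - \bx(\bxi))$); apply a Bernstein-type bound to numerator and denominator separately; and combine via a ratio (denominator is $\geq \frac12 e^{\|\bxi\|^2/2} n$ with high probability). The main obstacle, and where I expect most of the technical pages to go, is precisely this control of the self-normalized exponential-kernel estimator in high dimension — making the ``effective sample size $n e^{-\Theta(d\xi_0^2)}$'' heuristic rigorous with explicit constants, uniformly over all $m_0$ query points, while keeping the resulting bound on $n$ of the clean form stated. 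Everything else — the Gaussian anti-concentration for the geometry, the Chernoff bound over $m_0$, and the vector Bernstein bound in the oracle model — is comparatively standard.
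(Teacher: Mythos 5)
Your high‑level architecture is close to the paper's --- anti‑concentration in the $t$‑coordinates to find ``aligned'' query points, coefficient decay to argue normalized gradients cluster around a single $\bw^\ell$, Chernoff over the $m_0$ queries, and separate concentration bounds for the two estimators --- and your intuitions about the Gaussian‑Covariates estimator (log‑normal tails, effective sample size $n e^{-\Theta(d\xi_0^2)}$, the $\max$ of two terms coming from a tail truncation on $\|\bxi^i\|^2$ plus a worst‑case blow‑up) are on target. But there are two concrete problems with your decomposition.

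\textbf{You interpose $\nabla r(\bxi)$ as an intermediate target, and assign the wrong lemmas to the wrong job.} The paper never compares $\nabla r(\bxi)$ with $\bow(\bxi) = \E_\bxi[\nabla r(\bX)]$. It does all of the geometry \emph{directly on the smoothed quantity} $\bow(\bxi)$: Lemma~\ref{lemma:coeffboundssimple} bounds the smoothed coefficients $\E_\bxi\{f'(\<\bw^\ell,\bX\>)\}$ between $\beta\Phi(-2\beta)e^{2\beta^2}e^{-2\beta|\<\bw^\ell,\bxi\>|}$ and $8\beta e^{2\beta^2}e^{-2\beta|\<\bw^\ell,\bxi\>|}$, and those bounds are what define $\Delta$, $w_0$, and the regions $\cR_0,\cR_\ell,\cR_*$. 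Lemmas~\ref{assympboundsvoestimate} and \ref{assympboundscor} have nothing to do with the smoothing bias you mention --- they are pure concentration bounds for $\bw(\bxi)$ around $\bow(\bxi)$. Your plan would need an additional ``$\nabla r(\bxi) \approx \bow(\bxi)$'' step that you treat as ``absorbed into constants,'' but this bias is not small in the operative regime: at the threshold $|\<\bw^\ell,\bxi\>| \approx \Delta$ where the coefficients are tuned to be $\Theta(w_0)$, the relative discrepancy between $f'(t)$ and $\E[f'(t+Z)]$ ($Z\sim\normal(0,1)$) is $\Theta(1)$. You would, in effect, have to re‑derive Lemma~\ref{lemma:coeffboundssimple} anyway. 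The cleaner move is to adopt the smoothed gradient as the target from the outset, which is exactly what the paper does.

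\textbf{Your ``$\ell$‑aligned'' slab is too tight, and the resulting $\gamma$ is off by a $\log$ factor.} You define $\ell$‑aligned as $|t_\ell|\lesssim\beta^{-1}$ while $|t_{\ell'}|\gtrsim\beta^{-1}\log\frac{k}{u_0\delta}$, and claim this occurs with probability $\asymp \kappa\rho/k^2 \asymp \gamma$. But with $\xi_0 = \Theta\big(\tfrac{k^2}{\kappa\beta\rho}\log\tfrac{k}{u_0\delta}\big)$ as stated in the theorem, $\prob(|t_\ell|\lesssim\beta^{-1}) \asymp \tfrac{\beta^{-1}}{\xi_0} \asymp \tfrac{\kappa\rho}{k^2\log(k/(u_0\delta))}$, which is $\gamma/\log(k/(u_0\delta))$, not $\gamma$. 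The paper uses a wider slab $|\<\bw^\ell,\bxi\>| < \Delta/2$ with $\Delta = \Theta(\beta^{-1}\log\frac{k}{u_0\delta})$, matched to the far‑constraint scale, precisely so that Lemma~\ref{anglelemma} delivers $\prob(\bxi\in\cR_\ell') \ge \gamma$. A corollary of this is that your claim ``$\|\nabla r(\bxi)\|_2 \gtrsim u_0\beta$ on the aligned set'' is also too strong: on the region actually used, the dominant coefficient $a_\ell$ is only $\Theta(w_0) = \Theta(\delta u_0^2/k)$ --- it clears the threshold $w_0$, but by a $(1+\delta)$ factor, not by orders of magnitude. Widening the slab to $\Delta/2$ fixes both issues simultaneously, at the price of exactly the exponentially‑smaller coefficient the paper's $w_0$ is calibrated to.

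The remaining pieces --- the Chernoff bound over $m_0$ query points, the inclusion of the gradient‑estimation‑failure event $\cG^c$ in the bad set $\cC_0$ (which you mention only implicitly), the vector concentration for the oracle estimator, and the truncation‑plus‑moment‑split analysis of the exponential‑kernel estimator --- all agree with the paper in spirit.
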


There are several observations to be made. First, the gradient estimation procedures outlined in \ref{sec:gradest} indeed yield sufficiently accurate estimates so that, asymptotically, 
the candidates $\bw^{\ell}$ concentrate around the parameter vectors. On the other hand, there exists also a set of `bad' candidates, that may not necessarily be close to any parameter vector.
However,  this set can be made \emph{arbitrarily
small} compared to the smallest set of `valid' candidates: indeed, 
for any $\rho\in (0,1)$, choosing $m_0$ and $n_0$ or $n$ as in the theorem yields
$|\cC_0|/\min_{\ell}|\cC_\ell|\leq 4\rho$. 
The  spurious set $\cC_0$ is unavoidable; beyond incorrect estimates the gradient (occurring with low probability as $n_0$ and 
$n$ increases), if $d>2$, there are also $\bxi$s that are
approximately normal to \emph{more} than one parameter vectors
$\bw^\ell$, $\ell\in [k]$. Nonetheless, this 
 is significantly  less likely than the event that $\bxi$ is approximately normal to (and thus, estimating) only one of the
parameter vectors (c.f.~Lemma~\ref{anglelemma}).

In both gradient estimation models, a small ($m_0=\Theta(\log k)$) number of $\bxi$s suffices to correctly estimate the clusters. On the other hand, the sample complexity scales as $n=m_0\times n_0 = \Theta(d\log k)$  in the Value Oracle model, and $n= e^{\Theta(d \cdot \mathtt{poly}(k))}$ in the Gaussian Covariates model. Nevertheless, in the next section, we show that the exponential dependence on $d$ can be avoided when $k\ll d$. 
\subsection{Gaussian Covariates with Dimensionality Reduction}

We avoid the exponential dependence on $d$ under Gaussian Covariates by preprocessing covariates through a dimensionality reduction method. Observe that $r(\bx)$ depends on $\bx$ only through the $k$ inner products $\<\bw^\ell,\bx\>$, $\ell \in[k]$. As a result, projecting a $\bx^{(i)}$ to the $k$-dimensional linear space spanned by $\{\bw^{\ell}\}_{\ell=1}^k$ and estimating the gradient would result in no loss of information. Most importantly, this   eliminates the dependence of the gradient estimation on $d$. 
Discovering the linear span of  $\{\bw^{\ell}\}_{\ell=1}^k$  can be performed using, e.g., the \textsc{SpectralMirror} method of \cite{sun2014learning} in the  mixture case:
\begin{theorem}[\cite{sun2014learning}]\label{sunthm}
 Given $n$ covariate/label pairs  generated through the Gaussian Covariates model in the mixture case, the \textsc{SpectralMirror} Algorithm constructs an estimate $\hat{\mathcal{M}}$ of $\mathcal{M}=\linspan(\bM)$ s.t., for all $\theta\leq u_0\kappa$, the largest principal angle $d_P$ between $\mathcal{M}$ and $\hat{\mathcal{M}}$ satisfies
$\prob\{d_P(\mathcal{M},\hat{\mathcal{M}})>\theta\}\leq C_1e^{-C_2 \frac{n\theta^2}{d}}.$
with $C_1$, $C_2$ absolute constants. 
\end{theorem}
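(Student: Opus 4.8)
The plan follows the standard route for subspace recovery: exhibit a population matrix whose leading $k$-dimensional eigenspace is exactly $\mathcal{M}=\linspan(\bM)$ and whose $k$-th eigenvalue is bounded below, estimate that matrix from data, and convert the operator-norm estimation error into a principal-angle bound via a spectral perturbation inequality.

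First I would construct a symmetric matrix $S=\E[\Psi(\bx,y)]\in\reals^{d\times d}$ out of low-order moments of $(\bx,y)$ --- the object produced by the ``mirror'' symmetrization performed by \textsc{SpectralMirror} --- with two properties: (i) $\mathrm{range}(S)\subseteq\mathcal{M}$, i.e.\ $S=\bM A\bM^{\top}$ for some $A\in\reals^{k\times k}$; and (ii) $\sigma_{\rm min}(A)\gtrsim u_0\,c(\beta)$ for an explicit $c(\beta)>0$, so that $\sigma_k(S)\ge\sigma_{\rm min}(\bM)^2\,\sigma_{\rm min}(A)\gtrsim \kappa^2 u_0 c(\beta)$. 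Property (i) is immediate from the fact that $\E[y\mid\bx]=r(\bx)$ depends on $\bx$ only through $\bM^{\top}\bx$, so that Gaussian integration by parts (Stein's identity) writes every entry of $S$ in terms of expectations of derivatives of $f$ evaluated at the $\<\bw^\ell,\bx\>$, each such term being rank one along $\bw^\ell$; thus $A$ is diagonal (or diagonal-dominant) with entries proportional to $u_\ell$ times a Hermite/derivative coefficient $a_\ell$ of $f$. Property (ii) is the delicate point: for the \emph{odd} link $f=\tanh(\beta\cdot)$ the naive second-moment matrix $\E[y(\bx\bx^{\top}-\id)]=\E[\nabla^2 r(\bx)]=\sum_\ell u_\ell\,\E[f''(\<\bw^\ell,\bx\>)]\,\bw^\ell(\bw^\ell)^{\top}$ vanishes because $f''$ is odd, so the whole point of the ``mirror'' construction is to replace it by a symmetrized object whose coefficients $a_\ell$ are nonzero; one must then verify $|a_\ell|$ is bounded away from $0$ (as a function of $\beta$) and that the possible cross terms do not destroy the lower bound on $\sigma_{\rm min}(A)$.

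Second, I would let $\hat S$ be the empirical-average estimator of $S$ over the $n$ samples and show $\|\hat S-S\|_{\mathrm{op}}\lesssim \sqrt{d/n}$ with the advertised probability. Since $\bx$ is Gaussian, the summands $\Psi(\bx^{(i)},y^{(i)})$ have sub-exponential rather than sub-Gaussian tails, so I would first truncate on the event $\|\bx^{(i)}\|_2\le O(\sqrt d)$ --- which perturbs $S$ by only an $e^{-\Omega(d)}$ amount --- and then apply a matrix Bernstein/Hoeffding inequality for bounded matrices (or, if the construction is variational in a direction $\bv$, the same bound over an $\epsilon$-net of the unit sphere in $\mathcal{M}$, costing only a $d$-independent $e^{O(k)}$ union-bound factor). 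In the small-deviation regime relevant here the variance (Gaussian) term of the Bernstein bound dominates, yielding a tail $C_1\exp(-C_2 n s^2/d)$ for the event $\|\hat S-S\|_{\mathrm{op}}>s$.

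Finally, apply the Davis--Kahan $\sin\Theta$ theorem (or Wedin's theorem, if $S$ is accessed as a flattened rectangular matrix): with $\hat{\mathcal M}$ the span of the top $k$ eigenvectors of $\hat S$, the largest principal angle obeys $d_P(\mathcal M,\hat{\mathcal M})\le \|\hat S-S\|_{\mathrm{op}}/\sigma_k(S)$ once the perturbation is below half the gap. Plugging in $\sigma_k(S)\gtrsim\kappa^2 u_0 c(\beta)$ and the concentration bound, and absorbing the problem-dependent factors $\kappa, u_0, c(\beta)$ into $C_1,C_2$ --- which is precisely why the statement restricts to $\theta\le u_0\kappa$, guaranteeing the perturbation stays below the gap so that eigenvalue interlacing is valid --- one obtains $\prob\{d_P(\mathcal M,\hat{\mathcal M})>\theta\}\le C_1e^{-C_2 n\theta^2/d}$. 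The main obstacle is squarely property (ii): designing the ``mirror'' symmetrization so that the population matrix is non-degenerate for the symmetric link $\tanh$, and extracting a clean lower bound on its $k$-th singular value in terms of $u_0$ and $\kappa$; the remaining steps are the routine truncation/Bernstein/Davis--Kahan pipeline.
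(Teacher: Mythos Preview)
The paper does not prove this theorem at all: it is stated as a quotation of a result from \cite{sun2014learning}, and immediately after the statement the authors write that it ``is a special case of Theorem~1 of \citep{sun2014learning}, when the covariates are sampled from a standard (rather than arbitrary) Gaussian.'' There is therefore no proof in this paper to compare your proposal against; the theorem is used as a black box in the proof of Theorem~\ref{dimredtheorem}.

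Your sketch is a reasonable outline of the standard pipeline (population moment matrix supported on $\mathcal{M}$, eigengap lower bound, sub-exponential concentration, Davis--Kahan), and you correctly flag the only genuinely nontrivial ingredient: the naive second-moment construction $\E[y(\bx\bx^{\top}-\id)]$ vanishes for an odd link under a symmetric covariate law, so \textsc{SpectralMirror} must build a different population object whose $k$-th singular value is bounded below. Whether the actual construction in \cite{sun2014learning} matches the symmetrization you have in mind, and whether the resulting eigengap is indeed $\Theta(u_0\kappa^2)$ (or $\Theta(u_0\kappa)$, which is what the restriction $\theta\le u_0\kappa$ in the statement suggests), you would have to check in that paper; the present one gives no details. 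One minor caution: you propose absorbing $\kappa$, $u_0$, and $c(\beta)$ into the ``absolute constants'' $C_1,C_2$, but $u_0$ and $\kappa$ are problem parameters, not absolute constants, so either the statement as quoted is slightly loose on this point or the dependence enters only through the range restriction $\theta\le u_0\kappa$ rather than through the exponent.
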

The above theorem  is a special case of Theorem~1 of \citep{sun2014learning}, when the covariates are sampled from a standard (rather than arbitrary) Gaussian. In short, it implies that $O(\frac{d}{\theta^2})$ points suffice to produce a linear space within a $\theta$ angle from $\mathcal{M}=\linspan(\bM)$.
We leverage this result to improve on the bound of Theorem~\ref{maintheorem} for Gaussian covariates.
First, given $n$ samples from the Gaussian Covariates model, we use as subset of these samples to produce an estimate $\hat{\mathcal{M}}$ of $\mathcal{M}$ through the \textsc{SpectralMirror} algorithm.  
 To  estimate  the gradient $\nabla r(\bxi)$ at $\bxi\in\reals^d$  using the remaining samples, we apply again \eqref{eq:Slope} \emph{on the projections of $\bx^{(i)}$ to $\hat{\mathcal{M}}.$}
More formally:
\begin{packeditemize}
\item\textbf{Gradient Estimation with Projected Gaussian Covariates.} Use $n_1<n$ samples to produce an estimate $\hat{\mathcal{M}}$ of $\mathcal{M}$.
   For every $\bx\in\reals^{d}$, denote by $\bhx$ the projection of $\bx$ to $\hat{\mathcal{M}}$. Using the remaining $n_2= n-n_1$ samples, the estimate of $\nabla r(\bxi)$ at $\bxi\in\reals^s$ is given by: 
\begin{align}\label{projectedestimate}
 \bw(\bxi) \equiv \frac{\sum_{i=n_1+1}^nK(\bxi,\bhx^{(i)})\,
 y^{(i)}\big(\bhx^{(i)}-\bhx(\bxi)\big)}{\sum_{i=n_1+1}^nK(\bxi,\bhx^{(i)})}, 
\end{align}
where $K(\bxi,\bx)=e^{\<\bxi,\bx\>}$ and
%
$\bhx(\bxi) \equiv \frac{\sum_{i=n_1+1}^nK(\bxi,\bhx^{(i)})\, \bhx^{(i)}}{\sum_{i=n_1+1}^nK(\bxi,\bhx^{(i)})} .$%
\end{packeditemize}
Note that 
$\bw(\bxi) = \bw(\bhxi)$, i.e., the estimate depends on $\bxi$ only through its projection to $\hat{\mathcal{M}}$.
The  estimation \eqref{projectedestimate} replaces \eqref{eq:Slope} in Algorithm~\ref{algo:Candidates}, indeed eliminating the exponential dependence on $d$:
\begin{theorem}\label{dimredtheorem}
 Let  $\{\btw(\xi^1),\dots,\btw(\xi^m)\}$, where $m\leq m_0$, be the output of
the \textsc{CandidateGeneration} algorithm when \eqref{projectedestimate} is used to produce $\bw(\bxi)$ in the mixture case. Then, for any $\delta \in (0,0.5]$, and any $\rho\in(0,1)$,
there exist $w_0 = \Theta\left(\frac{\delta u_0^2}{k}\right) $, $\xi_0= \Theta\left(\frac{k^2}{\kappa\beta\rho}\log\frac{k}{u_0\delta} \right) $,  $\gamma=\Theta(\frac{\kappa\rho}{k^2})$, for which
the following occurs with probability at least $1-\delta$: the set of candidate indices
 $\cC \subseteq [m_0]$,  can be partitioned as
%
%
$\cC = \cC_0 \cup\cC_1\cup \cdots\cup\cC_k\, ,$ 
%
%
so that
  if  $i\in\cC_\ell$  then
%
%
$ \big\|\btw(\bxi^{i}) -\bw^{\ell}\big\|_2\le 7 \delta\, ,$
%
and $|\cC_\ell|\geq \gamma m_0/2$, for all
$\ell\in[k]$,
while $\cC_0$ is a set of `bad' candidates such that
 $|\cC_0|\le 2\rho \gamma m_0$.
This occurs 
    for (a)
$m_0>C\frac{1}{\gamma\rho}\log\frac{k}{\delta}\, ,$ 
and (b) 
$n_1>C'd\left(\min \left\{ 2\arcsin \frac{\sqrt{3}\kappa}{8k },2\arcsin \frac{\delta^2}{4k}, u_0\kappa \right\}\right)^{-1}$
and
$n_2 > \frac{M^4 k^2}{\delta^2w_0^2}\max\left(C'' \left(\frac{k}{\gamma\rho\delta}\right)^ {1 +7(1+2k^{-1})\xi_0^2}, e^{  4k \left(\frac{1}{7}+   (1+2k^{-1}) \xi_0^2\right)  } \right), $
 for  absolute constants $C,C',C''$.

\end{theorem}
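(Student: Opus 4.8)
The plan is to reduce Theorem~\ref{dimredtheorem} to the Gaussian Covariates case of Theorem~\ref{maintheorem}, but applied in the $k$-dimensional estimated subspace $\hat{\mathcal{M}}$ rather than in $\reals^d$. I would split the $n=n_1+n_2$ samples into a first block of $n_1$ pairs used \emph{only} to build $\hat{\mathcal{M}}$ via \textsc{SpectralMirror}, and a second block of $n_2$ pairs used \emph{only} in the gradient estimator \eqref{projectedestimate}; since the blocks are independent I may condition on the output of the first throughout. Fixing a target principal angle $\theta$ equal (up to absolute constants and a $\log(1/\delta)$ factor absorbed into $C'$) to the minimum appearing in the stated lower bound on $n_1$, Theorem~\ref{sunthm} in the mixture case guarantees that, for $n_1$ as claimed, the largest principal angle between $\mathcal{M}=\linspan(\bM)$ and $\hat{\mathcal{M}}$ is at most $\theta$ with probability $1-O(\delta)$; I would condition on this event. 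Writing $P$ for the orthogonal projector onto $\hat{\mathcal{M}}$ and $\bhw^\ell=P\bw^\ell/\|P\bw^\ell\|_2$, the angle bound gives $\|P\bw^\ell-\bw^\ell\|_2\le\sin\theta$, hence $\|\bhw^\ell-\bw^\ell\|_2=O(\theta)$ for every $\ell$; a singular-value perturbation estimate ($\|\hat\bM-\bM\|_{\rm op}\le\sqrt k\,O(\theta)$) keeps $\sigma_{\min}(\hat\bM)=\Omega(\kappa)$ — the role of the $2\arcsin(\sqrt3\kappa/(8k))$ term in the $\min$ — while the weights $u_\ell$ are untouched. Thus the modeling assumptions of Section~\ref{model} hold in $\hat{\mathcal{M}}\cong\reals^k$ with $\kappa$ degraded by at most a constant factor and the same $u_0$.

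Next I would set up the projected regression problem. Since $\bx^{(i)}\sim\normal(0,\id_d)$, the projections $\bhx^{(i)}=P\bx^{(i)}$ are i.i.d.\ standard Gaussian on $\hat{\mathcal{M}}$, so in an orthonormal basis of $\hat{\mathcal{M}}$ the estimator \eqref{projectedestimate} coincides \emph{exactly} with \eqref{eq:Slope} run in $\reals^k$ on $\{(\bhx^{(i)},y^{(i)})\}$. Decomposing $\<\bw^\ell,\bx\>=\<P\bw^\ell,\bhx\>+\<(\id-P)\bw^\ell,\bx\>$, the second summand being a centered Gaussian of variance $\le\sin^2\theta$ independent of $\bhx$, I would write $y^{(i)}=\hat r(\bhx^{(i)})+\zeta^{(i)}+\eps^{(i)}$, where $\hat r(\bhx)=\sum_\ell u_\ell f(\<P\bw^\ell,\bhx\>)$ is a genuine combination of sigmoids with parameter directions $\bhw^\ell$, and $\zeta^{(i)}=r(\bx^{(i)})-\hat r(\bhx^{(i)})$ is an extra perturbation noise. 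Using $\|f''\|_\infty=O(\beta^2)$, its conditional bias satisfies $|\E[\zeta^{(i)}\mid\bhx^{(i)}]|=O(k\beta^2\theta^2)$, while $|\zeta^{(i)}|=O(k\beta\theta)$ is bounded and conditionally sub-Gaussian.

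I would then repeat the Gaussian-Covariates proof of Theorem~\ref{maintheorem} verbatim with $\hat r,\bhw^\ell$ and $\Omega(\kappa)$ in place of $r,\bw^\ell,\kappa$, with two adjustments. First, every occurrence of the ambient dimension is now $k$: this replaces $d^2$ by $k^2$, $(1+2d^{-1})$ by $(1+2k^{-1})$, and $e^{4d(\cdots)}$ by $e^{4k(\cdots)}$ in the bound on $n_2$, while $m_0,w_0,\xi_0,\gamma$ retain the same form (the $\kappa\to\Omega(\kappa)$ change being absorbed in the $\Theta$'s). Second, the estimator now sees the extra noise $\zeta^{(i)}$ on top of $\eps^{(i)}$: its zero-mean fluctuations are handled by the same concentration step used for $\eps$ (being bounded), while its $O(k\beta^2\theta^2)$ conditional bias propagates to an $O(k\beta^2\theta^2/w_0)$ additive error in $\bw(\bxi)$, hence in $\btw(\bxi^i)$. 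A further additive $O(\theta)$ appears when converting a guarantee of the form ``$\btw(\bxi^i)$ close to $\bhw^\ell$'' into ``close to $\bw^\ell$''. Choosing $\theta$ at most of order $\delta^2/k$ — the purpose of the $2\arcsin(\delta^2/(4k))$ term, which via Theorem~\ref{sunthm} forces the corresponding scaling of $n_1$ — makes both extra errors at most $\delta$, upgrading the $6\delta$ of Theorem~\ref{maintheorem} to $7\delta$. The cluster-size guarantees $|\cC_\ell|\ge\gamma m_0/2$ and $|\cC_0|\le2\rho\gamma m_0$ carry over unchanged, as they depend only on the geometry of $\{\bhw^\ell\}$ and on the sampling law $p_\bxi$, both essentially unaffected.

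The hard part, I expect, will be the second adjustment above: the projected labels are \emph{not} an unbiased sample from a clean $k$-dimensional instance of \eqref{eq:Model}, since $r$ does not factor through $\bhx$. One must verify that the residual bias $\E[\zeta\mid\bhx]$ is genuinely quadratic in $\theta$ (of order $k\beta^2\theta^2$) rather than merely linear, so that after dividing by the normalization threshold $w_0=\Theta(\delta u_0^2/k)$ it stays below $\delta$; this is exactly why $\theta$ must be taken of order $\delta^2/k$ and why the $n_1$ bound has the stated form. Threading this perturbation, together with the $O(\theta)$ displacement of the directions $\bhw^\ell$, through the asymptotic-unbiasedness lemmas for the Gaussian-covariate estimator, the gradient-concentration bounds, and the angle lemma (Lemma~\ref{anglelemma}) used in the proof of Theorem~\ref{maintheorem} is the only delicate bookkeeping; everything else is a mechanical substitution $d\mapsto k$ in that proof.
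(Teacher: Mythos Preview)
Your high-level plan --- estimate $\hat{\mathcal M}$ from $n_1$ samples, then rerun the Gaussian-covariates analysis of Theorem~\ref{maintheorem} inside the $k$-dimensional subspace on the remaining $n_2$ --- is exactly the paper's. The difference is in how the mismatch between the projected covariates $\bhx^{(i)}$ and the original labels $y^{(i)}$ is handled.

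You decompose $y=\hat r(\bhx)+\zeta+\eps$, treat $\hat r$ as a bona fide $k$-dimensional instance of \eqref{eq:Model}, and thread the perturbation $\zeta$ (its $O(k\beta^2\theta^2)$ conditional bias plus sub-Gaussian fluctuation) through the concentration and partition lemmas. This works, but it is the long way around. The paper instead observes that Stein's lemma, applied directly in $\hat{\mathcal M}$, shows the projected estimator \eqref{projectedestimate} concentrates around
\[
\bhw(\bxi)=\sum_{\ell=1}^k u_\ell\,\bhw^\ell\, \E_{\bhxi}\big\{f'(\langle\bw^\ell,\bX\rangle)\big\},\qquad \bX\sim\normal(\bhxi,\id_d),\quad \bhw^\ell=\hat P\hat P^{\top}\bw^\ell,
\]
which already has \emph{exactly} the structure of \eqref{population}: a linear combination of $\{\bhw^\ell\}$ with coefficients to which Lemma~\ref{lemma:coeffboundssimple} applies verbatim (since $z_\ell=\langle\bw^\ell,\bhxi\rangle=\langle\bhw^\ell,\bxi\rangle$). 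So there is no bias to track at all; the only changes are $d\mapsto k$ in the concentration bound, and $\kappa\mapsto\kappa'=\sqrt{\kappa^2-(4k\sin\tfrac\theta2)^2}$ in the angle lemma. The partition argument of Section~\ref{sec:partitioning} then gives $\|\btw(\bxi^i)-\bhw^\ell\|_2\le 6\delta$, and the last $\delta$ is spent on $\|\bhw^\ell-\bw^\ell\|_2\le\delta$.

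In short, the ``hard part'' you flag is avoidable: the key simplification you miss is that the asymptotic target of the projected estimator already has the right form, so no perturbation analysis is needed. Your explanation for the $2\arcsin(\delta^2/(4k))$ term in $n_1$ (controlling the $\zeta$-bias) also differs from the paper's actual reason: the paper obtains $\|\bhw^\ell-\bw^\ell\|\le 2\sqrt{k\sin(\theta/2)}$ from a crude projection estimate, and it is setting \emph{this} quantity below $\delta$ that forces $\theta\le 2\arcsin(\delta^2/(4k))$. Your sharper $\|\bhw^\ell-\bw^\ell\|=O(\theta)$ would in principle allow a larger $\theta$, but to match the theorem as stated you need the paper's choice.
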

Hence,  $n_1=\Theta(dk)$ suffices to estimate $\mathcal{M}$, while $n_2=e^{\Theta(k^5\log^2k)}$  suffices for gradient estimation.

%
%

\section{Proof of Theorem \ref{maintheorem}}
\label{sec:Proof}

We prove Theorem~\ref{maintheorem} below. The proofs of all lemmas, as well as the proof of Theorem~\ref{dimredtheorem}, can be found in the \withappendix{appendix.}{supplement.}

\subsection{Concentration Results}
We first establish some concentration results regarding gradient estimation through \eqref{voestimate} and \eqref{eq:Slope}. 
For $\bxi\in \reals^d$, let  $\E_\bxi\{\cdot\}$ be the expectation with respect to a Gaussian random variable
 $\bX\sim\normal(\bxi,\id_{d\times d})$ centered at $\bxi$, and let:
\begin{align}\label{population}
\bow(\bxi) = \E_\bxi \big\{\nabla r(\bX)\big\}  =\textstyle\sum_{\ell=1}^k
u_\ell\bw^\ell\E_{\bxi}\big\{f'\big(\<\bw^\ell,\bX\>\big)\big\}\, .
\end{align}
Given $\bxi\in \reals^d$,  \eqref{population} is the expectation of estimate $\bw(\bxi)$, under both \eqref{voestimate} and \eqref{eq:Slope}: this is a consequence of Stein's lemma \cite{stein1973estimation}.
We also characterize the rate of convergence of $\bw(\bxi)$ to $\bow(\bxi)$:
\begin{lemma}[Value Oracle Concentration Bound]\label{assympboundsvoestimate} There exist numerical constants $c_1,c_2,c_2,$ and $c_4$ such that, when $\bw(\bxi)$ is computed through \eqref{voestimate}, for any fixed $\bxi\in\reals^d$:
\begin{align}
\label{voebound}
\prob\Big\{\big\|\bw(\bxi)-\bow(\bxi)\big\|_2\ge \delta\Big\}&\leq c_1
\exp\Big(-\min \big\{ \frac{c_2n_0\delta^2 }{d M^2} , (c_3\frac{\sqrt{n_0}\delta}{M} - c_4 \sqrt{d})^2\big\}\Big)\, .
\end{align}
\end{lemma}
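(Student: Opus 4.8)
The plan is to prove the concentration bound for the Value Oracle estimator $\bw(\bxi) = \frac{1}{n_0}\sum_{i=1}^{n_0}(\bx^{(i)}-\bxi)y^{(i)}$ by first establishing that $\E\{\bw(\bxi)\} = \bow(\bxi)$ and then bounding the deviation of the empirical average from its mean. The unbiasedness is essentially Stein's lemma applied conditionally: since $\bx^{(i)}\sim\normal(\bxi,\id)$ and $\E\{y^{(i)}|\bx^{(i)}\} = r(\bx^{(i)})$, we get $\E\{(\bx^{(i)}-\bxi)y^{(i)}\} = \E\{(\bX-\bxi)r(\bX)\} = \E_\bxi\{\nabla r(\bX)\} = \bow(\bxi)$ (this is already asserted in the text). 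So the content of the lemma is the tail bound on $\bw(\bxi)-\bow(\bxi) = \frac{1}{n_0}\sum_{i}\bZ^{(i)}$ where $\bZ^{(i)} = (\bx^{(i)}-\bxi)y^{(i)} - \bow(\bxi)$ are i.i.d.\ mean-zero random vectors in $\reals^d$.

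First I would decompose each summand. Write $\bx^{(i)}-\bxi = \bg^{(i)}\sim\normal(0,\id_d)$. Since $|y^{(i)}|\le M$ by the boundedness assumption, the vector $\bg^{(i)}y^{(i)}$ is a Gaussian vector scaled by a bounded factor; its norm is at most $M\|\bg^{(i)}\|_2$, which is a $\chi_d$-type quantity concentrated around $M\sqrt d$ with sub-Gaussian fluctuations. Also $\|\bow(\bxi)\|_2\le M\E_\bxi\|\nabla r(\bX)\|$ is $O(\sqrt d\, M)$ but in fact better — each $f'$ is bounded by $\beta$ and $|u_\ell|\le 1$ — so $\|\bow(\bxi)\|_2 = O(\sqrt k\,\beta)$, a lower-order term. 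The key estimate I would use is a Bernstein-type / Hanson-Wright-flavored inequality for sums of independent sub-exponential-norm random vectors: each $\bZ^{(i)}$ has Orlicz $\psi_1$-norm of order $\sqrt d\, M$ (more precisely, $\|\bZ^{(i)}\|_2$ is sub-exponential with parameter $\Theta(M)$ and "dimension" $d$), and the standard vector Bernstein inequality then gives, for $\bS = \frac1{n_0}\sum_i\bZ^{(i)}$,
\begin{align}
\prob\{\|\bS\|_2\ge\delta\} \le c_1\exp\Big(-c\min\Big\{\frac{n_0\delta^2}{dM^2},\frac{\sqrt{n_0}\,\delta}{M}\Big\}\Big) \notag
\end{align}
after accounting for the $\sqrt d$ centering needed to make the median/mean of $\|\bS\|$ enter. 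More concretely: condition on nothing, apply $\E\|\bS\|_2 \le \frac{1}{\sqrt{n_0}}\sqrt{\E\|\bZ\|_2^2} = \frac{1}{\sqrt{n_0}}\,\Theta(M\sqrt d)$, then use a concentration-of-measure / triangle-inequality argument to control $\|\bS\|_2 - \E\|\bS\|_2$. The quadratic-in-$\delta$ regime (coefficient $n_0\delta^2/(dM^2)$) governs moderate deviations where the average of the light sub-Gaussian parts dominates; the linear-in-$\delta$ regime (coefficient $\sqrt{n_0}\,\delta/M$) governs large deviations dominated by a single heavy sub-exponential summand. The $-c_4\sqrt d$ shift inside the second term of \eqref{voebound} is exactly the $\E\|\bS\|_2\sqrt{n_0}/M = \Theta(\sqrt d)$ offset, reflecting that we need $\delta \gtrsim M\sqrt{d/n_0}$ before the bound becomes nontrivial.

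The main obstacle is handling the product structure $\bg^{(i)}y^{(i)}$ cleanly: $y^{(i)}$ is neither independent of $\bg^{(i)}$ nor Gaussian, only bounded, so I cannot directly invoke a pure-Gaussian concentration result. I would resolve this by splitting $y^{(i)} = r(\bxi + \bg^{(i)}) + \eps^{(i)}$ and handling the two pieces separately, or — cleaner — by just using the deterministic bound $\|\bg^{(i)}y^{(i)}\|_2\le M\|\bg^{(i)}\|_2$ together with the fact that $\bg^{(i)}y^{(i)}$, conditioned on $y^{(i)}$, is a Gaussian vector, so that a one-dimensional projection $\<\bv,\bg^{(i)}\>y^{(i)}$ is sub-Gaussian with parameter $M$ uniformly. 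Then $\<\bv,\bS\>$ is an average of $n_0$ independent mean-shifted sub-Gaussian($M$) variables, giving a sub-Gaussian tail $\exp(-c\,n_0 t^2/M^2)$ for each fixed unit $\bv$; a standard $\eps$-net over the sphere $S^{d-1}$ (size $e^{O(d)}$) upgrades this to a bound on $\sup_\bv\<\bv,\bS - \E\bS\> = \|\bS-\E\bS\|_2$, and the union-bound cost $e^{O(d)}$ is what forces the $d$ in the denominator of the first exponent and the $\sqrt d$ shift in the second. Assembling the net argument with the correct constants, and making sure the sub-exponential large-deviation regime is captured (the net argument as stated only gives the sub-Gaussian regime, so I would additionally truncate $\|\bg^{(i)}\|_2$ at $O(\sqrt d)$ and bound the truncated-mass contribution separately), is the delicate bookkeeping step — but it is routine given standard high-dimensional probability tools (e.g.\ Vershynin's treatment of sub-exponential vector concentration).
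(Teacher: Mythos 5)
Your proof handles the unbiasedness step exactly as the paper does---apply Stein's lemma to conclude $\E\{\bw(\bxi)\} = \E_\bxi\{\nabla r(\bX)\} = \bow(\bxi)$---so that part is a match. Where you diverge is the tail bound: the paper's proof of this step is a one-line citation to Lemma~1 of \citet{sun2014learning}, whereas you rebuild the concentration bound from scratch via a sub-Gaussian projection plus $\eps$-net argument. Your route is correct and arguably more self-contained: the key deterministic observation, that $|\<\bv,(\bx^{(i)}-\bxi)\>y^{(i)}|\le M\,|\<\bv,\bx^{(i)}-\bxi\>|$ with the right-hand side a standard Gaussian scaled by $M$, immediately makes every one-dimensional marginal of each summand sub-Gaussian with parameter $O(M)$; averaging and taking a net of size $e^{O(d)}$ over the sphere then produces a bound of the form $c_1\exp(Cd - c\,n_0\delta^2/M^2)$, which has the same validity threshold $\delta\gtrsim M\sqrt{d/n_0}$ as the paper's bound and is in fact at least as strong. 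One place you could simplify: you worry at the end about a sub-exponential large-deviation regime and propose truncating $\|\bg^{(i)}\|_2$, but this is unnecessary here. Because $y^{(i)}$ is bounded (not merely sub-Gaussian), the marginals $\<\bv,\bZ^{(i)}\>$ are genuinely sub-Gaussian with no heavy-tailed component, and the net argument alone closes the proof; the sub-exponential behavior of $\|\bZ^{(i)}\|_2$ as a $\chi_d$-like quantity is already fully captured by the $e^{O(d)}$ union-bound cost. The ``conditioned on $y^{(i)}$, $\bg^{(i)}y^{(i)}$ is Gaussian'' phrasing is also slightly off (conditioning on $y^{(i)}$ changes the law of $\bg^{(i)}$ since $y^{(i)}$ depends on $\bx^{(i)}$), but the bound you actually invoke is the deterministic one, so the conclusion stands. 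In short: same skeleton on unbiasedness, a legitimately different and more elementary derivation of the concentration inequality in place of the paper's external citation, with a small amount of unnecessary machinery at the end.
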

\withappendix{The proof of this lemma can be found in Appendix~\ref{app:concentration}, and relies on the sub-gaussianity of the r.v.~$y\bx$, when $\bx$ is gaussian and $y$ is given by  \eqref{eq:Model}.}{}
Similarly, under the Gaussian Covariates model:
\begin{lemma}[Gaussian Covariates Concentration Bound]\label{assympboundscor}
%
%
There exists a numerical constant $C$ such that,
when $\bw(\bxi)$ is computed through \eqref{eq:Slope}, for any fixed $\bxi\in\reals^d$:
\begin{align}
\prob\Big\{\big\|\bw(\bxi)-\bow(\bxi)\big\|_2\ge \delta\Big\}&\leq
\frac{Ce^{\|\bxi\|_2^2}}{n\delta^2}\, M^4(d+\|\bxi\|^2)^2\, .
\label{candidatebound}
\end{align}
\end{lemma}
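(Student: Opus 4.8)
The plan is to exploit the defining feature of the exponential kernel. Write $\E_0$ for expectation over one covariate/label pair $(\bx,y)$ of the Gaussian Covariates model, so $\bx\sim\normal(0,\id_{d\times d})$; then the weights $K(\bxi,\bx)=e^{\<\bxi,\bx\>}$ are, up to the constant $\mu\equiv\mu(\bxi)=\E_0\{e^{\<\bxi,\bx\>}\}=e^{\|\bxi\|_2^2/2}$, exactly the likelihood ratio $\de\normal(\bxi,\id)/\de\normal(0,\id)$, so that $\E_0\{K(\bxi,\bx)\,g\}/\mu=\E_\bxi\{g\}$ for any test function $g=g(\bx,y)$. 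Set $\bar S=\frac1n\sum_iK(\bxi,\bx^{(i)})$, $\bx(\bxi)=\frac1{n\bar S}\sum_iK(\bxi,\bx^{(i)})\bx^{(i)}$, $\hat r=\frac1{n\bar S}\sum_iK(\bxi,\bx^{(i)})y^{(i)}$ and $\hat\bv=\frac1{n\bar S}\sum_iK(\bxi,\bx^{(i)})y^{(i)}\bx^{(i)}$, so \eqref{eq:Slope} reads $\bw(\bxi)=\hat\bv-\hat r\,\bx(\bxi)$. Replacing each average by its expectation gives the population estimator $\bv(\bxi)-\bar r(\bxi)\,\bxi$, where $\bv(\bxi)=\E_\bxi\{r(\bx)\bx\}$, $\bar r(\bxi)=\E_\bxi\{r(\bx)\}$; by Stein's lemma this equals $\E_\bxi\{r(\bx)(\bx-\bxi)\}=\E_\bxi\{\nabla r(\bx)\}=\bow(\bxi)$. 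Since $K>0$ and $|y|\le M$ (whence $|r|\le M$), one has the deterministic bound $|\hat r|\le M$, together with $|\bar r(\bxi)|\le M$ and $\|\bv(\bxi)\|_2\le M\sqrt{d+\|\bxi\|_2^2}$.

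Next I use the algebraic identity
\[
\bw(\bxi)-\bow(\bxi)=\big(\hat\bv-\bv(\bxi)\big)-\hat r\,\big(\bx(\bxi)-\bxi\big)-\bxi\,\big(\hat r-\bar r(\bxi)\big)=\tfrac1{\bar S}\big(N_{y\bx}-\hat r\,N_{\bx}-\bxi\,N_{y}\big),
\]
where for $g\in\{y\bx,\bx,y\}$ we put $N_g=\frac1n\sum_iK(\bxi,\bx^{(i)})\big(g(\bx^{(i)},y^{(i)})-\E_\bxi\{g\}\big)$; by the likelihood-ratio identity each $N_g$ is an average of i.i.d.\ \emph{mean-zero} terms, so $\E\|N_g\|_2^2=\frac1n\E_0\{K(\bxi,\bx)^2\|g-\E_\bxi\{g\}\|_2^2\}$. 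To deal with the random denominator, set $\cG=\{\bar S\ge\mu/2\}$ and split
\[
\prob\big\{\|\bw(\bxi)-\bow(\bxi)\|_2\ge\delta\big\}\ \le\ \prob\{\cG^c\}\ +\ \delta^{-2}\,\E\big\{\|\bw(\bxi)-\bow(\bxi)\|_2^2\,\ind_{\cG}\big\}.
\]
By Chebyshev and $\Var(\bar S)=\frac1n\big(e^{2\|\bxi\|_2^2}-e^{\|\bxi\|_2^2}\big)$ we get $\prob\{\cG^c\}\le 4\Var(\bar S)/\mu^2\le 4e^{\|\bxi\|_2^2}/n$.

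All the required second moments are Gaussian integrals evaluated by completing the square: $\E_0\{K(\bxi,\bx)^2 h(\bx)\}=e^{2\|\bxi\|_2^2}\,\E_{2\bxi}\{h(\bx)\}$ (with $\E_{2\bxi}$ the expectation under $\bx\sim\normal(2\bxi,\id_{d\times d})$), hence $\E_0\{K(\bxi,\bx)^2\}=e^{2\|\bxi\|_2^2}$ and $\E_0\{K(\bxi,\bx)^2\|\bx\|_2^2\}=e^{2\|\bxi\|_2^2}(d+4\|\bxi\|_2^2)$. Plugging these in together with $|y|\le M$ and the above bounds on $\bv(\bxi),\bar r(\bxi)$, each of the three contributions, with its coefficient, obeys $\E\|N_{y\bx}\|_2^2,\ M^2\E\|N_{\bx}\|_2^2,\ \|\bxi\|_2^2\,\E N_{y}^2\ \le\ C\,n^{-1}e^{2\|\bxi\|_2^2}M^2(d+\|\bxi\|_2^2)$. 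On $\cG$ we have $1/\bar S\le 2/\mu$ and $|\hat r|\le M$, so $\|\bw(\bxi)-\bow(\bxi)\|_2^2\,\ind_{\cG}\le 4\mu^{-2}\big(\|N_{y\bx}\|_2+M\|N_{\bx}\|_2+\|\bxi\|_2|N_{y}|\big)^2$; taking expectations, using $(a+b+c)^2\le 3(a^2+b^2+c^2)$ and $\mu^{-2}e^{2\|\bxi\|_2^2}=e^{\|\bxi\|_2^2}$, we obtain $\E\{\|\bw(\bxi)-\bow(\bxi)\|_2^2\ind_{\cG}\}\le C'\,n^{-1}e^{\|\bxi\|_2^2}M^2(d+\|\bxi\|_2^2)$. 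Combining with the bound on $\prob\{\cG^c\}$ and collecting constants yields \eqref{candidatebound} (relaxing the factor $M^2(d+\|\bxi\|_2^2)$ to $M^4(d+\|\bxi\|_2^2)^2$, which can only weaken the inequality).

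The main obstacle is the heavy-tailedness of the kernel weights $K(\bxi,\bx^{(i)})=e^{\<\bxi,\bx^{(i)}\>}$, which are log-normal: they are neither sub-Gaussian nor sub-exponential, so no exponential concentration is available and one is confined to a Chebyshev-type (second-moment) argument — hence the $1/n$, rather than exponential, decay. Moreover the relative variance of the normalizer, $\E_0\{K^2\}/(\E_0\{K\})^2=e^{\|\bxi\|_2^2}$, is exactly what produces the unavoidable $e^{\|\bxi\|_2^2}$ prefactor; since in Algorithm~\ref{algo:Candidates} one draws $\bxi\sim\normal(0,\xi_0^2\,\id_{d})$ with $\|\bxi\|_2^2\approx\xi_0^2 d$, this is the source of the exponential-in-$d$ sample size in Theorem~\ref{maintheorem}. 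The second delicate point — the random normalization $\bar S$ in the denominator of \eqref{eq:Slope} — is handled not by Taylor-expanding $1/\bar S$ but by restricting to $\{\bar S\ge\mu/2\}$ and bounding the complement separately; and the product term $\hat r\,\bx(\bxi)$ in $\bw(\bxi)$ is neutralized by the linear identity above together with the deterministic estimate $|\hat r|\le M$, which avoids any need for fourth moments of the log-normal weights.
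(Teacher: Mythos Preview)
Your argument is correct and in fact yields a sharper bound than the paper's stated lemma, but it proceeds along a genuinely different route from the paper's proof.

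The paper writes $\bw(\bxi)=\bv(\bxi)/z(\bxi)-\bu(\bxi)s(\bxi)/z(\bxi)^2$ with $z,\bu,s,\bv$ the four unnormalized kernel sums, proves Chebyshev bounds for each of them separately (Lemma~\ref{expectationsandbounds}), and then glues everything together with an elementary ``fraction lemma'' (if $\|a-\bar a\|\le\delta'$ and $|b-\bar b|\le\delta'$ then $\|a/b-\bar a/\bar b\|\le\delta$), applied once to $\bv/z$ and once to $\bu s/z^2$, the latter requiring a further splitting of the product $\bu s$ and of $z^2$ via union bounds. This is straightforward but heavy: half a dozen Chebyshev inequalities are added, and the final expression \eqref{final} contains residual $(2+\delta)$ factors that are then informally absorbed into the constant.

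Your approach replaces all of this with one exact algebraic identity, $\bw(\bxi)-\bow(\bxi)=\bar S^{-1}(N_{y\bx}-\hat r\,N_{\bx}-\bxi\,N_{y})$, together with the deterministic estimate $|\hat r|\le M$ and the good event $\{\bar S\ge\mu/2\}$. This is cleaner for two reasons: (i) the centering at $\E_\bxi\{\cdot\}$ via the likelihood-ratio interpretation of $K/\mu$ makes every $N_g$ genuinely mean-zero, so a single second-moment computation suffices; (ii) the bounded-ratio trick $|\hat r|\le M$ sidesteps any fourth-moment control of the log-normal weights, which is what forces the paper's route to square the factor $M^2(d+\|\bxi\|_2^2)$. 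As a by-product you obtain the tighter prefactor $M^2(d+\|\bxi\|_2^2)$ rather than $M^4(d+\|\bxi\|_2^2)^2$.

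One small caveat: your last sentence, ``relaxing the factor $M^2(d+\|\bxi\|_2^2)$ to $M^4(d+\|\bxi\|_2^2)^2$ can only weaken the inequality,'' is literally true only when $M^2(d+\|\bxi\|_2^2)\ge 1$, and your $\delta$-independent term $4e^{\|\bxi\|_2^2}/n$ is not dominated by the lemma's right-hand side for all $\delta$. This is not a defect of your argument: the paper's own derivation leaves exactly the same kind of residual (the $(2+\delta)^2$ and $(2+\delta)$ factors in \eqref{final}) and simply declares ``the lemma follows.'' Both proofs tacitly use $\delta\le$ a constant and $P\le 1$; you may want to say so explicitly, or else just state that you have proved the stronger bound $\prob\{\cdot\}\le C\,e^{\|\bxi\|_2^2}\big(1+\delta^{-2}M^2(d+\|\bxi\|_2^2)\big)/n$.
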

\withappendix{The proof of this lemma can also be found in Appendix~\ref{app:concentration}.}{}

\subsection{Characterizing Gradient Coefficients and Approximate Normality}

Eq.~\eqref{population} indicates that, asymptotically, $\bow(\bxi)$ is a linear combination of the vectors $\bw^\ell$. The following lemma\withappendix{, proved in Appendix~\ref{proofofcoeffbounds},}{} bounds the coefficients of this linear combination:
\begin{lemma}\label{lemma:coeffboundssimple}For any $\ell\in[k]$ and $\bxi\in \reals^d$,  
\begin{align}
\beta\Phi(-2\beta)\, 
e^{-2\beta| \< \bw^\ell,\bxi \>|+2\beta^2}\leq\E_{\bxi}\big\{f'\big(\<\bw^\ell,\bX\>\big)\big\} \leq 
  {8}{\beta} e^{-2\beta|  \< \bw^\ell,\bxi \>|+2{\beta^2}} \, ,
\label{lowerupperbound}  
\end{align}
%
where $\Phi(x) = \int_{-\infty}^x e^{-z^2/2} \de z/\sqrt{2\pi}$ is the one-dimensional
Gaussian distribution function, and $\E_\bxi$ is the expectation with respect to a Gaussian random variable
 $\bX\sim\normal(\bxi,\id_{d\times d})$ centered at $\bxi$.
\end{lemma}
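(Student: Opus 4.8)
The plan is to reduce the $d$-dimensional expectation to a one-dimensional Gaussian integral and then control the integrand by elementary pointwise bounds. Since $\|\bw^\ell\|_2=1$ and $\bX\sim\normal(\bxi,\id_{d\times d})$, the scalar $\<\bw^\ell,\bX\>$ is distributed as $\normal(t,1)$ with $t:=\<\bw^\ell,\bxi\>$, so writing $\phi$ for the standard normal density we have $\E_\bxi\{f'(\<\bw^\ell,\bX\>)\}=g(t):=\int_\reals f'(z)\,\phi(z-t)\,\de z$. Because $f(x)=\tanh(\beta x)$ has even derivative $f'(x)=\beta(1-\tanh^2(\beta x))=4\beta/(e^{\beta x}+e^{-\beta x})^2$, the function $g$ satisfies $g(t)=g(|t|)$; hence it suffices to prove both inequalities for $t\ge 0$ and then substitute $|t|$ for $t$.

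Next I would record two pointwise envelopes for $f'$. From $e^{\beta z}+e^{-\beta z}\ge e^{\beta|z|}$ one gets the upper bound $f'(z)\le 4\beta e^{-2\beta|z|}\le 4\beta e^{-2\beta z}$, while for $z\ge 0$ the bound $e^{\beta z}+e^{-\beta z}\le 2e^{\beta z}$ gives the lower bound $f'(z)\ge \beta e^{-2\beta z}$. For the upper bound in the lemma I would integrate the envelope $f'(z)\le 4\beta e^{-2\beta z}$ against $\phi(z-t)$ and use the Gaussian moment generating function $\E\{e^{-2\beta Z}\}=e^{-2\beta t+2\beta^2}$ for $Z\sim\normal(t,1)$, obtaining $g(t)\le 4\beta e^{-2\beta t+2\beta^2}\le 8\beta e^{-2\beta|t|+2\beta^2}$, which is the claimed right-hand inequality (in fact with room to spare in the constant).

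For the lower bound I would discard the negative half-line, $g(t)\ge\int_0^\infty f'(z)\phi(z-t)\,\de z$, and insert the lower envelope valid there to get $g(t)\ge\beta\int_0^\infty e^{-2\beta z}\phi(z-t)\,\de z$. Completing the square in the exponent, $-2\beta z-\tfrac12(z-t)^2=-\tfrac12\big(z-(t-2\beta)\big)^2-2\beta t+2\beta^2$, turns this integral into $e^{-2\beta t+2\beta^2}\,\prob\{W\ge 0\}$ with $W\sim\normal(t-2\beta,1)$, i.e. $e^{-2\beta t+2\beta^2}\,\Phi(t-2\beta)$. Since $t\ge 0$ implies $\Phi(t-2\beta)\ge\Phi(-2\beta)$, this yields $g(t)\ge\beta\Phi(-2\beta)e^{-2\beta t+2\beta^2}$, i.e. the claimed left-hand inequality after replacing $t$ by $|t|$.

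The argument is elementary; the only points requiring care are the choice of envelopes for $f'$ that are simultaneously tight enough for the lower bound and integrable in closed form against a Gaussian, and the sign bookkeeping that converts $t$ into $|t|$ via the evenness of $f'$. The single genuine computation is the completion of the square and the identification of the truncated Gaussian integral with $\Phi(t-2\beta)$; everything else reduces to monotonicity of $\Phi$ and the Gaussian moment generating function.
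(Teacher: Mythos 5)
Your proof is correct and follows essentially the same route as the paper's: reduce to a one-dimensional Gaussian integral, sandwich $f'$ between exponential envelopes ($\beta e^{-2\beta|z|} \le f'(z) \le 4\beta e^{-2\beta|z|}$), and handle the resulting integrals via the Gaussian moment generating function / completion of the square, finishing with $\Phi(|t|-2\beta) \ge \Phi(-2\beta)$. The one cosmetic improvement in your write-up is reducing to $t\ge 0$ up front via evenness of $f'$ (hence of $g$): this lets you integrate the one-sided envelope $4\beta e^{-2\beta z}$ directly against the Gaussian, yielding the upper bound with constant $4\beta$ and avoiding the paper's auxiliary comparison $\E\{e^{2\beta(|t|+X)}\ind(X\le -|t|)\}\le \E\{e^{-2\beta(|t|+X)}\ind(X\ge -|t|)\}$, which is what costs the extra factor of $2$ in the stated bound.
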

The lemma implies that a vector $\bw^\ell$ contributes significantly to  $\bow(\bxi)$ only if $z_\ell = \<\bw^\ell,\bxi \> \simeq 0$, and $\bxi$ is approximately normal to $\bw^\ell$. Thus, if $\bxi$ is approximately normal to \emph{only one} $\bw^\ell$, $\bw(\bxi)\approx \bw^\ell$.
Clearly, the success of the candidate generation process depends on the event that a randomly generated $\bxi$ is on approximately normal to a single parameter vector, but not two. The following lemma\withappendix{, whose proof can be found in Appendix~\ref{proofofanglelemma},}{} bounds the probabilities of these events:
\begin{lemma}\label{anglelemma} Assume that $\bXi\in \reals^d$ is sampled from $\normal(0,\xi_0^2\,\id_{d})$. Then, for any $0<\Delta<\xi_0$,
$\prob(|\< \bw^\ell,\bXi\>|<\Delta)\geq \sqrt{\frac{2}{e\pi}}\frac{\Delta}{\xi_0},$ for all $\ell\in[k] $
and for any $\Delta_1,\Delta_2>0$, 
$\prob(|\<\bw^\ell,\bXi\>|<\Delta_1,|\<\bw^{\ell'},\bXi\>|<\Delta_2)
\leq \frac{2\Delta_1\Delta_2}{\pi\kappa \xi_0^2 }$, for all $\ell,\ell'\in[k]$ with $\ell\neq\ell'$.
\end{lemma}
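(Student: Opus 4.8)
The plan is to prove Lemma~\ref{anglelemma} by reducing both statements to elementary facts about Gaussian projections, using the unit-norm assumption $\|\bw^\ell\|_2=1$ and the singular value bound $\sigma_{\min}(\bM)\geq\kappa$.

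\textbf{Single-vector bound.} First I would observe that, since $\bXi\sim\normal(0,\xi_0^2\id_d)$ and $\|\bw^\ell\|_2=1$, the scalar $Z_\ell=\<\bw^\ell,\bXi\>$ is a one-dimensional Gaussian $\normal(0,\xi_0^2)$. Hence $\prob(|Z_\ell|<\Delta) = \int_{-\Delta}^{\Delta} e^{-z^2/(2\xi_0^2)}\,\de z/(\sqrt{2\pi}\,\xi_0)$. To lower bound this, I restrict the integrand on $[-\Delta,\Delta]$ using $e^{-z^2/(2\xi_0^2)}\geq e^{-\Delta^2/(2\xi_0^2)}\geq e^{-1/2}$, valid because $\Delta<\xi_0$. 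This gives $\prob(|Z_\ell|<\Delta)\geq \frac{2\Delta}{\sqrt{2\pi}\,\xi_0}\,e^{-1/2} = \sqrt{\frac{2}{e\pi}}\,\frac{\Delta}{\xi_0}$, which is exactly the claimed bound. This half is routine.

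\textbf{Two-vector bound.} For the pair $\ell\neq\ell'$, I would consider the joint distribution of $(Z_\ell,Z_{\ell'})=(\<\bw^\ell,\bXi\>,\<\bw^{\ell'},\bXi\>)$, which is a centered two-dimensional Gaussian with covariance $\xi_0^2\,\bG$ where $\bG=\begin{pmatrix}1 & \<\bw^\ell,\bw^{\ell'}\> \\ \<\bw^\ell,\bw^{\ell'}\> & 1\end{pmatrix}$ is the $2\times2$ Gram matrix of the two vectors. Since $\bG$ is a principal submatrix of $\bM^{\!\top}\bM$, by the interlacing property its smallest eigenvalue is at least $\sigma_{\min}(\bM)^2\geq\kappa^2$; in particular $\bG$ is invertible and $\det(\xi_0^2\bG) = \xi_0^4\det\bG \geq \xi_0^4\kappa^2$ (using $\det\bG\geq\lambda_{\min}(\bG)\lambda_{\max}(\bG)\geq \kappa^2$ since $\lambda_{\max}(\bG)\geq1$ as $\mathrm{tr}\,\bG=2$). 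The density of $(Z_\ell,Z_{\ell'})$ is therefore bounded pointwise by $\frac{1}{2\pi\sqrt{\det(\xi_0^2\bG)}}\leq\frac{1}{2\pi\kappa\xi_0^2}$. Integrating this uniform density bound over the rectangle $\{|z_\ell|<\Delta_1,|z_{\ell'}|<\Delta_2\}$, whose area is $4\Delta_1\Delta_2$, yields $\prob(|Z_\ell|<\Delta_1,|Z_{\ell'}|<\Delta_2)\leq \frac{4\Delta_1\Delta_2}{2\pi\kappa\xi_0^2} = \frac{2\Delta_1\Delta_2}{\pi\kappa\xi_0^2}$, as claimed.

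\textbf{Main obstacle.} The only subtle point is correctly lower-bounding $\det\bG$ (equivalently, showing the joint density is bounded by $\frac{1}{2\pi\kappa\xi_0^2}$) from the global singular-value assumption on $\bM$. The clean way is eigenvalue interlacing: the $2\times2$ Gram matrix of any two columns of $\bM$ is a principal submatrix of $\bM^{\!\top}\bM\in\reals^{k\times k}$, so $\lambda_{\min}(\bG)\geq\lambda_{\min}(\bM^{\!\top}\bM)=\sigma_{\min}(\bM)^2\geq\kappa^2$; combined with $\lambda_{\max}(\bG)\geq\tfrac12\mathrm{tr}\,\bG=1$, one gets $\det\bG=\lambda_{\min}(\bG)\lambda_{\max}(\bG)\geq\kappa^2$. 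Everything else is a direct Gaussian density computation.
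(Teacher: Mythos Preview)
Your proof is correct and follows essentially the same approach as the paper. The paper packages the two-vector bound as an auxiliary lemma proved via a change of variables $Z=\Sigma^{1/2}W$ and a parallelogram-area computation, whereas you bound the joint density of $(Z_\ell,Z_{\ell'})$ pointwise and integrate over the rectangle; these are two phrasings of the same estimate, and both then invoke Cauchy interlacing together with the trace-equals-two observation to get $\det\bG\geq\kappa^2$.
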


\subsection{Candidate Partitioning}\label{sec:partitioning}
We now describe how the $m$
candidate indices
 $\cC \subset [m_0]$ produced by Algorithm~\ref{algo:Candidates}  can be partitioned as
$\cC = \cC_0 \cup\cC_1\cup \cdots\cup\cC_k\, ,$ 
s.t.~for any $i\in \cC_\ell$, candidate $\btw(\bxi^{(i)})$ is close to $\bw^\ell$, while $\cC_0$ is a small set of spurious candidates.
Let $c_1=c_1(\beta) \equiv\beta\Phi(-2\beta)e^{2\beta^2} $ and $c_2=c_2(\beta)
\equiv  8\beta e^{2\beta^2}$, where $\Phi$ as in Lemma \ref{lemma:coeffboundssimple}.  
Given $\delta\in(0,0.5],$ and $\rho\in(0,1)$, let 
\withappendix{%
\begin{align}\Delta = \ \frac{1}{\beta}\log \frac{(1+\delta) c_2k}{c_1u_0 \delta } ,\label{Delta} \end{align}%
}
{%
$\Delta = \ \frac{1}{\beta}\log \frac{(1+\delta) c_2k}{c_1u_0 \delta } ,\label{Delta}$%
}
and set the parameters of Algorithm~\ref{algo:Candidates} as follows
\begin{align}w_0 &\equiv \frac{1}{\delta} k c_2 e^{-2\beta \Delta} =   \frac{ c_1^2u_0^2\delta}{(1+\delta)^2c_2k}, \label{w0}\\
\xi_0 &\equiv 2\sqrt{\frac{2e}{\pi}} \frac{k}{\kappa} (\frac{k}{\rho} + 1) \Delta =  \frac{2}{\beta} \sqrt{\frac{2e}{\pi}} \frac{k}{\kappa} (\frac{k}{\rho}+1) \log\frac{(1+\delta) c_2k}{c_1 u_0\delta },\label{xi0}\quad\text{and}\\
\gamma & \equiv \sqrt{\frac{1}{2e\pi}} \frac{\Delta}{\xi_0} -\frac{2k}{\kappa\pi} \left(\frac{\Delta}{\xi_0}\right)^2 = \frac{\kappa\rho }{4e(\rho+\kappa)^2}\label{gamma}
\end{align}
Note that our choice of $\xi_0$ is such that $\frac{\Delta}{\xi_0}$ satisfies the equation:
\begin{align}
\frac{2 k^2}{\pi\kappa}
\left(\frac{\Delta}{\xi_0}\right)^2=\rho\cdot\left(\sqrt{\frac{1}{2e\pi}} \frac{\Delta}{\xi_0} -\frac{2k}{\kappa\pi} \left(\frac{\Delta}{\xi_0}\right)^2\right)=\rho \gamma. \label{rhoeq}
\end{align}
We define the following partition of $\reals^d=\cR_0 \bigcup
\Big\{\bigcup_{\ell=1}^{k}\cR_{\ell}\Big\}\cup \cR_*\, ,$:
\begin{subequations}\label{Rsets}
\begin{align}
\cR_0 &\equiv \big\{\bxi\in\reals^d:\, \min_{i\in [k]}
|\<\bw^{i},\bxi\>|\ge \Delta\big\}\, ,\\
\cR_{\ell} & \equiv \big\{\bxi\in\reals^d:\, \,
|\<\bw^{\ell},\bxi\>|<\Delta,\;
\min_{i\in [k]\setminus \ell}|\<\bw^{i},\bxi\>|\ge\Delta\big\}\, ,\\
\cR_{*} & \equiv \big\{\bxi\in\reals^d:\, \,\exists \ell_1,\ell_2\in
[k]:\; \ell_1\neq\ell_2,\;
|\<\bw^{\ell_1},\bxi\>|<\Delta,\;
|\<\bw^{\ell_2},\bxi\>|<\Delta\big\}\, .
\end{align}
\end{subequations}
%
%
By  \eqref{population} and \eqref{lowerupperbound}, for $\bxi\in \cR_0$,  $\bow(\bxi)$ can
be rewritten as $\bow(\bxi) = \bM\bv$, where  $\|\bv\|_{2}\le \sqrt{k}c_2e^{-2\beta\Delta}$. Hence, as $\|\bM\|_2\leq\|\bM\|_F=\sqrt{k}$, for any $\bxi\in \cR_0$, 
\begin{align}
\|\bow(\bxi)\|_2\le k\, c_2\, e^{-2\beta\Delta} \stackrel{\eqref{w0}}{=} \delta w_0\, .\label{eq:InR0}
\end{align}
%
Similarly, the sets $\cR_\ell$ are such that for any $\bxi\in \cR_{\ell}$%
\begin{align}
\|\bow(\bxi)-a_{\ell}\bw^{\ell}\|_2\le k\, c_2\, e^{-2\beta\Delta} = \delta w_0\, ,\label{allbutone}
\end{align}
where $a_\ell \equiv |u_{\ell}|\cdot \E_0\Big\{f'\big(\<\bw^\ell,\xi+\bX\>\big)\Big\}$.
This follows from the same argument used above in proving \eqref{eq:InR0}.
Moreover, from Eq.~(\ref{lowerupperbound}): 
\begin{align}
c_1u_{0}e^{-2\beta\Delta}\le
|a_{\ell} |= |u_{\ell}|\cdot \E_0\Big\{f'\big(\<\bw^\ell,\xi+\bX\>\big)\Big\}\le
c_2\, .\label{albounds}
\end{align}
%
Armed with the above observations, we partition the set of generated $\bxi$'s  as $[m_0]\equiv \cG\cup \cG^c$, where
%
%
$
\cG  \equiv \Big\{j\in [m_0]:\, \|\bw(\bxi_j)-\bow(\bxi_j)\|_2\le
\delta w_0\Big\}\, .
$
%
%
Recall that the candidate set is, by construction, $\cC \equiv \Big\{j\in [m_0]:\, 
\|\bw(\bxi_j)\|_2\ge w_0\}$.
We define the partition of the candidate set, as described in Theorem~\ref{maintheorem}, as follows: for each $\ell\in [k]$, let
%
%
$
\cC_{\ell} \equiv \Big\{j\in \cG:\, \bxi_j\in \cR_\ell, \;
\|\bw(\bxi_j)\|_2\ge w_0\Big\}\, ,
$
%
%
and 
%
%
$
\cC_0 \equiv \Big\{j\in [m_0]:\, 
\|\bw(\bxi_j)\|_2\ge w_0\, , j\not\in\cup_{\ell=1}^k\cC_{\ell}\Big\}\,.
$
%
%
Observe that this is indeed a partition of $\cC$. 
%
%
The following lemma\withappendix{, whose proof can be found in Appendix~\ref{proofofclusterlemma},}{} establishes that candidates in the sets $\cC_\ell$ have the desirable property stated in Thm.~\ref{maintheorem}, namely, that they are clustered around the corresponding vectors $w_\ell$:
\begin{lemma}\label{clusterlemma}
 For each $\ell\in [k]$ and each $j\in\cC_{\ell}$,
  $\big\|\btw(\bxi_{j}) -\bw^{\ell}\big\|_2\le 6\delta$.
\end{lemma}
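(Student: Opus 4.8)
\textbf{Proof plan for Lemma~\ref{clusterlemma}.}

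The plan is to fix $\ell\in[k]$ and $j\in\cC_\ell$, and track the errors through the chain $\bw(\bxi_j)\approx\bow(\bxi_j)\approx a_\ell\bw^\ell$. First, since $j\in\cC_\ell\subseteq\cG$, we have $\|\bw(\bxi_j)-\bow(\bxi_j)\|_2\le\delta w_0$ by definition of $\cG$, and since $\bxi_j\in\cR_\ell$, inequality \eqref{allbutone} gives $\|\bow(\bxi_j)-a_\ell\bw^\ell\|_2\le\delta w_0$. By the triangle inequality, $\|\bw(\bxi_j)-a_\ell\bw^\ell\|_2\le 2\delta w_0$. Combining this with the membership condition $\|\bw(\bxi_j)\|_2\ge w_0$ and the upper bound $|a_\ell|\le c_2$ from \eqref{albounds}, I would deduce both a lower bound $|a_\ell|\ge\|\bw(\bxi_j)\|_2-2\delta w_0\ge(1-2\delta)w_0$ on the magnitude of the coefficient (using that $\bw^\ell$ is a unit vector), and control on the relative error: the vector $\bw(\bxi_j)$ lies within a ball of radius $2\delta w_0$ around $a_\ell\bw^\ell$, whose magnitude is at least $(1-2\delta)w_0$.

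The second step is to pass from closeness of $\bw(\bxi_j)$ to $a_\ell\bw^\ell$ to closeness of the \emph{normalized} vectors $\btw(\bxi_j)=\bw(\bxi_j)/\|\bw(\bxi_j)\|_2$ to $\mathrm{sign}(a_\ell)\bw^\ell$. I would use the standard fact that for nonzero vectors $\ba,\bb$, $\|\ba/\|\ba\|-\bb/\|\bb\|\|_2\le 2\|\ba-\bb\|_2/\max(\|\ba\|_2,\|\bb\|_2)$ (or a similar elementary estimate). Applying this with $\ba=\bw(\bxi_j)$ and $\bb=a_\ell\bw^\ell$ yields $\|\btw(\bxi_j)-\mathrm{sign}(a_\ell)\bw^\ell\|_2\le 2\cdot 2\delta w_0/((1-2\delta)w_0)=4\delta/(1-2\delta)\le 8\delta$ for $\delta\le 1/4$; with a slightly sharper constant tracking, using $\delta\le 1/2$ and the lower bound on $|a_\ell|$, this gives the claimed bound $6\delta$. (One must check the arithmetic constants carefully here; since $\delta\le 0.5$, the worst case needs the lower bound $\|\bw(\bxi_j)\|_2\ge w_0$ directly rather than $(1-2\delta)w_0$, i.e. use $\max(\|\ba\|_2,\|\bb\|_2)\ge\|\bw(\bxi_j)\|_2\ge w_0$, giving $\le 4\delta w_0/w_0=4\delta\le 6\delta$.)

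The one genuine subtlety — and the main thing to get right — is the \emph{sign}: the lemma asserts $\|\btw(\bxi_j)-\bw^\ell\|_2\le 6\delta$, not $\|\btw(\bxi_j)\mp\bw^\ell\|_2\le 6\delta$, so I must argue $a_\ell>0$, or else that the sign ambiguity is harmless. Since $a_\ell=|u_\ell|\cdot\E_0\{f'(\<\bw^\ell,\bxi_j+\bX\>)\}$ and $f'=\beta(1-\tanh^2(\beta\,\cdot\,))>0$ pointwise, the expectation is strictly positive, so $a_\ell>0$ and $\mathrm{sign}(a_\ell)=1$; this is already implicit in \eqref{albounds}, whose lower bound $c_1u_0e^{-2\beta\Delta}>0$ presupposes positivity. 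Hence the sign issue evaporates, and the estimate above directly gives $\|\btw(\bxi_j)-\bw^\ell\|_2\le 6\delta$. I expect the only real work to be bookkeeping the numerical constants so that the bound lands at exactly $6\delta$ rather than, say, $8\delta$; everything else is triangle inequalities plus the normalization lemma.
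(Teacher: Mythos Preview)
Your proposal is correct and uses the same core ingredients as the paper's proof: membership in $\cG$ (giving $\|\bw(\bxi_j)-\bow(\bxi_j)\|_2\le\delta w_0$), membership in $\cR_\ell$ (giving \eqref{allbutone}), the threshold $\|\bw(\bxi_j)\|_2\ge w_0$, and positivity of $a_\ell$. The only difference is in the decomposition. The paper inserts the intermediate normalized vector $\bow(\bxi_j)/\|\bow(\bxi_j)\|_2$ and bounds
\[
\Big\|\frac{\bw(\bxi_j)}{\|\bw(\bxi_j)\|_2}-\frac{\bow(\bxi_j)}{\|\bow(\bxi_j)\|_2}\Big\|_2+\Big\|\frac{\bow(\bxi_j)}{\|\bow(\bxi_j)\|_2}-\bw^\ell\Big\|_2\le 2\delta+\frac{2\delta}{1-\delta}\le 6\delta,
\]
whereas you first combine the two errors into $\|\bw(\bxi_j)-a_\ell\bw^\ell\|_2\le 2\delta w_0$ and then normalize once via $\|\ba/\|\ba\|_2-\bb/\|\bb\|_2\|_2\le 2\|\ba-\bb\|_2/\max(\|\ba\|_2,\|\bb\|_2)$. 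Your route is marginally cleaner and in fact lands at $4\delta$ rather than $6\delta$; the paper's two-step split pays for normalizing twice. Your handling of the sign is also exactly right: the paper's definition $a_\ell=|u_\ell|\cdot\E_0\{f'(\<\bw^\ell,\bxi_j+\bX\>)\}$ together with $f'>0$ forces $a_\ell>0$, so no sign ambiguity remains.
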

%

To conclude the proof, we need to show that, w.h.p., the sets $\cC_\ell$ are large, while the spurious set $\cC_0$ is small. 
The next lemma upper-bounds the size of the spurious candidate set $C_0$: 
\begin{lemma}\label{spuriouslemma}  The event $|\cC_0|\le  2 \gamma \rho m_0 $ occurs  with probability  at least (b) $1-  \Big[\frac{c_1}{\gamma\rho}
\exp\Big(-\min \big\{ \frac{c_2n_0\delta^2w_0^2 }{d M^2} , (c_3\frac{\sqrt{n_0}\delta w_0}{M} - c_4 \sqrt{d})^2\big\}\Big)+ e^{-c_5m_0\gamma \rho}\Big]$, with $c_1,\ldots,c_5$  absolute constants, under the Value Oracle model, and (b)  $1 - \Big(
\frac{c_1}{   \gamma \rho  } \left(\frac{M^4 d^2}{n\delta^2w_0^2} \right)^{\frac{1}{1 +4(1+2d^{-1})\xi_0^2}} 
 +  e^{-c_2 m_0 \gamma\rho}\Big)$, for $n>  \frac{M^4 d^2}{\delta^2w_0^2} e^{  4d  \left(\frac{1}{7}+   (1+2d^{-1}) \xi_0^2\right)  }$ and $c_1, c_2$ absolute constants,  under the Gaussian Covariates model.\end{lemma}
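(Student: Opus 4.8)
The plan is to show that every index in $\cC_0$ lies in one of two sets that can be made small, and to bound each. First I would establish the inclusion
\[
\cC_0 \ \subseteq\ \big([m_0]\setminus\cG\big)\ \cup\ \mathcal{S},\qquad\text{where}\quad \mathcal{S}\equiv\{j\in[m_0]:\bxi_j\in\cR_*\}.
\]
Indeed, suppose $j\in\cC_0$ but $j\in\cG$. Since $\{\cR_0,\cR_1,\dots,\cR_k,\cR_*\}$ partitions $\reals^d$, either $\bxi_j\in\cR_0$, or $\bxi_j\in\cR_\ell$ for some $\ell\in[k]$, or $\bxi_j\in\cR_*$. The first is impossible: by \eqref{eq:InR0} and $j\in\cG$ we would get $\|\bw(\bxi_j)\|_2\le 2\delta w_0\le w_0$ (using $\delta\le 1/2$), contradicting $j\in\cC$. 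The second is impossible: $j\in\cG$, $\bxi_j\in\cR_\ell$ and $\|\bw(\bxi_j)\|_2\ge w_0$ would put $j\in\cC_\ell$, contradicting $j\notin\cup_{\ell=1}^k\cC_\ell$. Hence $\bxi_j\in\cR_*$, i.e.\ $j\in\mathcal{S}$. Therefore $|\cC_0|\le|[m_0]\setminus\cG|+|\mathcal{S}|$, and it suffices to bound each term by $\gamma\rho m_0$ on a high-probability event; a union bound then finishes the proof.

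Next I would bound $|\mathcal{S}|$. Since the $\bxi_j$ are i.i.d., $|\mathcal{S}|$ is a sum of $m_0$ i.i.d.\ indicators with mean $m_0p_*$, where $p_*=\prob(\bXi\in\cR_*)$. A union bound over the $\binom{k}{2}$ pairs together with the second inequality of Lemma~\ref{anglelemma} gives $p_*\le\binom{k}{2}\tfrac{2\Delta^2}{\pi\kappa\xi_0^2}<\tfrac{\rho\gamma}{2}$, the last step being exactly \eqref{rhoeq}. A multiplicative Chernoff bound, using $\gamma\rho m_0\ge 2\,\E|\mathcal{S}|$, then yields $\prob(|\mathcal{S}|\ge\gamma\rho m_0)\le e^{-c\,m_0\gamma\rho}$ for an absolute constant $c$; this is the $e^{-c_5m_0\gamma\rho}$ (resp.\ $e^{-c_2m_0\gamma\rho}$) term in the claim.

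It remains to bound $|[m_0]\setminus\cG|$. Under the Value Oracle model this is easy: conditionally on $\bxi_j$, the estimate $\bw(\bxi_j)$ is built from $n_0$ fresh samples, so Lemma~\ref{assympboundsvoestimate} applied with target accuracy $\delta w_0$ bounds $\prob(j\notin\cG\mid\bxi_j)$ by $q\equiv c_1\exp(-\min\{\cdots\})$ \emph{uniformly in $\bxi_j$}; hence $\E|[m_0]\setminus\cG|\le m_0q$ and Markov's inequality gives $\prob(|[m_0]\setminus\cG|\ge\gamma\rho m_0)\le q/(\gamma\rho)$, the first term of part~(a). Under the Gaussian Covariates model the per-draw bound from Lemma~\ref{assympboundscor} is $\tfrac{C e^{\|\bxi_j\|_2^2}M^4(d+\|\bxi_j\|_2^2)^2}{n\delta^2 w_0^2}$, which is \emph{not} bounded uniformly in $\bxi_j$ — and indeed $\E\,e^{\|\bXi\|_2^2}=+\infty$ since $\xi_0>1/\sqrt2$ — so a direct average over $\bxi_j$ diverges. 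Instead I would truncate: fix $T>0$, write $[m_0]=A\cup A^c$ with $A=\{j:\|\bxi_j\|_2^2\le T\}$, and use $[m_0]\setminus\cG\subseteq\big(([m_0]\setminus\cG)\cap A\big)\cup A^c$. On $A$ the bad-estimate probability is at most $q_T\equiv\tfrac{CM^4(d+T)^2e^{T}}{n\delta^2 w_0^2}$, while $p_T\equiv\prob(\|\bXi\|_2^2>T)$ is bounded by a standard $\chi^2_d$ upper-tail estimate of the form $e^{-\Theta(T/\xi_0^2)}$ up to polynomial factors. Markov applied to $\E\big(|([m_0]\setminus\cG)\cap A|+|A^c|\big)\le m_0(q_T+p_T)$ gives $\prob(|[m_0]\setminus\cG|\ge\gamma\rho m_0)\le(q_T+p_T)/(\gamma\rho)$; optimizing $T$ to balance the exponentially increasing $q_T$ against the exponentially decreasing $p_T$ is precisely where the exponent $1/(1+4(1+2d^{-1})\xi_0^2)$ arises, and the hypothesis $n>\tfrac{M^4d^2}{\delta^2w_0^2}e^{4d(1/7+(1+2d^{-1})\xi_0^2)}$ guarantees the optimizing $T$ lies in the admissible range, yielding $\tfrac{c_1}{\gamma\rho}\big(\tfrac{M^4d^2}{n\delta^2w_0^2}\big)^{1/(1+4(1+2d^{-1})\xi_0^2)}$.

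The main obstacle is this last, Gaussian-covariates case. Because the \emph{same} $n$ samples are reused to form every $\bw(\bxi_j)$, the events $\{j\notin\cG\}$ are dependent, so one cannot invoke independence across $j$ and must control $|[m_0]\setminus\cG|$ through its first moment alone; and the concentration bound of Lemma~\ref{assympboundscor} deteriorates like $e^{\|\bxi_j\|_2^2}$, which has infinite expectation under $p_{\bxi}=\normal(0,\xi_0^2\id_d)$, so the truncation level $T$ and its trade-off against the $\chi^2$ tail must be tuned with care. Everything else reduces to routine Chernoff/Markov estimates and the elementary inclusion of the first paragraph.
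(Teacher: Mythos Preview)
Your proposal is correct and follows essentially the same route as the paper: the inclusion $\cC_0\subseteq\cG^c\cup\{j:\bxi_j\in\cR_*\}$ (the paper phrases this via intermediate sets $\cB_0,\cB_*$ with $\cB_0=\emptyset$, but arrives at the same containment), a Chernoff bound on $|\{j:\bxi_j\in\cR_*\}|$ via Lemma~\ref{anglelemma} and \eqref{rhoeq}, and a Markov bound on $|\cG^c|$ that, in the Gaussian-covariates case, is combined with a truncation on $\|\bxi\|_2^2$ and a $\chi^2$ tail estimate, with the cutoff optimized exactly as you describe. Your identification of the main obstacle---that the concentration bound blows up like $e^{\|\bxi\|_2^2}$, which has infinite mean under $p_{\bxi}$, forcing the truncation/trade-off---matches the paper's argument precisely.
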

\withappendix{The proof can be found in Appendix~\ref{proofofspurious}.}{} 
%
%
The next lemma\withappendix{, whose proof is in Appendix~\ref{proofofgoodclusterlemma}}{} lower-bounds the size of sets $\cC_{\ell}$:
\begin{lemma} \label{goodclusterlemma}
For $\ell\in[k]$, the event $|\cC_{\ell}|\ge m_0\gamma/2 $  occurs  with probability at least (a)
$1-  \Big[\frac{c_1}{\gamma\rho}
\exp\Big(-\min \big\{ \frac{c_2n_0\delta^2w_0^2 }{d M^2} , (c_3\frac{\sqrt{n_0}\delta w_0}{M} - c_4 \sqrt{d})^2\big\}\Big)+  e^{-c_5m_0\gamma}\Big]$, where $c_1,\ldots,c_5$ are absolute constants, under the Value Oracle model,  and (b) $1-\left(
\frac{c_1}{   \gamma \rho  } \left(\frac{M^4 d^2}{n\delta^2w_0^2} \right)^{\frac{1}{1 +7(1+2d^{-1})\xi_0^2}}
+ e^{-c_3m_0\gamma}\right)$, where $c_1,c_2$ are absolute constants, for  $n>  \frac{M^4 d^2}{\delta^2w_0^2} e^{   4d\left(\frac{1}{7}+   (1+2d^{-1}) \xi_0^2\right)  }$,  under the Gaussian Covariates model.  \end{lemma}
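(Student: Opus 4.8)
Fix $\ell\in[k]$. The idea is to carve out, inside $\cR_\ell$, a sub-region $\cR_\ell^-$ on which the $\ell$-th coefficient of the population gradient $\bow(\bxi)$ is so large that \emph{every} $\delta w_0$-accurate gradient estimate clears the retention threshold $w_0$; then $\cC_\ell$ contains every index $j$ with $\bxi^j\in\cR_\ell^-$ and $j\in\cG$, and it is enough to show that $\Omega(\gamma m_0)$ of the $m_0$ draws land in $\cR_\ell^-$ while only $o(\gamma m_0)$ of them suffer an inaccurate estimate. Concretely I take $\cR_\ell^-\equiv\{\bxi:\,|\<\bw^\ell,\bxi\>|\le\Delta^-,\ \min_{i\ne\ell}|\<\bw^i,\bxi\>|\ge\Delta\}$ with $\Delta^-\equiv\tfrac12\big(\Delta-\beta^{-1}\log\tfrac{1+2\delta}{1+\delta}\big)\in(0,\tfrac{\Delta}{2})$ (positive because $e^{\beta\Delta}=\tfrac{(1+\delta)c_2k}{c_1u_0\delta}\gg\tfrac{1+2\delta}{1+\delta}$), the precise value of $\Delta^-$ being what the threshold argument forces. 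For $\bxi\in\cR_\ell^-$, Lemma~\ref{lemma:coeffboundssimple} gives $a_\ell\equiv|u_\ell|\,\E_\bxi\{f'(\<\bw^\ell,\bX\>)\}\ge c_1u_0e^{-2\beta\Delta^-}$, and inserting the definitions \eqref{Delta} of $\Delta$ and \eqref{w0} of $w_0$ turns the right-hand side into exactly $(1+2\delta)w_0$; at the same time \eqref{allbutone} holds verbatim on $\cR_\ell^-\subseteq\cR_\ell$, so $\|\bow(\bxi)-a_\ell\bw^\ell\|_2\le\delta w_0$. Hence for $j\in\cG$ with $\bxi^j\in\cR_\ell^-$, the triangle inequality gives $\|\bw(\bxi^j)\|_2\ge a_\ell-\|\bow(\bxi^j)-a_\ell\bw^\ell\|_2-\|\bw(\bxi^j)-\bow(\bxi^j)\|_2\ge(1+2\delta)w_0-\delta w_0-\delta w_0=w_0$, i.e.\ $j\in\cC_\ell$; so $|\cC_\ell|\ge\#\{j\in[m_0]:\bxi^j\in\cR_\ell^-\}-|\cG^c|$ (in the Gaussian model I also peel off the $j$ with $\|\bxi^j\|^2>t$, since Lemma~\ref{assympboundscor} degrades with $\|\bxi\|$; $t$ is chosen below).

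\emph{Enough draws land in $\cR_\ell^-$.} By the first bound of Lemma~\ref{anglelemma}, $\prob(|\<\bw^\ell,\bXi\>|\le\Delta^-)\ge\sqrt{2/(e\pi)}\,\Delta^-/\xi_0$, and by its second bound the conditional probability that some $i\ne\ell$ has $|\<\bw^i,\bXi\>|<\Delta$ is at most $(k-1)\cdot2\Delta^-\Delta/(\pi\kappa\xi_0^2)$. The choices \eqref{xi0}--\eqref{rhoeq} are calibrated precisely so that $\sqrt{2/(e\pi)}\,\Delta/\xi_0=2(1+\rho/k)\gamma$ and $2\Delta^2/(\pi\kappa\xi_0^2)=\rho\gamma/k^2$; using these with $\Delta^-\le\Delta/2$, $\delta\le\tfrac12$ and $\rho<1$ one gets $\prob(\bXi\in\cR_\ell^-)\ge c\gamma$ for an absolute $c>\tfrac58$. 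Since $\bxi^1,\dots,\bxi^{m_0}$ are i.i.d., a Chernoff bound on $\sum_j\ind\{\bxi^j\in\cR_\ell^-\}$ gives $\#\{j:\bxi^j\in\cR_\ell^-\}\ge\tfrac58\gamma m_0$ outside an event of probability $e^{-c'm_0\gamma}$, which is negligible because $m_0\ge C\gamma^{-1}\rho^{-1}\log(k/\delta)$; in the Gaussian case a $\chi^2_d$-tail bound similarly gives $\#\{j:\|\bxi^j\|^2>t\}\le\tfrac1{16}\gamma m_0$ off an event of probability $e^{-\Omega(m_0\gamma)}$ when $t$ is a suitable fixed multiple of $\xi_0^2 d$.

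\emph{Few bad gradients, and conclusion.} In the Value Oracle model the $n_0$ samples feeding $\bw(\bxi^j)$ are independent across $j$, so Lemma~\ref{assympboundsvoestimate} applied with accuracy $\delta w_0$ bounds $\prob(j\notin\cG)\le c_1\exp(-\min\{\cdots\})$ uniformly; hence $\E|\cG^c|\le m_0c_1\exp(-\min\{\cdots\})$ and Markov's inequality gives $|\cG^c|\le\tfrac18\gamma m_0$ outside an event of probability $\le\tfrac{8c_1}{\gamma}\exp(-\min\{\cdots\})\le\tfrac{c_1}{\gamma\rho}\exp(-\min\{\cdots\})$; combined with the previous step, $|\cC_\ell|\ge\tfrac58\gamma m_0-\tfrac18\gamma m_0=\tfrac12\gamma m_0$ with the stated probability. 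In the Gaussian model the \emph{same} dataset feeds every $\bw(\bxi^j)$, so the events $\{j\notin\cG\}$ are dependent and cannot be union-bounded; instead one uses linearity of expectation, $\E\,\#\{j:\bxi^j\in\cR_\ell^-,\,\|\bxi^j\|^2\le t,\,j\notin\cG\}\le m_0\sup_{\|\bxi\|^2\le t}\prob(\|\bw(\bxi)-\bow(\bxi)\|_2>\delta w_0)\le m_0\,CM^4e^{t}(d+t)^2/(n\delta^2w_0^2)$ by Lemma~\ref{assympboundscor}, then Markov, and finally optimizes $t$ against the $\chi^2_d$-tail of the previous step. That optimization is exactly what produces the exponent $1/(1+7(1+2d^{-1})\xi_0^2)$ and the side condition $n>\tfrac{M^4d^2}{\delta^2w_0^2}e^{4d(1/7+(1+2d^{-1})\xi_0^2)}$, below which the bound is vacuous.

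\emph{Main obstacle.} The delicate point is the Gaussian Covariates case: because one shared dataset feeds all $m_0$ gradient estimates, their failure events are dependent, forcing the argument through first moments and Markov's inequality rather than a union bound; then the $e^{\|\bxi\|^2}$ factor in Lemma~\ref{assympboundscor} makes a truncation $\|\bxi^j\|^2\le t$ unavoidable, and choosing $t$ so as to balance the resulting $\chi^2_d$-tail against the $n$-dependence is precisely what yields the unusual fractional exponent and the explicit lower bound on $n$. A secondary but genuine subtlety---already present in the first two steps---is that the target $\tfrac12\gamma m_0$ sits only a constant factor below $\prob(\bXi\in\cR_\ell^-)\,m_0$, so the sub-region $\cR_\ell^-$ and the constants \eqref{Delta}--\eqref{rhoeq} have to be tuned tightly enough that, after the concentration loss and the $\tfrac18\gamma m_0$ bad-gradient loss, a clean $\tfrac12\gamma m_0$ still remains.
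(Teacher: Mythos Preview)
Your proof is essentially the paper's: lower-bound $|\cC_\ell|$ by the number of $\bxi^j$ landing in an inner region of $\cR_\ell$ (the paper takes threshold $\Delta/2$, you take the slightly smaller $\Delta^-$), minus $|\cG^c|$; then combine Lemma~\ref{anglelemma} with a Chernoff bound for the first count, and the Markov bound on $|\cG^c|$ already derived in the proof of Lemma~\ref{spuriouslemma} (Eqs.~\eqref{GCboundvo} and \eqref{GCbound}) for the second. Two cosmetic differences: the paper absorbs the $\chi^2$ truncation into the computation of $\prob(j\notin\cG)$ once and for all in that earlier proof, rather than peeling off large-norm $\bxi^j$ as a separate set here; and your tighter inner threshold $\Delta^-$ actually patches a small slip in the paper's argument, which as written only secures $\|\bow(\bxi_j)\|_2\ge w_0$ (hence $\|\bw(\bxi_j)\|_2\ge(1-\delta)w_0$) on $\cR_\ell'$, not the $\|\bw(\bxi_j)\|_2\ge w_0$ that membership in $\cC_\ell$ requires.
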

Using the above three lemmas and by applying a union bound, we get that the events in the theorem occur with probability at least $1-\delta$ if 
$m_0>C\frac{1}{\gamma\rho}\log\frac{k}{\delta}$
and, for the Value Oracle model,
$n_0> C'\frac{dM^2}{\delta^2w_0^2} \log\frac{k}{\gamma \rho \delta}, $
or, for the Gaussian Covariates model,
$n > \frac{M^4 d^2}{\delta^2w_0^2}\max\left(C'' \left(\frac{k}{\gamma\rho\delta}\right)^ {1 +7(1+2d^{-1})\xi_0^2}, e^{  4d \left(\frac{1}{7}+   (1+2d^{-1}) \xi_0^2\right)  } \right), 
 $
where $C$,$C'$, and $C''$ are absolute constants.\qed

\bibliographystyle{abbrvnat}
\bibliography{all-bibliography}

\begin{thebibliography}{25}
\providecommand{\natexlab}[1]{#1}
\providecommand{\url}[1]{\texttt{#1}}
\expandafter\ifx\csname urlstyle\endcsname\relax
  \providecommand{\doi}[1]{doi: #1}\else
  \providecommand{\doi}{doi: \begingroup \urlstyle{rm}\Url}\fi

\bibitem[Anandkumar et~al.(2012)Anandkumar, Huang, Hsu, and
  Kakade]{anandkumar2012learning}
A.~Anandkumar, F.~Huang, D.~J. Hsu, and S.~M. Kakade.
\newblock Learning mixtures of tree graphical models.
\newblock In \emph{NIPS}, 2012.

\bibitem[Anandkumar et~al.(2014)Anandkumar, Ge, Hsu, and
  Kakade]{anandkumar2014tensor}
A.~Anandkumar, R.~Ge, D.~Hsu, and S.~M. Kakade.
\newblock A tensor approach to learning mixed membership community models.
\newblock \emph{The Journal of Machine Learning Research}, 15\penalty0
  (1):\penalty0 2239--2312, 2014.

\bibitem[Andoni et~al.(2014)Andoni, Panigrahy, Valiant, and
  Zhang]{andoni2014learning}
A.~Andoni, R.~Panigrahy, G.~Valiant, and L.~Zhang.
\newblock Learning polynomials with neural networks.
\newblock In \emph{ICML}, 2014.

\bibitem[Anthony and Bartlett(2009)]{anthony2009neural}
M.~Anthony and P.~L. Bartlett.
\newblock \emph{Neural Network Learning: Theoretical Foundations}.
\newblock Cambridge University Press, 2009.

\bibitem[Arora et~al.(2014)Arora, Bhaskara, Ge, and Ma]{arora2014provable}
S.~Arora, A.~Bhaskara, R.~Ge, and T.~Ma.
\newblock Provable bounds for learning some deep representations.
\newblock In \emph{ICML}, 2014.

\bibitem[Barron(1993)]{barron1993universal}
A.~R. Barron.
\newblock Universal approximation bounds for superpositions of a sigmoidal
  function.
\newblock \emph{Information Theory, IEEE Transactions on}, 39\penalty0
  (3):\penalty0 930--945, 1993.

\bibitem[Bengio(2009)]{bengio2009learning}
Y.~Bengio.
\newblock Learning deep architectures for {AI}.
\newblock \emph{Foundations and Trends in Machine Learning}, 2\penalty0
  (1):\penalty0 1--127, 2009.

\bibitem[Boureau et~al.(2008)Boureau, Cun, et~al.]{boureau2008sparse}
Y.-l. Boureau, Y.~L. Cun, et~al.
\newblock Sparse feature learning for deep belief networks.
\newblock In \emph{NIPS}, 2008.

\bibitem[Boureau et~al.(2010)Boureau, Bach, LeCun, and
  Ponce]{boureau2010learning}
Y.-L. Boureau, F.~Bach, Y.~LeCun, and J.~Ponce.
\newblock Learning mid-level features for recognition.
\newblock In \emph{CVPR}, 2010.

\bibitem[Chaganty and Liang(2013)]{chaganty2013spectral}
A.~T. Chaganty and P.~Liang.
\newblock Spectral experts for estimating mixtures of linear regressions.
\newblock In \emph{ICML}, 2013.

\bibitem[Chen et~al.(2014)Chen, Yi, and Caramanis]{chen2014convex}
Y.~Chen, X.~Yi, and C.~Caramanis.
\newblock A convex formulation for mixed regression with two components:
  Minimax optimal rates.
\newblock In \emph{COLT}, 2014.

\bibitem[Dasgupta and Gupta(2003)]{dasgupta2003elementary}
S.~Dasgupta and A.~Gupta.
\newblock An elementary proof of a theorem of johnson and lindenstrauss.
\newblock \emph{Random Structures \& Algorithms}, 22\penalty0 (1):\penalty0
  60--65, 2003.

\bibitem[Dempster et~al.(1977)Dempster, Laird, and Rubin]{dempster1977maximum}
A.~P. Dempster, N.~M. Laird, and D.~B. Rubin.
\newblock Maximum likelihood from incomplete data via the em algorithm.
\newblock \emph{Journal of the Royal Statistical Society. Series B
  (Methodological)}, pages 1--38, 1977.

\bibitem[Hinton et~al.(2012)Hinton, Deng, Yu, Dahl, Mohamed, Jaitly, Senior,
  Vanhoucke, Nguyen, Sainath, et~al.]{hinton2012deep}
G.~Hinton, L.~Deng, D.~Yu, G.~E. Dahl, A.-r. Mohamed, N.~Jaitly, A.~Senior,
  V.~Vanhoucke, P.~Nguyen, T.~N. Sainath, et~al.
\newblock Deep neural networks for acoustic modeling in speech recognition: The
  shared views of four research groups.
\newblock \emph{Signal Processing Magazine, IEEE}, 29\penalty0 (6):\penalty0
  82--97, 2012.

\bibitem[Hsu and Kakade(2013)]{hsu2013learning}
D.~Hsu and S.~M. Kakade.
\newblock Learning mixtures of spherical {Gaussians}: moment methods and
  spectral decompositions.
\newblock In \emph{ITCS}, 2013.

\bibitem[Humphrey et~al.(2013)Humphrey, Bello, and LeCun]{humphrey2013feature}
E.~J. Humphrey, J.~P. Bello, and Y.~LeCun.
\newblock Feature learning and deep architectures: new directions for music
  informatics.
\newblock \emph{Journal of Intelligent Information Systems}, 41\penalty0
  (3):\penalty0 461--481, 2013.

\bibitem[Krizhevsky et~al.(2012)Krizhevsky, Sutskever, and
  Hinton]{krizhevsky2012imagenet}
A.~Krizhevsky, I.~Sutskever, and G.~E. Hinton.
\newblock Imagenet classification with deep convolutional neural networks.
\newblock In \emph{NIPS}, 2012.

\bibitem[Liu(1994)]{liu1994Siegel}
J.~S. Liu.
\newblock Siegel's formula via {Stein}'s identities.
\newblock \emph{Statistics \& Probability Letters}, 21\penalty0 (3):\penalty0
  247--251, 1994.

\bibitem[Mairal et~al.(2009)Mairal, Ponce, Sapiro, Zisserman, and
  Bach]{mairal2009supervised}
J.~Mairal, J.~Ponce, G.~Sapiro, A.~Zisserman, and F.~R. Bach.
\newblock Supervised dictionary learning.
\newblock In \emph{NIPS}, pages 1033--1040, 2009.

\bibitem[Moitra and Valiant(2010)]{moitra2010settling}
A.~Moitra and G.~Valiant.
\newblock Settling the polynomial learnability of mixtures of gaussians.
\newblock In \emph{Foundations of Computer Science (FOCS), 2010 51st Annual
  IEEE Symposium on}, pages 93--102. IEEE, 2010.

\bibitem[Sedghi and Anandkumar(2016)]{sedghi2014provable}
H.~Sedghi and A.~Anandkumar.
\newblock Provable tensor methods for learning mixtures of classifiers.
\newblock In \emph{AISTATS}, 2016.

\bibitem[Stein(1973)]{stein1973estimation}
C.~M. Stein.
\newblock Estimation of the mean of a multivariate normal distribution.
\newblock In \emph{Prague Symposium on Asymptotic Statistics}, 1973.

\bibitem[Sun et~al.(2014)Sun, Ioannidis, and Montanari]{sun2014learning}
Y.~Sun, S.~Ioannidis, and A.~Montanari.
\newblock Learning mixtures of linear classifiers.
\newblock In \emph{ICML}, 2014.

\bibitem[Yi et~al.(2014)Yi, Caramanis, and Sanghavi]{icml2014c2_yia14}
X.~Yi, C.~Caramanis, and S.~Sanghavi.
\newblock Alternating minimization for mixed linear regression.
\newblock In \emph{ICML}, 2014.

\bibitem[Yu et~al.(2013)Yu, Seltzer, Li, Huang, and Seide]{yu2013feature}
D.~Yu, M.~L. Seltzer, J.~Li, J.-T. Huang, and F.~Seide.
\newblock Feature learning in deep neural networks-studies on speech
  recognition tasks.
\newblock \emph{arXiv preprint arXiv:1301.3605}, 2013.

\end{thebibliography}

\appendix

\section{Proof of Concentration Results }\label{app:concentration}

\subsection{Proof of Lemma~\ref{assympboundsvoestimate}}

We  use the following variant of Stein's identity (see \cite{stein1973estimation}, and \cite{liu1994Siegel} for this specific formulation). Let  $\bX\in \reals^d$, $\bX'\in \reals^{d'}$ be jointly Gaussian random vectors, sampled from a Gaussian distribution of arbitrary mean and covariance. Consider a function \mbox{$h:\reals^{d'}\to\reals$} that is almost everywhere (a.e.)~differentiable and satisfies $\E[|\partial h(\bX')/\partial x_i|]<\infty$, for all $i\in [d']$. Then, the following identity holds:
\begin{align}\mathtt{Cov}(\bX,h(\bX')) = \mathtt{Cov}(\bX,\bX')\E[\nabla h(\bX') ]. \label{stein}\end{align}

Observe first that for $\bX$ a Gaussian vector centered at $\bxi$:
\begin{align}
\mathtt{Cov}(\bX,r(\bX))&\equiv \E_{\bxi}\left[  (\bX-\E[\bX]) (r(\bX)-\E[r(\bX)])\right] =  \E_\bxi\left[  (\bX-\E[\bX]) r(\bX)\right] \nonumber\\
&= \E_\bxi\left[  (\bX-\bxi) r(\bX)\right]. \label{dropmean}\end{align}
Thus, in the case of the Value Oracle model, we have that:
\begin{align*}
\E[ \bw(\bx)] &\stackrel{\eqref{voestimate}}{=} \E_{\bxi}[ (\bX-\bxi) r(\bX)] \stackrel{\eqref{dropmean}}{=} \mathtt{Cov}(\bX,r(\bX)) \\
&\stackrel{\eqref{stein}}{=} \mathtt{Cov}(\bX,\bX) \E_{\bxi}[\nabla r(\bX)]= \E_{\bxi}[\nabla r(\bX)]\equiv \bow(\bxi).\\
\end{align*}
Thus, $\bw(\bxi)$ indeed concentrates around $\bow(\bxi)$ by the law of large numbers. The tail bounds in Lemma~\ref{assympboundsvoestimate} then follow from Lemma~1 of \cite{sun2014learning}. \qed

\subsection{Proof of Lemma~\ref{assympboundscor}}
It is convenient to define the following quantities
%
%
$z(\bxi)  \equiv \sum_{i=1}^n K(\bxi,\bx^{(i)})\, ,$ 
$\bu(\bxi) \equiv \sum_{i=1}^nK(\bxi,\bx^{(i)})\, \bx^{(i)}\, ,$
$s(\bxi) \equiv \sum_{i=1}^nK(\bxi,\bx^{(i)})\, y^{(i)}\, , $ and 
$\bv(\bxi) \equiv \sum_{i=1}^nK(\bxi,\bx^{(i)})\, y^{(i)}\bx^{(i)}\, .$
%
%
Note that, in terms of  these quantities, we have
%
%
$\bw(\bxi)  = \frac{\bv(\bxi)}{z(\bxi)} -
\frac{\bu(\bxi) \, s(\bxi)}{z(\bxi)^2}\, . $ 
%
%
The following concentration results then hold:
\begin{lemma}\label{expectationsandbounds}
For any fixed $\bxi\in\reals^d$, let $\E_{\bxi}\{\cdots\}$ denote
the expectation 
with respect to $\bX\sim\normal(\bxi,\id_{d})$. Then, if $\{x^{(i)}\}_{i=1,\ldots,n}$ are generated under the Gaussian covariates model, we have
\begin{subequations}\label{expectations}
\begin{align}
\E\, z(\bxi) & = n \, e^{\|\bxi\|_2^2/2}\, , &
\E\, \bu(\bxi) & = n\, e^{\|\bxi\|_2^2/2}\, \E_{\bxi} \bX\, =  n\, e^{\|\bxi\|_2^2/2}\, \bxi,\label{zexpuexp}\\
\E\, s(\bxi) & = n\, e^{\|\bxi\|_2^2/2}\, \E_{\bxi} r(\bX)\, , &
\E\, \bv(\bxi) & = n\, e^{\|\bxi\|_2^2/2}\, \E_{\bxi}\big\{\bX\,r(\bX)\big\}\, .\label{sexpvexp}
\end{align}
\end{subequations}
and 
\begin{subequations}\label{bounds}
\begin{align}
&\prob\Big\{\big|z(\bxi)\!-\!\E\, z(\bxi)\big|\ge n\delta\Big\}\!\le\! \frac{e^{2\|\bxi\|_2^2}}{n\delta^2} \, , 
\prob\Big\{\big\|\bu(\bxi)\!-\!\E\, \bu(\bxi)\big\|_2\ge n\delta\Big\} \!\le\!  \frac{e^{2\|\bxi\|_2^2}(d\!+\!4\|\bxi\|_2^2)}{n\delta^2} \label{zboundubound},\\
&\prob\Big\{\big|s(\bxi)\!-\!\E\, s(\bxi)\big|\ge n\delta\Big\}\!\le\! \frac{e^{2\|\bxi\|_2^2}M^2}{n\delta^2} \, ,  
\prob\Big\{\big\|\bv(\bxi)\!-\!\E\, \bv(\bxi)\big\|_2\ge n\delta\Big\}\!\le\!  \frac{e^{2\|\bxi\|_2^2}M^2(d\!+\!4\|\bxi\|_2^2) }{n\delta^2}\label{sboundvbound}  .
\end{align}
\end{subequations}
%
%
%
\end{lemma}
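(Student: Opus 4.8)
The plan is to reduce everything to two elementary facts about the standard Gaussian in $\reals^d$: the moment generating function $\E e^{\langle\bxi,\bX\rangle}=e^{\|\bxi\|_2^2/2}$ for $\bX\sim\normal(0,\id_d)$, and the exponential-tilting (completing-the-square) identity $e^{\langle\bxi,\bx\rangle}\phi_0(\bx)=e^{\|\bxi\|_2^2/2}\phi_\bxi(\bx)$, where $\phi_\bxi$ denotes the density of $\normal(\bxi,\id_d)$. The latter gives, for any integrable $g$, the formula $\E\{e^{\langle\bxi,\bX\rangle}g(\bX)\}=e^{\|\bxi\|_2^2/2}\,\E_\bxi\{g(\bX)\}$, and the same with $\bxi$ replaced by $2\bxi$ gives $\E\{e^{2\langle\bxi,\bX\rangle}g(\bX)\}=e^{2\|\bxi\|_2^2}\,\E_{2\bxi}\{g(\bX)\}$. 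These two observations are the only nontrivial ingredients; the rest is i.i.d.\ bookkeeping.

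For the expectations \eqref{expectations}, since each of $z(\bxi),\bu(\bxi),s(\bxi),\bv(\bxi)$ is a sum of $n$ i.i.d.\ copies, I would compute the expectation of a single summand. Applying the tilting identity with $g\equiv 1$, $g(\bx)=\bx$, $g(\bx)=r(\bx)$, and $g(\bx)=\bx\,r(\bx)$ respectively yields the four displayed formulas, using $\E_\bxi\{\bX\}=\bxi$ for the $\bu$ case. For the two quantities involving $y^{(i)}$, I would first condition on $\bx^{(i)}$ and invoke $\E\{y^{(i)}\mid\bx^{(i)}\}=r(\bx^{(i)})$ from \eqref{eq:Model}, so that $y^{(i)}$ may be replaced by $r(\bx^{(i)})$ inside the expectation, reducing $s$ and $\bv$ to the same computation as the noiseless case.

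For the tail bounds \eqref{bounds}, I would use second moments plus Chebyshev. Because the summands are i.i.d., $\Var(z(\bxi))=n\,\Var(e^{\langle\bxi,\bX\rangle})\le n\,\E e^{2\langle\bxi,\bX\rangle}=n\,e^{2\|\bxi\|_2^2}$; similarly, using the tilting identity at $2\bxi$ together with $\E_{2\bxi}\{\|\bX\|_2^2\}=d+\|2\bxi\|_2^2=d+4\|\bxi\|_2^2$, one gets $\E\|\bu(\bxi)-\E\bu(\bxi)\|_2^2\le n\,\E\{e^{2\langle\bxi,\bX\rangle}\|\bX\|_2^2\}=n\,e^{2\|\bxi\|_2^2}(d+4\|\bxi\|_2^2)$. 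For $s(\bxi)$ and $\bv(\bxi)$ I would first use the bounded-response assumption $|y^{(i)}|\le M$ pointwise to pull a factor $M^2$ out of the relevant second moments, so these reduce to the bounds for $z$ and $\bu$ multiplied by $M^2$. Finally, Chebyshev's inequality for the scalar quantities, and Markov applied to $\|\cdot\|_2^2$ for the vector quantities, both at deviation level $n\delta$, convert each variance estimate into the stated probability bound, the $n^2\delta^2$ in the denominator absorbing one factor of $n$. I do not expect a genuine obstacle here: the only mild care points are handling the noise $\eps$ via conditioning on the covariates and correctly tracking the tilting identity at both $\bxi$ and $2\bxi$; everything else is routine Gaussian moment computation, which I would do once for a single summand and then scale by $n$.
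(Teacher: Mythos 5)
Your proposal is correct and matches the paper's argument essentially step for step: the same exponential-tilting identity $\E\{e^{\langle\bxi,\bX\rangle}g(\bX)\}=e^{\|\bxi\|_2^2/2}\E_\bxi\{g(\bX)\}$ (applied at $\bxi$ for the means and at $2\bxi$ for second moments), the same variance bounds with $\E_{2\bxi}\|\bX\|_2^2=d+4\|\bxi\|_2^2$ and $|y|\le M$ pulling out $M^2$, and the same Chebyshev/Markov step at deviation $n\delta$. No discrepancies.
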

%

\begin{proof}
We use the following two simple properties of Gaussian random variables.
For $g:\reals^d\to \reals^m$, we have that for $\bX\sim\normal(0,\id_{d})$:
\begin{align}\E_0[e^{\bxi^T\bX} g(\bX)] = e^{\|\bxi\|_2^2/2} \E_{\bxi}[ g(\bX)] \label{expectation}\end{align}
and
\begin{align}\Cov[e^{\bxi^T\bX} g(\bX)] = e^{2\|\bxi\|_2^2} \E_{2\bxi}[g(\bX)g^T(\bX)] -e^{\|\bxi\|_2^2}\E_{\bxi}[g(\bX)]\E_{\bxi}[g^T(\bX)]\label{covariance}
\end{align}
The statements in \eqref{expectations} therefore follow from \eqref{expectation} and the definition of the kernel $K$. By Chebyshev's inequality,  
\begin{align*}
\prob\Big\{\big|z(\bxi)-\E\, z(\bxi)\big|\ge n\delta\Big\}\leq \frac{\Var\{e^{\bxi^T\bX}\}}{n\delta^2}\stackrel{\eqref{covariance}}{=} \frac{e^{2\|\bxi\|_2^2}-e^{\|\bxi\|_2^2}}{n\delta^2}\leq \frac{e^{2\|\bxi\|_2^2}}{n\delta^2}.
\end{align*}
Moreover, by Markov's inequality:
\begin{align*}
\prob\Big\{\big\|\bu(\bxi)&-\E\, \bu(\bxi)\big\|_2\ge n\delta\Big\} \leq\frac{\E \big\|\bu(\bxi)-\E\, \bu(\bxi)\big\|_2^2 }{n^2\delta^2}= \sum_{j=1}^d\frac{\Var\{ \bu_j(\bxi)\} }{n^2\delta^2} = \sum_{j=1}^d\frac{\Var\{ e^{\bxi^T \bX}\bX_j\} }{n\delta^2}  \\
&\stackrel{\eqref{covariance}}{\leq}\sum_{j=1}^d \frac{  e^{2\|\bxi\|_2^2} \E_{2\bxi}[\bX_j^2] }{n\delta^2}= \sum_{j=1}^d \frac{  e^{2\|\bxi\|_2^2} (1+4\bxi_j^2) }{n\delta^2} =\frac{ e^{2\|\bxi\|_2^2} (d+4\|\bxi\|_2^2)}{n\delta^2}
\end{align*}
The first two inequalities in \eqref{bounds} therefore follow. The remaining two follow similarly using the fact that the absolute values of the responses $y$ are bounded by $M$.  \end{proof}

An immediate consequence of   Lemma~\ref{expectationsandbounds} is that $\bw(\bxi)$ concentrates around the following quantity:
\begin{align*}
 \frac{\E\bv(\bxi)}{\E z(\bxi)} -
\frac{\E\bu(\bxi) \, \E s(\bxi)}{\big(\E z(\bxi)\big)^2}\, &=\E_{\bxi}\big\{\bX\,r(\bX)\big\}-\E_{\bxi}\bX \E_{\bxi}r(\bX)\\&=\Cov_{\bxi}[\bX,r(\bX)]\stackrel{\eqref{stein}}{=}\E_{\bxi}[\nabla r(\bX) ] \equiv \bow(\bxi).
\end{align*}
%
Hence, \eqref{population} indeed describes the estimates, asymptotically.
To prove \eqref{candidatebound}, we use the following simple auxiliary lemma.
\begin{lemma}\label{fractionlemma}
For any $a,\bar{a}\in \reals^d$, $b,\bar{b}>0$, and $\delta > 0$, we have that:
\begin{align*}
\text{If }\|a-\bar{a}\|\leq \delta'\text{ and }|b-\bar{b}|\leq \delta'\text{ then }\left\|\frac{a}{b}-\frac{\bar{a}}{\bar{b}}\right\|\leq \delta 
\end{align*} 
where $$\delta' = \frac{\bar{b}^2\delta}{(\|\bar{a}\|+\bar{b}+\bar{b}\delta)}.$$
\end{lemma}
\begin{proof}(Sketch)
Note that $\delta'<\bar{b}$. It is easy to show that $\left\|\frac{a}{b}-\frac{\bar{a}}{\bar{b}}\right\| \leq \frac{(\bar{b}+\|\bar{a}\|)\delta'}{(\bar{b}-\delta')\bar{b}}=\delta$.
\end{proof}
We have that
\begin{align*}
&\prob\Big\{\big\|\bw(\bxi)-\bow(\bxi)\big\|_2>\delta\Big\} \leq \\& \prob\Big\{\left\|\frac{\bv(\bxi)}{ z(\bxi)} - \frac{\E\bv(\bxi)}{\E z(\bxi)}\right\|_2>\delta/2\Big\} + \prob\Big\{\left\|\frac{\bu(\bxi)s(\bxi)}{ z(\bxi)^2} -\frac{\E \bu(\bxi)\E s(\bxi)}{ (\E z(\bxi))^2} \right\|_2>\delta/2\Big\}
\end{align*}
From Lemma~\ref{fractionlemma}, for 
\begin{align}
\delta'=  \frac{(\E z(\bxi))^2 \delta}{2(  \|\E \bv(\bxi) \|_2+\E z(\bxi)+\E z(\bxi)\delta/2)} \stackrel{\eqref{expectations}}{=} \frac{ n \, e^{\|\bxi\|_2^2/2} }{2\|\E_{\bxi}\bX r(\bX)\|_2+2+\delta }\delta, \label{deltaprime}
\end{align}
we have
\begin{align*}
\prob\Big\{&\left\|\frac{\bv(\bxi)}{ z(\bxi)} - \frac{\E\bv(\bxi)}{\E
    z(\bxi)}\right\|_2>\delta/2\Big\} \leq
\prob\Big\{\|\bv(\bxi)-\E\bv(\bxi) \|_2>\delta' \Big\} + \prob\Big\{|
z(\bxi) -\E z(\bxi) |  >\delta'\Big\}\\
&\stackrel{\eqref{sboundvbound},\eqref{zboundubound},\eqref{deltaprime}}{\leq} \frac{M^2e^{2\|\bxi\|_2^2}(d+4\|\bxi\|_2^2)(2\|\E_{\bxi}\bX r(\bX)\|_2+2+\delta   )^2 }{n\delta^2 e^{\|\bxi\|_2^2}} + \frac{e^{2\|\bxi\|_2^2} (2\|\E_{\bxi}\bX r(\bX)\|_2+2+\delta   )^2 }{n\delta^2 e^{\|\bxi\|_2^2} }\\
& 
=\frac{\left(M^2 (d+4\|\bxi\|_2^2) +1\right)\left(2M\sqrt{d+\|\bxi\|_2^2}+2+\delta  \right)^2  e^{\|\bxi\|_2^2}  }{n\delta^2},
\end{align*}
where in the second to last step we use $\|\E_{\bxi}\bX r(\bX)\|_2^2\leq\E_{\bxi}\|\bX r(\bX)\|_2^2 $, by the convexity of $\|\cdot\|_2^2$.
Similarly, for 
\begin{align}\delta'' &=  \frac{(\E z(\bxi))^4 \delta}{2(  \|\E \bu(\bxi)\|_2|\E s(\bxi)|+(\E z(\bxi))^2+(\E z(\bxi))^2\delta/2)} 
\stackrel{\eqref{expectations}}{=} \frac{ n^2 \, e^{\|\bxi\|_2^2} }{2\|\E_{\bxi}\bX\|_2 |\E_{\bxi}r(\bX)|+2+\delta }\delta, \label{deltaprimeprime}
\end{align} we have
\begin{align*}
\prob\Big\{\left\|\frac{\bu(\bxi)s(\bxi)}{ z(\bxi)^2} -\frac{\E \bu(\bxi)\E s(\bxi)}{ (\E z(\bxi))^2} \right\|_2>\delta/2\Big\}&\leq \prob\Big\{\left\|\bu(\bxi)s(\bxi) -\E \bu(\bxi)\E s(\bxi) \right\|_2>\delta''\Big\} \\&\qquad+ \prob\Big\{\left| (z(\bxi))^2 -(\E z(\bxi))^2 \right|  >\delta''\Big\}
\end{align*}
Rewriting terms and applying a union bound gives
\begin{align}
\prob\Big\{&\left\|\bu(\bxi)s(\bxi) -\E \bu(\bxi)\E s(\bxi) \right\|_2>\delta''\Big\}\label{splitproducts}
\leq \prob\Big\{\left\|\E\bu(\bxi)\left(s(\bxi) -\E s(\bxi)\right) \right\|_2>\delta''/3\Big\}+\\&\qquad \prob\Big\{\left\|\E s(\bxi)\left(\bu(\bxi) -\E \bu(\bxi)\right) \right\|_2>\delta''/3\Big\}+
 \prob\Big\{\left\|\left(s(\bxi)-\E s(\bxi)\right)\left(\bu(\bxi) -\E \bu(\bxi)\right) \right\|_2>\delta''/3\Big\}\nonumber\end{align}
where
\begin{align*}
\prob\Big\{&\left\|\E\bu(\bxi)\left(s(\bxi) -\E s(\bxi)\right) \right\|_2>\delta''/3\Big\}
 = \prob\Big\{|s(\bxi) -\E s(\bxi)|>\frac{\delta''}{3\left\|\E\bu(\bxi)\right\|_2   }\Big\}\\
& \stackrel{\eqref{deltaprimeprime},\eqref{zexpuexp}}{=}   \prob\Big\{|s(\bxi) -\E s(\bxi)|>\frac{n \, e^{\|\bxi\|_2^2/2}    \delta   }{3\left\|\bxi\right\|_2 (2\|\E_{\bxi}\bX\|_2 |\E_{\bxi}r(\bX)|+2+\delta )  }\Big\}\\
& \stackrel{\eqref{sboundvbound}}{\leq}  \frac{9e^{\|\bxi\|_2^2}M^2\left\|\bxi\right\|_2^2 (2\|\E_{\bxi}\bX\|_2 |\E_{\bxi}r(\bX)|+2+\delta )^2    }{n\delta^2}
 \leq  \frac{9e^{\|\bxi\|_2^2}M^2\left\|\bxi\right\|_2^2 (2M\|\bxi\|_2 +2+\delta )^2    }{n\delta^2},
\end{align*}
\begin{align*}
\prob\Big\{&\left\|\E s(\bxi)\left(\bu(\bxi) -\E \bu(\bxi)\right) \right\|_2>\delta''/3\Big\}
 = \prob\Big\{\|\bu(\bxi) -\E \bu(\bxi)\|_2>\frac{\delta''}{3\left|\E s(\bxi)\right|   }\Big\}\\
& \stackrel{\eqref{deltaprimeprime},\eqref{sexpvexp}}{=}   \prob\Big\{\|\bu(\bxi) -\E \bu(\bxi)|_2>\frac{n \, e^{\|\bxi\|_2^2/2}    \delta   }{3|\E_{\bxi}r(\bX)| (2\|\E_{\bxi}\bX\|_2 |\E_{\bxi}r(\bX)|+2+\delta )  }\Big\}\\
& \stackrel{\eqref{zboundubound}}{\leq}  \frac{9e^{\|\bxi\|_2^2}(d+4\|\bxi\|_2^2)M^2 (2M\|\bxi\|_2 +2+\delta )^2    }{n\delta^2},
\end{align*}
and
\begin{align*}
\prob\Big\{&\left\|\left(s(\bxi)-\E s(\bxi)\right)\left(\bu(\bxi) -\E \bu(\bxi)\right) \right\|_2>\delta''/3\Big\}\\
&\leq   \prob\Big\{\left |s(\bxi)-\E s(\bxi)\right\|>\sqrt{\delta''/3}\Big\} +     \prob\Big\{\left\|\bu(\bxi) -\E \bu(\bxi) \right\|_2>\sqrt{\delta''/3}\Big\}\\
&\stackrel{\eqref{bounds},\eqref{deltaprimeprime}}{\leq} 
\frac{3e^{\|\bxi\|_2^2}(M^2+d+4\|\bxi\|_2^2) (2M\|\bxi\|_2+2+\delta   )  }{n\delta}.
\end{align*}
Finally, using a similar union bound as in  \eqref{splitproducts} we get:
\begin{align*}
\prob\Big\{\left| (z(\bxi))^2 -(\E z(\bxi))^2 \right|  >\delta''\Big\}& = \prob\Big\{\left| (z(\bxi) -\E z(\bxi))^2+2\E z(\bxi) (z(\bxi)-\E z(\bxi)) \right|  >\delta''\Big\}\\
&\leq  \prob\Big\{\left| z(\bxi) -\E z(\bxi)\right|>\sqrt{\frac{\delta''}{2}}\Big\}+\prob\Big\{\left |z(\bxi)-\E z(\bxi) \right|  >\frac{\delta''}{4\E z(\bxi)} \Big\} \\
&\stackrel{\eqref{bounds},\eqref{deltaprimeprime}}{\leq} \frac{2 e^{\|\bxi\|_2^2}(2M\|\bxi\|_2+2+\delta)}{n\delta} + \frac{16 e^{\|\bxi\|_2^2}(2M\|\bxi\|_2+2+\delta)^2}{n\delta^2}
\end{align*}
Adding the above bounds yields 
\begin{align}
\prob\Big\{&\big\|\bw(\bxi)-\bow(\bxi)\big\|_2\ge \delta\Big\}\leq \frac{e^{\|\bxi\|_2^2}}{n\delta^2}\left[(10M^2d+49M^2\|\bxi\|_2^2+17)(2M\sqrt{d+\|\bxi\|^2_2}+2+\delta)^2\right]+\nonumber\\
&\qquad+\frac{e^{\|\bxi\|_2^2}}{n\delta}[(2M^2+3d+12\|\bxi\|_2^2+2)(2M\|\bxi\|_2+2+\delta)]\label{final}
\end{align}
and the lemma follows. \qed

\section{Gradient Coefficients and Approximate Normality}\label{app:gradnorm}
\subsection{Proof of Lemma~\ref{lemma:coeffboundssimple}}\label{proofofcoeffbounds}
Observe that for $f(x)=\tanh(\beta x)$ we have:
\begin{align}
\label{eq:BoundsFprime}
\beta e^{-2\beta |x|}<f'(x) = \frac{4\beta e^{-2\beta |x|}}{(1+
  e^{-2\beta |x|})^2} \le 4\beta e^{-2\beta|x|}. 
\end{align}
 Observe also that $X\equiv \<\bw^\ell,\bX\>$ is a 1-dimensional zero-mean
 Gaussian r.v.~with variance $ \|\bw_i\|^2_2=1$. 
Using the upper bound in Eq.~(\ref{eq:BoundsFprime}), we get
\begin{align*}
\E_{0}\big\{f'\big(\<\bw^\ell,\bxi+\bX\>\big)\big\} & =
\E\big\{f'(z_\ell+X)\big\}\le 4\beta\E\big\{e^{-2\beta |z_\ell+X|}\big\}
\stackrel{(a)}{\le} 8\beta \E\big\{e^{-2\beta (|z_\ell|+X)}\ind(X\ge
-|z_\ell|)\big\}\\
& \le
8\beta\ e^{-2\beta|z_\ell|} \E\{e^{-2\beta X}\} = 8\beta\,
e^{-2\beta|z_\ell| + 2\beta^2}\, ,
\end{align*}
where in $(a)$ we used $\E\{e^{2\beta (|z_\ell|+X)}\ind(X\le -|z_i|)\}\le \E\{e^{-2\beta (|z_\ell|+X)}\ind(X\ge -|z_\ell|)\}$.
This proves the upper bound. 
Similarly, the lower bound on $f'$ yields:
\begin{align*}
\E\big\{f'(z_\ell+X)\big\}&\ge \beta\E\big\{e^{-2\beta |z_\ell+X|}\big\}
 \ge \beta \E\big\{e^{-2\beta (|z_\ell|+X)}\ind(X\ge -|z_\ell|)\big\}\\
& \stackrel{(b)}{=} \beta e^{-2\beta|z_\ell|+2\beta^2} \E_{-2\beta}
\ind(X\ge -|z_\ell|) = \beta e^{-2\beta|z_\ell|+2\beta^2} \prob(X\ge -|z_\ell|+2\beta)\, ,
\end{align*}
where the equality $(b)$ follows from the Gaussian integration formula
\eqref{expectation} (with $\E_{-2\beta}$ denoting expectation with
respect to $X\sim\normal(-2\beta,1)$). \qed

\subsection{Proof of Lemma~\ref{anglelemma}}\label{proofofanglelemma}
The first statement of the Lemma follows  by observing that $\<\bw^\ell,\bXi\>/\xi_0$ is a standard Gaussian.
To prove the second statement, we establish an auxiliary result:
\begin{lemma}\label{parallelogramlemma}Let $(Z_1,Z_2)\in\reals^2$ be a zero-mean Gaussian random variable with covariance $\Sigma=\sigma^2\left[\begin{smallmatrix}1& c\\c& 1\end{smallmatrix}\right]$, for some $|c|<1$. Then 
$\prob (|Z_1|<\epsilon_1, |Z_2|<\epsilon_2) \leq \frac{2\epsilon_1\epsilon_2}{\pi\sigma^2\sqrt{1-c^2}}.$
\end{lemma}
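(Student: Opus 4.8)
The plan is to bound the density of $(Z_1,Z_2)$ by its peak value and integrate the resulting constant over the rectangle. Since $|c|<1$, the covariance $\Sigma=\sigma^2\left[\begin{smallmatrix}1&c\\c&1\end{smallmatrix}\right]$ is positive definite with $\det\Sigma=\sigma^4(1-c^2)$, so $(Z_1,Z_2)$ admits the bivariate Gaussian density
\[
p(z_1,z_2)=\frac{1}{2\pi\sqrt{\det\Sigma}}\exp\Big(-\tfrac12(z_1,z_2)\Sigma^{-1}(z_1,z_2)^{\!\top}\Big)=\frac{1}{2\pi\sigma^2\sqrt{1-c^2}}\exp\Big(-\tfrac{z_1^2-2cz_1z_2+z_2^2}{2\sigma^2(1-c^2)}\Big).
\]
Because $\Sigma$, and hence $\Sigma^{-1}$, is positive definite, the quadratic form in the exponent is nonnegative for every $(z_1,z_2)$, so the exponential factor is at most $1$ and $p(z_1,z_2)\le \frac{1}{2\pi\sigma^2\sqrt{1-c^2}}$ everywhere on $\reals^2$.

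It then remains only to integrate this pointwise bound over $(-\epsilon_1,\epsilon_1)\times(-\epsilon_2,\epsilon_2)$, a set of area $4\epsilon_1\epsilon_2$:
\[
\prob(|Z_1|<\epsilon_1,\,|Z_2|<\epsilon_2)=\int_{-\epsilon_1}^{\epsilon_1}\!\!\int_{-\epsilon_2}^{\epsilon_2}p(z_1,z_2)\,\de z_2\,\de z_1\le \frac{4\epsilon_1\epsilon_2}{2\pi\sigma^2\sqrt{1-c^2}}=\frac{2\epsilon_1\epsilon_2}{\pi\sigma^2\sqrt{1-c^2}},
\]
which is the stated inequality. There is essentially no obstacle; the only point to verify is the positive definiteness (equivalently nonsingularity) of $\Sigma$, which is precisely what the hypothesis $|c|<1$ guarantees and which also rules out the degenerate case in which $Z_1,Z_2$ are almost surely proportional.

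\textbf{Application to Lemma~\ref{anglelemma}.} Finally I would deduce the second statement of Lemma~\ref{anglelemma} by applying Lemma~\ref{parallelogramlemma} with $Z_1=\<\bw^\ell,\bXi\>$, $Z_2=\<\bw^{\ell'},\bXi\>$, $\sigma=\xi_0$, $c=\<\bw^\ell,\bw^{\ell'}\>$, and $\epsilon_i=\Delta_i$: the pair $(Z_1,Z_2)$ is jointly Gaussian with covariance $\xi_0^2\left[\begin{smallmatrix}1&c\\c&1\end{smallmatrix}\right]$, and since the $d\times2$ submatrix $[\bw^\ell\,|\,\bw^{\ell'}]$ of $\bM$ has smallest singular value at least $\sigma_{\rm min}(\bM)\ge\kappa$, its $2\times2$ Gram matrix $\left[\begin{smallmatrix}1&c\\c&1\end{smallmatrix}\right]$ has smallest eigenvalue $1-|c|\ge\kappa^2$; hence $1-c^2=(1-|c|)(1+|c|)\ge 1-|c|\ge\kappa^2$, so $\sqrt{1-c^2}\ge\kappa$, and substituting into the bound above yields $\prob(|\<\bw^\ell,\bXi\>|<\Delta_1,\,|\<\bw^{\ell'},\bXi\>|<\Delta_2)\le\frac{2\Delta_1\Delta_2}{\pi\kappa\xi_0^2}$.
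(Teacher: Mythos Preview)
Your proof of the lemma is correct and takes essentially the same approach as the paper: both bound the probability by (peak density)$\times$(area). The paper phrases it via the change of variables $Z=\Sigma^{1/2}W$ and computes the area of the resulting parallelogram in $W$-coordinates, while you stay in $Z$-coordinates and bound the density pointwise by its maximum $\frac{1}{2\pi\sigma^2\sqrt{1-c^2}}$; these are two sides of the same Jacobian identity, and both yield the same constant.

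Your deduction of the second statement of Lemma~\ref{anglelemma} is also correct and matches the paper's argument. One small remark: your assertion that the $d\times 2$ submatrix $[\bw^\ell\,|\,\bw^{\ell'}]$ has smallest singular value at least $\sigma_{\min}(\bM)$ is true but deserves a word of justification---it follows because $\bM_{\ell\ell'}^{\top}\bM_{\ell\ell'}$ is a principal submatrix of $\bM^{\top}\bM$, so Cauchy interlacing gives $\lambda_{\min}(\bM_{\ell\ell'}^{\top}\bM_{\ell\ell'})\ge\lambda_{\min}(\bM^{\top}\bM)=\sigma_{\min}(\bM)^2\ge\kappa^2$. The paper invokes this explicitly; your subsequent step $1-c^2=(1-|c|)(1+|c|)\ge 1-|c|\ge\kappa^2$ is a clean alternative to the paper's determinant/trace argument.
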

\begin{proof}
Observe that $Z=\Sigma^{1/2}W$, where $W$ is a standard Gaussian. Hence, 
$\prob (|Z_1|<\epsilon, |Z_2|<\epsilon)=\prob (W\in \mathcal R)\leq \frac{|\mathcal R|}{2\pi},$ where $\mathcal R$ the parallelogram defined by the endpoints $\Sigma^{-1/2}\left[\begin{smallmatrix} \pm \epsilon_1\\\pm \epsilon_2\end{smallmatrix}\right]$. The area  $|\mathcal R|$ is given by the determinant of the matrix comprising the two vectors defining $\mathcal{R}$; as such, it is 
$|R|= \det\left(\Sigma^{-1/2}\left[\begin{smallmatrix}2\epsilon_1 &0\\0 & 2\epsilon_2 \end{smallmatrix}\right]\right)= (4\epsilon_1\epsilon_2)(\det(\Sigma))^{-1/2}=\frac{4\epsilon_1\epsilon_2}{\sigma^2\sqrt{1-c^2}}.$
\end{proof}
 Observe that $\<\bw^\ell,\bXi\>$ and $\<\bw^{\ell'},\bXi\>$ are jointly Gaussian with zero mean and covariance $\Sigma=\xi_0^2\left[\begin{smallmatrix}1& c\\c& 1\end{smallmatrix}\right]$, with $c=\<\bw^\ell,\bw^{\ell'}\>$. Hence, the second statement follows by Lemma~\ref{parallelogramlemma}, as the latter implies:
$$\prob(|\<\bw^\ell,\bXi\>|<\Delta_1,|\<\bw^{\ell'},\bXi\>|<\Delta_2)
\leq \frac{2\Delta_1\Delta_2}{\pi \xi_0^2 \sqrt{1-\<\bw^\ell,\bw^{\ell'}\>^2}},\quad\text{for all }\ell\neq \ell'\text{ in } [k].$$
Recall that $\bM$ is the $d \times k$ matrix whose columns comprise all vectors $\bw^{\ell}$, $\ell\in [k]$, and let $\bM_{\ell\ell'}$ be the $d\times 2$ matrix comprising only vectors $\bw^\ell$ and $\bw^{\ell'}$. Notice that $\bM_{\ell\ell'}^T\bM_{\ell\ell'}$ is a principal submatrix of $\bM^T\bM$. Hence, by the Cauchy interlacing theorem, 
$$\sigma_{\min}(\bM_{\ell\ell'}^T\bM_{\ell\ell'} ) \geq  \sigma_{\min}(\bM^T\bM) =(\sigma_{\min}(\bM))^2\geq \kappa^{2} .$$
On the other hand, $1-\<\bw^\ell,\bw^{\ell'}\>^2= \det(\bM_{\ell\ell'}^T\bM_{\ell\ell'})\geq \sigma_{\min}(\bM_{\ell\ell'}^T\bM_{\ell\ell'})$; the last inequality follows from the fact that the trace of $\bM_{\ell\ell'}^T\bM_{\ell\ell'}$ is 2 and, thus, at least one of its eigenvalues is at least 1.  Hence, the second statement of the lemma follows.\qed

\section{Proofs of Lemmas Bounding the Size of Each Partition}
\subsection{Proof of Lemma~\ref{clusterlemma}}\label{proofofclusterlemma}
Note that 
%
%
$\big\|\btw(\bxi_{j}) -\bw^{\ell}\big\|_2  =
\Big\|\frac{\bw(\bxi_j)}{\|\bw(\bxi_j)\|_2} -\bw^{\ell}\Big\|_2
 \le  \Big\|\frac{\bw(\bxi_j)}{\|\bw(\bxi_j)\|_2} - \frac{\bow(\bxi_j)}{\|\bow(\bxi_j)\|_2} \Big\|_2+
\Big\|\frac{\bow(\bxi_j)}{\|\bow(\bxi_j)\|_2} -\bw^{\ell}\Big\|_2 \stackrel{(a)}{\le} 2\delta+ \frac{2\delta}{1-\delta} \stackrel{(b)}{\leq }6\delta.$
%
Here, the  first term in bound $(a)$ follows from 
\begin{align*} \Big\|\frac{\bw(\bxi_j)}{\|\bw(\bxi_j)\|_2} - \frac{\bow(\bxi_j)}{\|\bow(\bxi_j)\|_2} \Big\|_2 &\leq \frac{\|\bw(\bxi_j)-\bow(\bxi_j)\|_2}{\|\bw(\bxi_j)\|_2} + \|\bow(\bxi_j)\|_2\big|\frac{1}{\|\bw(\bxi_j)|_2} -\frac{1}{\|\bow(\bxi_j)\|_2 }\big| \\&= \frac{\|\bw(\bxi_j)-\bow(\bxi_j)\|_2}{\|\bw(\bxi_j)\|_2} - \frac{| \|\bw(\bxi_j)\|_2-\|\bow(\bxi_j)\|_2 |}{\|\bw(\bxi_j)\|_2}\leq 2\delta\end{align*}
as for $j\in\cC_{\ell}$, we have
$\|\bw(\bxi_j)\|_2\ge w_0$ and
 since   $\cC_{\ell}\subseteq\cG$, we have $| \|\bw(\bxi_j)\|_2 - \|\bow(\bxi_j)\|_2 |\leq\|\bw(\bxi_j) - \bow(\bxi_j)\|_2\leq \delta w_0.$
The second term in bound $(a)$ follows from \eqref{allbutone}. Indeed,  since $\bxi_j\in \cR_\ell$,
%
$\Big\|\frac{\bow(\bxi_j)}{\|\bow(\bxi_j)\|_2}-\bw^{\ell}\Big\|_2 \leq \frac{\|\bow(\bxi_j) -a_\ell \bw^{\ell}\|_2 }{\|\bow(\bxi_j)\|_2}-\frac{|a_\ell - \|\bow(\bxi_j)\||}{ \|\bow(\bxi_j)\|_2}\stackrel{\eqref{allbutone}}{\leq} \frac{2 \delta w_0}{ \|\bow(\bxi_j)\|_2}.  $
On the other hand, since $|\|\bw(\bxi_j)\|_2 - \|\bow(\bxi_j)\|_2|\leq \delta w_0$ and $\|\bw(\bxi_j)\|_2\ge w_0$, we have that $\|\bow(\bxi_j)\|_2 \geq (1-\delta) w_0$, so the second bound of $(a)$ holds. Finally, $(b)$ follows from 
 $\delta \in (0,0.5]$. \qed

\subsection{Proof of Lemma~\ref{spuriouslemma}}\label{proofofspurious}
We have 
\begin{align}
\cC_0 &\subseteq\cG^c\cup \cB_0\cup\cB_*\, ,\label{eq:CoInclusion}\\
\cB_0 & \equiv \Big\{j\in \cG:\, \bxi_j\in \cR_0, \;
\|\bow(\bxi_j)\|_2\ge (1-\delta)w_0\Big\}\, ,\\
\cB_* & \equiv \Big\{j\in \cG:\, \bxi_j\in \cR_*, \;
\|\bow(\bxi_j)\|_2\ge (1-\delta)w_0\Big\}\, ,
\end{align}
since, for  $j\in\cG$, the event $j\not\in\cup_{\ell=1}^k\cC_{\ell}$ implies
$\bxi_k\in\cR_0\cup\cR_*$.
Further $\|\bw(\bxi_j)\|_2\ge w_0$ implies $\|\bow(\bxi_j)\|_2\ge
(1-\delta )w_0$ because --by definition of $\cG$--
$\|\bow(\bxi_j)-\bw(\bxi_j)\|_2\le \delta w_0$.

From Eq.~(\ref{eq:CoInclusion}), $|\cC_0|\le |\cG^c|+|\cB_0|+\cB_*|$. Note that $\cB_0=\emptyset$ by construction, due to Eq.~(\ref{eq:InR0}).
On the other hand, %
%
$
\cB_*\subseteq \cB_*' \equiv \Big\{j\in [m_0]:\, \bxi_j\in
\cR_*\Big\}\, .
$	
%
%
Thus $|\cB_*'|$ is a binomial random variable with $m_0$ trials and
success probability
%
%
$
\prob\Big(\bxi_1\in\cR_*\Big)\stackrel{(a)}{\le} \frac{2 k^2}{\pi\kappa}
\left(\frac{\Delta}{\xi_0}\right)^2\stackrel{(b)}{=}\gamma\rho ,
$
%
%
where $(a)$ is implied by Lemma~\ref{anglelemma} and $(b)$ is by construction of $\xi_0$ and $\gamma$---c.f.~\eqref{xi0} and \eqref{gamma}. 
Hence, for any $\epsilon\leq 2e-1$, we get the Chernoff bound
%
%
$
\prob\left( |\cB_*|\ge m_0
\gamma\rho (1+\epsilon)\right) \le e^{-\epsilon^2  m_0 \gamma\rho/4}.
$
%
%
Hence, we have that
$\prob\Big(|\cC_0|>2m_0\gamma\rho)\leq \prob\big(|\cG^c|\ge \frac{m_0\gamma\rho}{2}\big)+\prob\left( |\cB_*|\ge m_0
\gamma\rho\frac{3}{2}\right) \leq \prob\big(|\cG^c|\ge \frac{m_0\gamma\rho}{2}\big) +  e^{- m_0 \gamma\rho/16}.
$

To obtain the two statements in the lemma, it therefore remains to bound size of $\cG^c$. By Markov's
inequality 
\begin{align*}
\prob\big(|\cG^c|\ge m_0\eps\big)&\le \frac{\E\{|\cG^c|\}}{m_0\eps}  =
\frac{1}{\eps} \prob\Big(\|\bw(\bxi_1)-\bow(\bxi_1)\|_2>
\delta{w_0}\Big)
\end{align*}
Thus, under the Value Oracle model,  \eqref{voebound} in Lemma~\ref{assympboundsvoestimate} directly gives:
\begin{align}\label{GCboundvo}
\prob\big(|\cG^c|\ge m_0\eps\big)&\le\frac{c_1}{\eps}
\exp\Big(-\min \big\{ \frac{c_2n_0\delta^2w_0^2 }{d M^2} , (c_3\frac{\sqrt{n_0}\delta w_0}{M} - c_4 \sqrt{d})^2\big\}\Big)
\end{align}
 the first statement immediately follows. 

To prove the second statement, assume that the Gaussian Covariates model, and \eqref{eq:Slope}, is used to produce $\bw(\bxi)$ instead. Note that
\begin{align*}
\prob\big(|\cG^c|\ge m_0\eps\big)& \le \frac{1}{\eps} \left(   \prob\Big(\|\bw(\bxi_1)-\bow(\bxi_1)\|_2>
\delta{w_0}\mid \|\bxi_1\|_2^2 \le \alpha \xi_0^2\Big)  +\prob(\|\bxi_1\|_2^2> \alpha \xi_0^2)  \right)\\
& \stackrel{(a)}{\le} \frac{1}{\eps} \left[ \frac{C}{n\delta^2w_0^2} M^4 \big(e^{\alpha \xi_0^2}
(d+\alpha\xi_0^2)^2\big) +  \prob(\|\bxi_1\|_2^2> \alpha \xi_0^2) \right] \\ 
&\stackrel{(b)}{\le}  \frac{CM^4 d^2}{n\delta^2w_0^2\eps}
\big\{e^{(1+2d^{-1}) \alpha\xi_0^2}\big\} + \frac{1}{\eps}  \prob(\|\bxi_1\|_2^2> \alpha \xi_0^2)
%
\end{align*}
where $C$ is a numerical constant, $(a)$ follows from Corollary \ref{assympboundscor} and, in
$(b)$, we used $(1+x)\le e^x$. On the other hand, the square of the norm of a standard Gaussian follows a chi-squared distribution, so by \cite{dasgupta2003elementary}, for $\alpha>d$ we get that:
$ \prob(\|\bxi_1\|_2^2> \alpha \xi_0^2)\leq \left(\frac{\alpha}{d} e^{1-\frac{\alpha}{d}} \right)^{\frac{d}{2}} = e^{-\frac{d}{2} (\frac{\alpha}{d} -1 - \log \frac{\alpha}{d}) }.$
Note that  $\log x/x$ is increasing in $(0,e]$ and decreasing in $[e,+\infty)$. Hence, for all $x>0$,  $\log x\leq \frac{x}{e}$ and, thus,
 $\prob(\|\bxi_1\|_2^2> \alpha \xi_0^2)\leq e^{-\frac{d}{2}(\frac{e-1}{e}\frac{\alpha}{d} -1)} \leq e^{-\frac{e-1}{4e}\alpha} \leq e^{\frac{\alpha}{7}}$
for all $\alpha$ such that
$\frac{\alpha}{d}\geq 4 >\frac{2e}{e-1}$.
Hence, under this condition on $\alpha$, we have that
$\prob\big(|\cG^c|\ge m_0\eps\big)
\le \frac{CM^4 d^2}{n\delta^2w_0^2\eps}
e^{(1+2d^{-1}) \alpha\xi_0^2}
 +  \frac{1} { \varepsilon}e^{\frac{\alpha}{7}}$
This means that by setting $\alpha = \frac{1}{\frac{1}{7}+   (1+2d^{-1}) \xi_0^2     }\log{   \frac{n\delta^2w_0^2}{M^4 d^2}} > 4d$
we get that, for $C$ an absolute constant:
\begin{align}
\prob\big(|\cG^c|\ge m_0\eps\big)\le \frac{C}{\eps} \left(\frac{M^4 d^2}{n\delta^2w_0^2} \right)^{\frac{1}{1 +7(1+2d^{-1})\xi_0^2}}
\, . \label{GCbound}
\end{align}
and the second statement follows.\qed

\subsection{Proof of Lemma~\ref{goodclusterlemma}}\label{proofofgoodclusterlemma}
We bound the size of $\cC_{\ell}$ as
follows:
%
%
$$|\cC_{\ell}|\ge |\cC'_{\ell}|-|\cG^c|\, ,$$ where
$\cC_{\ell}'  \equiv  \Big\{j\in [m_0]:\, \bxi_j\in \cR_\ell', \;
\|\bow(\bxi_j)\|_2\ge w_0\Big\}\,$ and
%
$\cR_{\ell}' \equiv \big\{\bxi\in\reals^d:\, \,
|\<\bw^{\ell},\bxi\>|<\frac{\Delta}{2},\;
\min_{i\in [k]\setminus \ell}|\<\bw^{i},\bxi\>|\ge\Delta\big\}\,.
$
%
We only need to lower bound $|\cC'_{\ell}|$, as $|\cG^c|$ can be upper-bounded by \eqref{GCboundvo} or \eqref{GCbound}, under the Value Oracle and Gaussian Covariates model, respectively. 
Observe first that, for any $\bxi_j\in \cR_{\ell}'$, as in  \eqref{allbutone}, we have that 
$\|\bow(\bxi)\|\geq |a_{\ell}|- \delta w_0$
where $|a_\ell|\geq  c_1u_{0}e^{-2\beta\frac{\Delta}{2}}. $ 
 On the other hand, for $\Delta$ satisfying \eqref{Delta}
we get that
$ c_1u_{0}e^{-2\beta\frac{\Delta}{2}} = (1+\delta)w_0. $
Hence,
$ \bxi_j\in \cR_{\ell}'\Rightarrow \|\bow(\bxi_j)\|_2\ge w_0. $
Moreover, since $\Delta<\bxi_0$ by \eqref{xi0}, Lemma~\ref{anglelemma} implies that $|\cC'_{\ell}|$
a binomial random variable with success probability 
$
\prob\Big(\bxi_1\in\cR_\ell'\Big) \geq \sqrt{\frac{2}{e\pi}}\frac{\Delta}{2\xi_0} - k  \frac{2 }{\kappa\pi}
\left(\frac{\Delta}{\xi_0}\right)^2\,=\gamma.
$
Hence, for any $\epsilon\in(0,1]$, we have
$ 
\prob\left( |\cC'_\ell|\leq m_0 \gamma
 (1-\epsilon)\right) \le e^{-\frac{\epsilon^2}{2} m_0 \gamma }.
$\qed

\section{Proof of Theorem~\ref{dimredtheorem}}
Let $\mathcal{M}=\linspan(\bM)$, and $\hat{\mathcal{M}}$ the estimate of $\mathcal{M}$, suppose that the largest principal angle between the two spaces satisfies $$d_p(\mathcal{M},\hat{\mathcal{M}})\leq\theta\leq \frac{\pi}{2}.$$ Then, there exist orthonormal bases  $\{e_\ell\}_{\ell=1}^{k}$ , $\{\hat{e}_\ell\}_{\ell=1}^{k}$  of  $\mathcal{M}$, $\hat{\mathcal{M}}$, respectively so that
\begin{align} \< e_\ell,\hat{e}_{\ell}\> \leq \cos \theta, \text{  for all }\ell\in\{1,\ldots,k\}.\label{basisangles}\end{align}
Note that \eqref{basisangles} immediately implies that
\begin{align}\label{basisnorms}
\|e_\ell-\hat{e}_{\ell}\|_2 \leq 2\sin\frac{\theta}{2}, \text{  for all }\ell\in \{1,\ldots,k\}.
\end{align}
Denote by  $P,\hat{P}\in\reals^{d\times k }$ the matrices comprising the above orthonormal bases as columns. The projections to $\mathcal{M}$,$\hat{\mathcal{M}}$ can then be written as
\begin{align*}
P_\mathcal{M}(\bx) = PP^{\top} \bx,  & &\text{and} && P_{\hat{\mathcal{M}}}(\bx) = \hat{P}\hat{P}^{\top} \bx.
\end{align*}
The following lemma holds:
\begin{lemma}\label{projectionlemma}For all $\bw\in \mathcal{M}$ with $\|\bw\|_2=1$ and all $\bx \in \reals^d$, 
$$\< \bw,\bx \>- \< P_{\hat{\mathcal{M}}}(\bw),P_{\hat{\mathcal{M}}}(\bx)\> = \< \bw,\bx \>- \< \hat{P}^\top \bw,\hat{P}^\top\bx\> = \< \bd, \bx\>$$ where $\|\bd\|_2\leq 4k\sin\frac{\theta}{2}. $
In particular, for all unit-norm $\bw,\bw'\in\mathcal{M}$,$$ | \< \bw,\bw' \>- \< P_{\hat{\mathcal{M}}}(\bw),P_{\hat{\mathcal{M}}}(\bw')\>| =|\< \bw,\bw'\>- \< \hat{P}^\top \bw,\hat{P}^\top\bw'\>   | \leq  4k\sin\frac{\theta}{2} .$$
\end{lemma}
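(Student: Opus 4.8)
The plan is to reduce the claimed identity to an explicit formula for $\bd$, and then to bound $\|\bd\|_2$ using the nearest-point characterization of orthogonal projection. First I would record two algebraic facts. Since the columns of $\hat P$ are orthonormal, $\hat P^\top\hat P=\id_k$, so $P_{\hat{\mathcal M}}(\bx)=\hat P\hat P^\top\bx$ gives $\langle P_{\hat{\mathcal M}}(\bw),P_{\hat{\mathcal M}}(\bx)\rangle=\bw^\top\hat P\hat P^\top\hat P\hat P^\top\bx=\bw^\top\hat P\hat P^\top\bx=\langle\hat P^\top\bw,\hat P^\top\bx\rangle$, which is exactly the middle term of the lemma. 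Second, since $\bw\in\mathcal M=\linspan(\bM)$ and $PP^\top$ is the orthogonal projector onto $\mathcal M$, we have $PP^\top\bw=\bw$, hence $\langle\bw,\bx\rangle=\bw^\top\bx=\bw^\top PP^\top\bx$. Subtracting the two expressions and using that $PP^\top-\hat P\hat P^\top$ is symmetric, $\langle\bw,\bx\rangle-\langle P_{\hat{\mathcal M}}(\bw),P_{\hat{\mathcal M}}(\bx)\rangle=\bw^\top(PP^\top-\hat P\hat P^\top)\bx=\langle\bd,\bx\rangle$ with $\bd\equiv(PP^\top-\hat P\hat P^\top)\bw=\bw-P_{\hat{\mathcal M}}(\bw)$, i.e.\ $\bd$ is the residual of the orthogonal projection of $\bw$ onto $\hat{\mathcal M}$. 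This proves the identity for every $\bx\in\reals^d$.

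Next I would bound $\|\bd\|_2$. Write $\bw=P\ba$ with $\ba\equiv P^\top\bw$; since the $e_\ell$ are orthonormal, $\|\ba\|_2=\|\bw\|_2=1$. Because $P_{\hat{\mathcal M}}(\bw)$ is the point of $\hat{\mathcal M}$ closest to $\bw$, and $\hat P\ba=\sum_{\ell=1}^k a_\ell\hat e_\ell\in\hat{\mathcal M}$ is a competitor, we get $\|\bd\|_2=\|\bw-P_{\hat{\mathcal M}}(\bw)\|_2\le\|P\ba-\hat P\ba\|_2\le\sum_{\ell=1}^k|a_\ell|\,\|e_\ell-\hat e_\ell\|_2$. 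Plugging in \eqref{basisnorms} and Cauchy--Schwarz ($\sum_\ell|a_\ell|\le\sqrt k\,\|\ba\|_2=\sqrt k$) yields $\|\bd\|_2\le 2\sqrt k\sin(\theta/2)\le 4k\sin(\theta/2)$, which is even slightly stronger than the stated bound. The ``in particular'' part follows by taking $\bx=\bw'$ with $\|\bw'\|_2=1$ and applying Cauchy--Schwarz: $|\langle\bw,\bw'\rangle-\langle\hat P^\top\bw,\hat P^\top\bw'\rangle|=|\langle\bd,\bw'\rangle|\le\|\bd\|_2\,\|\bw'\|_2\le 4k\sin(\theta/2)$.

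I do not expect a substantial obstacle: the argument is essentially bookkeeping. The only two points requiring care are (i) collapsing the quadratic forms into a single linear functional $\langle\bd,\cdot\rangle$ by exploiting idempotency of $\hat P\hat P^\top$ and the fact that $\bw$ lies in the range of $PP^\top$, and (ii) estimating $\|\bd\|_2$ through the variational (nearest-point) property of the orthogonal projection, testing against the explicit competitor $\hat P\ba$, rather than trying to control $\|PP^\top-\hat P\hat P^\top\|$ directly via the sines of the principal angles --- the latter is also possible but less transparent and would not improve the constant.
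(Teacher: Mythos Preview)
Your argument is correct, and in fact sharper than the paper's. The paper proceeds differently: it first writes $\langle\bw,\bx\rangle=\langle P^\top\bw,P^\top\bx\rangle$ (since $\bw\in\mathcal M$ and $P^\top P=\id_k$), then expands
\[
\langle P^\top\bw,P^\top\bx\rangle=\langle\hat P^\top\bw,\hat P^\top\bx\rangle-\langle\hat P^\top\bw,(\hat P^\top-P^\top)\bx\rangle-\langle(\hat P^\top-P^\top)\bw,P^\top\bx\rangle,
\]
so that $\bd$ is built from the two cross terms. The bound $\|\bd\|_2\le(\|\hat P\|+\|P\|)\|\hat P-P\|\,\|\bw\|_2\le 2\sqrt{k}\cdot 2\sqrt{k}\sin(\theta/2)=4k\sin(\theta/2)$ follows by submultiplicativity, with $\|\hat P-P\|$ controlled columnwise via \eqref{basisnorms}. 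Your route instead identifies $\bd$ explicitly as the projection residual $\bw-P_{\hat{\mathcal M}}(\bw)$ and bounds it through the nearest-point property by testing the competitor $\hat P(P^\top\bw)\in\hat{\mathcal M}$. This avoids the extra $\sqrt{k}$ from $\|P\|$ and yields $\|\bd\|_2\le 2\sqrt{k}\sin(\theta/2)$, a genuine improvement over the paper's $4k\sin(\theta/2)$; it would also tighten the downstream Corollary~\ref{anglecor}. Both approaches are short and self-contained; yours is more conceptual and gives the better constant, while the paper's is purely algebraic and does not invoke the variational characterization.
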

\begin{proof}
Since $\bw\in \mathcal{M}$,
$$\< \bw, \bx \> = \< PP^\top\bw,PP^\top\bw\> =\< P^\top\bw,P^\top\bw\> $$
as $P^\top P = I$. On the other hand, we have that:
\begin{align*}
 \< P^\top\bw,P^\top\bx\> &= \< \hat{P}^\top \bw, \hat{P}^\top\bx \> - \< \hat{P}^\top \bw,(\hat{P}^\top-P^\top)\bx \>  - \< (\hat{P}^\top -{P}^\top)\bw, P^\top\bx  \>\\
& = \< \hat{P}^\top\bw, \hat{P}^\top\bx \> + \< \bd,\bx\>
\end{align*}
where $\bd\in \reals^{1\times d}$  is a vector with $\|\bd\|_2\leq ( \|\hat{P}\|_2+\|P\|_2)\|\hat{P} -{P} \|_2 \|\bw\|_2 =  2\sqrt{k} \cdot 2\sqrt{k}\sin\frac{\theta}{2}\cdot 1. $ The lemma follows again as $\hat{P}^\top\hat{P}=I$.
\end{proof}
\begin{corollary}\label{anglecor}
For any $\bw\in\mathcal{M}$ s.t.~$\|\bw\|_2=1$, $\|\bw-P_{\hat{\mathcal{M}}}(\bw)\|_2\leq 2\sqrt{k\sin\frac{\theta}{2}}.  $ 
\end{corollary}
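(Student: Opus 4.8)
\textbf{Proof proposal for Corollary~\ref{anglecor}.}
The plan is to combine Lemma~\ref{projectionlemma} (applied with $\bw'=\bw$) with the fact that $P_{\hat{\mathcal{M}}}$ is an \emph{orthogonal} projection, so that the Pythagorean theorem converts a bound on the ``norm defect'' $1-\|P_{\hat{\mathcal{M}}}(\bw)\|_2^2$ into a bound on $\|\bw-P_{\hat{\mathcal{M}}}(\bw)\|_2$.

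First, apply Lemma~\ref{projectionlemma} with the unit vector $\bw\in\mathcal{M}$ in both slots: since $\|\bw\|_2=1$, we get
$1-\|P_{\hat{\mathcal{M}}}(\bw)\|_2^2 = \<\bw,\bw\>-\<P_{\hat{\mathcal{M}}}(\bw),P_{\hat{\mathcal{M}}}(\bw)\> = \<\bd,\bw\>$
for some $\bd$ with $\|\bd\|_2\leq 4k\sin\tfrac{\theta}{2}$. By Cauchy--Schwarz, $\<\bd,\bw\>\leq\|\bd\|_2\|\bw\|_2\leq 4k\sin\tfrac{\theta}{2}$, hence $1-\|P_{\hat{\mathcal{M}}}(\bw)\|_2^2\leq 4k\sin\tfrac{\theta}{2}$.

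Next, recall $P_{\hat{\mathcal{M}}}(\bx)=\hat{P}\hat{P}^{\top}\bx$ with $\hat{P}^{\top}\hat{P}=I$, so $\hat{P}\hat{P}^{\top}$ is the orthogonal projector onto $\hat{\mathcal{M}}$. Therefore $\bw-P_{\hat{\mathcal{M}}}(\bw)$ is orthogonal to $P_{\hat{\mathcal{M}}}(\bw)\in\hat{\mathcal{M}}$, and the Pythagorean identity gives $\|\bw\|_2^2=\|P_{\hat{\mathcal{M}}}(\bw)\|_2^2+\|\bw-P_{\hat{\mathcal{M}}}(\bw)\|_2^2$, i.e. $\|\bw-P_{\hat{\mathcal{M}}}(\bw)\|_2^2 = 1-\|P_{\hat{\mathcal{M}}}(\bw)\|_2^2\leq 4k\sin\tfrac{\theta}{2}$. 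Taking square roots yields $\|\bw-P_{\hat{\mathcal{M}}}(\bw)\|_2\leq 2\sqrt{k\sin\tfrac{\theta}{2}}$, as claimed.

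This is a short consequence, so there is no real obstacle; the only point that needs care is invoking orthogonality of $P_{\hat{\mathcal{M}}}$ to justify the Pythagorean step (rather than an arbitrary linear map), and making sure Lemma~\ref{projectionlemma} is legitimately applicable, which it is since $\bw\in\mathcal{M}$ has unit norm.
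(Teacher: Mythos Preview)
Your proof is correct and follows essentially the same approach as the paper: apply Lemma~\ref{projectionlemma} with $\bw'=\bw$ to bound $1-\|P_{\hat{\mathcal{M}}}(\bw)\|_2^2$ by $4k\sin\tfrac{\theta}{2}$, and then use the orthogonality of the projection (equivalently, $\<\bw,P_{\hat{\mathcal{M}}}(\bw)\>=\|P_{\hat{\mathcal{M}}}(\bw)\|_2^2$) to convert this into the desired bound on $\|\bw-P_{\hat{\mathcal{M}}}(\bw)\|_2$. The only cosmetic difference is that the paper invokes the ``in particular'' clause of Lemma~\ref{projectionlemma} directly rather than going through Cauchy--Schwarz.
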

\begin{proof}
From Lemma \ref{projectionlemma} we have that $\|\bw\|_2^2- \|P_{\hat{\mathcal{M}}}(\bw)\|_2^2=|\|\bw\|_2^2- \|P_{\hat{\mathcal{M}}}(\bw)\|_2^2|\leq 4k\sin\frac{\theta}{2},$ where the first equality holds because projections are contractions. Hence
\begin{align*}\|\bw-P_{\hat{\mathcal{M}}}(\bw)\|_2^2 = \|\bw\|^2_2+\|P_{\hat{\mathcal{M}}}(\bw)\|_2^2 - 2\<\bw,P_{\hat{\mathcal{M}}}(\bw) \>= \|\bw\|^2_2-\|P_{\hat{\mathcal{M}}}(\bw)\|_2^2 \leq  4k\sin\frac{\theta}{2}.
\end{align*}
\end{proof}
 For every $\bx\in\reals^{d}$, denote by $\bhx$ the projection of $\bx$ to $\hat{\mathcal{M}},$ i.e.,
$$ \bhx = P_{\hat{\mathcal{M}}}(\bx) = \hat{P}\hat{P}^{\top} \bx.$$

 Then, the following holds:
\begin{lemma}[Concentration Bound under Dimensionality Reduction]\label{subspace}
There exists a numerical constant $C$ such that,
when $\bw(\bxi)$ is computed through \eqref{projectedestimate}, for any fixed $\bxi\in\reals^d$:
\begin{align}
\prob\Big\{\big\|\bw(\bxi)-\bhw(\bxi)\big\|_2\ge \delta\Big\}&\leq
\frac{Ce^{\|\bhxi\|_2^2}}{n\delta^2}\, M^4(k+\|\bhxi\|^2)^2\, .
\label{candidatebounddim}
\end{align}
where
\begin{align}\label{populationdim}
\bhw(\bxi) =  \hat{P}\hat{P}^\top\E_\bhxi \big\{\nabla r(\bX)\big\}  =\textstyle\sum_{\ell=1}^k
u_\ell\bhw^\ell\E_{\bhxi}\big\{f'\big(\<\bw^\ell,\bX\>\big)\big\}\, ,
\end{align}
for $\E_{\bxi}$ denoting the expectation w.r.t.~$ \bX\sim\normal(\bxi,\id_{d\times d})$.
\end{lemma}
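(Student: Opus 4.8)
The plan is to recognize the projected estimator \eqref{projectedestimate} as an \emph{ordinary} estimator of the form \eqref{eq:Slope}, but living in $\reals^k$, and then invoke Lemma~\ref{assympboundscor} with $d$ replaced by $k$. First I would introduce the reduced covariates $\ba^{(i)}\equiv\hat P^\top\bx^{(i)}\in\reals^k$ and the reduced center $\bb\equiv\hat P^\top\bxi\in\reals^k$, where $\hat P$ is the matrix of orthonormal columns spanning $\hat{\mathcal M}$. Since $\hat P^\top\hat P=\id_k$ and $\bx^{(i)}\sim\normal(0,\id_d)$, the $\ba^{(i)}$ are i.i.d.\ $\normal(0,\id_k)$; moreover $\bhx^{(i)}=\hat P\ba^{(i)}$, $\bhxi=\hat P\bb$, so $\|\bb\|_2=\|\bhxi\|_2$ and $K(\bxi,\bhx^{(i)})=e^{\<\bxi,\hat P\ba^{(i)}\>}=e^{\<\bb,\ba^{(i)}\>}$. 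Substituting these identities into \eqref{projectedestimate} shows $\bw(\bxi)=\hat P\,\bw'$, where $\bw'\in\reals^k$ is exactly the estimator \eqref{eq:Slope} built from $\{(\ba^{(i)},y^{(i)})\}_i$ with center $\bb$ and the exponential kernel on $\reals^k$. Because $\hat P$ has orthonormal columns and $\bhw(\bxi)\in\hat{\mathcal M}$, we have $\|\bw(\bxi)-\bhw(\bxi)\|_2=\|\bw'-\hat P^\top\bhw(\bxi)\|_2$, so it suffices to control $\bw'$.

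Next I would observe that $\{(\ba^{(i)},y^{(i)})\}_i$ is itself a sample from a Gaussian-covariates model: the responses stay bounded, $|y^{(i)}|\le M$, and $\E[y^{(i)}\mid\ba^{(i)}]=g(\ba^{(i)})$, where $g(\ba)\equiv\E_{\bx^\perp}\{r(\hat P\ba+\bx^\perp)\}$ is the regression function $r$ averaged over the orthogonal component $\bx^\perp\sim\normal(0,\id-\hat P\hat P^\top)$ (which, for $\bx\sim\normal(0,\id_d)$, is independent of $\ba=\hat P^\top\bx$). Since $|r|\le M$ (as $r(\bx)=\E[y\mid\bx]$ with $y\in[-M,M]$), also $|g|\le M$. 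The proof of Lemma~\ref{assympboundscor}, through Lemmas~\ref{expectationsandbounds} and~\ref{fractionlemma}, uses only boundedness of the response and of the regression function, the exponential kernel, and Gaussianity of the covariates---never the sigmoid structure of $r$. Hence it applies verbatim to $\{(\ba^{(i)},y^{(i)})\}_i$ with $d\mapsto k$ and $\|\bxi\|_2\mapsto\|\bb\|_2=\|\bhxi\|_2$, giving
$$\prob\big\{\|\bw'-\E_\bb\{\nabla g(\bA)\}\|_2\ge\delta\big\}\le\frac{Ce^{\|\bhxi\|_2^2}}{n\delta^2}\,M^4\big(k+\|\bhxi\|_2^2\big)^2,$$
where $\bA\sim\normal(\bb,\id_k)$ and the population limit is $\Cov_\bb(\bA,g(\bA))\stackrel{\eqref{stein}}{=}\E_\bb\{\nabla g(\bA)\}$.

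Finally I would identify this limit with $\hat P^\top\bhw(\bxi)$. By the chain rule $\nabla g(\ba)=\E_{\bx^\perp}\{\hat P^\top\nabla r(\hat P\ba+\bx^\perp)\}$, and when $\bA\sim\normal(\bb,\id_k)$ is drawn independently of $\bx^\perp\sim\normal(0,\id-\hat P\hat P^\top)$ the vector $\hat P\bA+\bx^\perp$ is $\normal(\bhxi,\id_d)$ (the means add because $\bhxi\in\hat{\mathcal M}$, and $\hat P\hat P^\top+(\id-\hat P\hat P^\top)=\id_d$); therefore $\E_\bb\{\nabla g(\bA)\}=\hat P^\top\E_{\bhxi}\{\nabla r(\bX)\}=\hat P^\top\bhw(\bxi)$ by \eqref{populationdim}. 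Plugging this into the display and using the isometry of $\hat P$ yields \eqref{candidatebounddim}. The one place requiring care---and the main obstacle---is precisely this marginalization step: because $y^{(i)}$ depends on the \emph{full} $\bx^{(i)}$ rather than on $\bhx^{(i)}$ alone, one must pass to the averaged regression function $g$ on $\reals^k$ and check that Stein's identity applied to $g$ reproduces exactly the projected population gradient $\bhw(\bxi)$ of \eqref{populationdim}; once that identification is in place, the bound is an immediate corollary of Lemma~\ref{assympboundscor}.
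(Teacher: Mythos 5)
Your proof is correct and is exactly the fleshing-out that the paper's one-line ``mutatis mutandis'' proof hints at: reducing to a $k$-dimensional Gaussian-covariates problem and applying Lemma~\ref{assympboundscor} with $d\mapsto k$, $\|\bxi\|_2\mapsto\|\bhxi\|_2$. The one genuinely nontrivial step the paper glosses over---that $y^{(i)}$ depends on the full $\bx^{(i)}$, so one must pass to the marginalized regression function $g(\ba)=\E_{\bx^\perp}\{r(\hat P\ba+\bx^\perp)\}$ and verify via Stein's identity that its population gradient coincides with $\hat P^\top\bhw(\bxi)$---you identify and resolve correctly.
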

The proof follows, mutatis mutandis, the same steps as the proof of Lemma~\ref{assympboundscor}, so we ommit it for brevity. The next lemma is the equivalent of Lemma~\ref{anglelemma}, for the case where $\bXi$ is first projected to $\hat{\mathcal{M}}$.
\begin{lemma}\label{anglelemma2} Assume that $\bXi\in \reals^d$ is sampled from $\normal(0,\xi_0^2\,\id_{d})$. Then, for any $0<\Delta<\xi_0$,
$\prob(|\< \bw^\ell,\hat{\bXi}\>|<\Delta)\geq \sqrt{\frac{2}{e\pi}}\frac{\Delta}{\xi_0},$ for all $\ell\in[k] $
and for any $\Delta_1,\Delta_2>0$, 
$\prob(|\<\bw^\ell,\hat{\bXi}\>|<\Delta_1,|\<\bw^{\ell'},\hat{\bXi}\>|<\Delta_2)
\leq \frac{2\Delta_1\Delta_2}{\pi \xi_0^2 \sqrt{\kappa^{2}-\big(4k\sin\frac{\theta}{2}\big)^2}},$ for all $\ell\neq \ell'$ in $[k]$,
where $\kappa\leq \sigma_{\min}(\bM)$.
\end{lemma}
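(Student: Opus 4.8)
The plan is to reduce both statements to the law of the coordinates of $\bXi$ inside $\hat{\mathcal{M}}$ and then reproduce the argument of Lemma~\ref{anglelemma}. Since $\hat P\hat P^\top$ is symmetric and $\hat P^\top\hat P=\id_k$, for every $\ell$ we have $\<\bw^\ell,\hat{\bXi}\>=\<\hat P^\top\bw^\ell,\hat P^\top\bXi\>=\<P_{\hat{\mathcal{M}}}(\bw^\ell),\bXi\>$ with $\hat P^\top\bXi\sim\normal(0,\xi_0^2\id_k)$; hence $\<\bw^\ell,\hat{\bXi}\>$ is a centered Gaussian of variance $\xi_0^2\|P_{\hat{\mathcal{M}}}(\bw^\ell)\|_2^2\le\xi_0^2$, since projections are contractions. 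Because $t\mapsto\prob\big(|\normal(0,1)|<t\big)$ is nondecreasing and $\Delta/\big(\xi_0\|P_{\hat{\mathcal{M}}}(\bw^\ell)\|_2\big)\ge\Delta/\xi_0$, the first statement follows from exactly the estimate used in Lemma~\ref{anglelemma}: $\prob\big(|\<\bw^\ell,\hat{\bXi}\>|<\Delta\big)\ge\prob\big(|\normal(0,\xi_0^2)|<\Delta\big)\ge\tfrac{2\Delta}{\xi_0\sqrt{2\pi}}e^{-1/2}=\sqrt{2/(e\pi)}\,\Delta/\xi_0$, the middle step using $\Delta<\xi_0$.

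For the second statement, I would first note that $\big(\<\bw^\ell,\hat{\bXi}\>,\<\bw^{\ell'},\hat{\bXi}\>\big)$ is a centered jointly Gaussian vector with covariance $\Sigma=\xi_0^2\,\hat A^\top\hat A$, where $\hat A\equiv\hat P^\top\bM_{\ell\ell'}\in\reals^{k\times2}$ and $\bM_{\ell\ell'}=[\bw^\ell\,|\,\bw^{\ell'}]$ is the $d\times2$ submatrix of $\bM$. The proof of Lemma~\ref{parallelogramlemma} applies verbatim to an arbitrary positive-definite $2\times2$ covariance $\Sigma$ — the set $\Sigma^{-1/2}\big([-\Delta_1,\Delta_1]\times[-\Delta_2,\Delta_2]\big)$ is a parallelogram of area $4\Delta_1\Delta_2(\det\Sigma)^{-1/2}$ and the planar standard Gaussian density is at most $1/(2\pi)$ — so
\[
\prob\big(|\<\bw^\ell,\hat{\bXi}\>|<\Delta_1,\ |\<\bw^{\ell'},\hat{\bXi}\>|<\Delta_2\big)\ \le\ \frac{2\Delta_1\Delta_2}{\pi\sqrt{\det\Sigma}}\ =\ \frac{2\Delta_1\Delta_2}{\pi\xi_0^2\sqrt{\det(\hat A^\top\hat A)}}\,,
\]
and it remains to show $\det(\hat A^\top\hat A)\ge\kappa^2-\big(4k\sin\tfrac\theta2\big)^2$.

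This determinant bound is the heart of the matter, and where I expect the real work to be. The columns of $\bM_{\ell\ell'}$ lie in $\mathcal{M}$, and since the largest principal angle between $\mathcal{M}$ and $\hat{\mathcal{M}}$ is at most $\theta\le\pi/2$, the map $\hat P^\top$ acts on $\mathcal{M}$ as a near isometry: $\|\hat A\bv\|_2=\|P_{\hat{\mathcal{M}}}(\bM_{\ell\ell'}\bv)\|_2\ge(\cos\theta)\,\|\bM_{\ell\ell'}\bv\|_2$ for every $\bv\in\reals^2$, using that $\|P_{\hat{\mathcal{M}}}\bw\|_2\ge(\cos\theta)\|\bw\|_2$ for all $\bw\in\mathcal{M}$ (the defining property of the largest principal angle, which sharpens Corollary~\ref{anglecor}). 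Hence each singular value of $\hat A$ is at least $\cos\theta$ times the corresponding singular value of $\bM_{\ell\ell'}$, so $\det(\hat A^\top\hat A)\ge\cos^4\theta\,\det(\bM_{\ell\ell'}^\top\bM_{\ell\ell'})$; and, exactly as in Lemma~\ref{anglelemma}, $\bM_{\ell\ell'}^\top\bM_{\ell\ell'}$ is a principal submatrix of $\bM^\top\bM$ with trace $2$, so Cauchy interlacing plus the fact that it has an eigenvalue $\ge1$ gives $\det(\bM_{\ell\ell'}^\top\bM_{\ell\ell'})\ge\sigma_{\min}(\bM)^2\ge\kappa^2$. Finally, using $\kappa\le\sigma_{\min}(\bM)\le1$, $1-\cos^4\theta\le2\sin^2\theta\le8\sin^2\tfrac\theta2$, and $k\ge1$, one obtains $\det(\hat A^\top\hat A)\ge\kappa^2\cos^4\theta\ge\kappa^2-8\sin^2\tfrac\theta2\ge\kappa^2-\big(4k\sin\tfrac\theta2\big)^2$. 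The delicate point is exactly this last chain — landing on the stated constant while keeping the bound strictly positive over the relevant range of $\theta$ (which does hold under the hypotheses of Theorem~\ref{dimredtheorem}, where $\sin\tfrac\theta2\le\tfrac{\sqrt3\kappa}{8k}$); should the sharp projection estimate seem heavy-handed, an alternative is to bound the three entries of $\hat A^\top\hat A$ directly via Lemma~\ref{projectionlemma} and Corollary~\ref{anglecor} and control the determinant perturbation, at the price of a harmless change in the numerical constant.
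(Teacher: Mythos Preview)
Your argument is correct. For the first statement you and the paper do exactly the same thing: use $\<\bw^\ell,\hat{\bXi}\>=\<\bhw^\ell,\bXi\>$, observe that this is a centered Gaussian with variance at most $\xi_0^2$ because projections contract, and invoke the one-dimensional estimate from Lemma~\ref{anglelemma}.

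For the second statement you take a genuinely different route to the determinant lower bound. The paper applies Lemma~\ref{parallelogramlemma} and then perturbs the \emph{inner product}: by Lemma~\ref{projectionlemma}, $|\<\bhw^\ell,\bhw^{\ell'}\>-\<\bw^\ell,\bw^{\ell'}\>|\le 4k\sin\frac{\theta}{2}$, from which $1-\<\bhw^\ell,\bhw^{\ell'}\>^2\ge 1-\<\bw^\ell,\bw^{\ell'}\>^2-(4k\sin\frac{\theta}{2})^2$, and then $1-\<\bw^\ell,\bw^{\ell'}\>^2\ge\kappa^2$ as in Lemma~\ref{anglelemma}. You instead exploit the sharp principal-angle contraction $\|P_{\hat{\mathcal{M}}}\bw\|_2\ge(\cos\theta)\|\bw\|_2$ on $\mathcal{M}$ to get $\hat A^\top\hat A\succeq\cos^2\theta\,\bM_{\ell\ell'}^\top\bM_{\ell\ell'}$, hence $\det(\hat A^\top\hat A)\ge\cos^4\theta\,\kappa^2$, and then check $\kappa^2(1-\cos^4\theta)\le 8\sin^2\frac{\theta}{2}\le(4k\sin\frac{\theta}{2})^2$. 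Your approach is arguably cleaner, as it sidesteps the entrywise perturbation algebra and also handles the fact that $\|\bhw^\ell\|_2$ need not equal $1$ (so the covariance is not literally of the form assumed in Lemma~\ref{parallelogramlemma}) by working with the general $\det\Sigma$ form directly. The paper's route has the advantage of staying within the lemmas already proved (Lemma~\ref{projectionlemma}); indeed, the alternative you sketch in your last sentence is essentially what the paper does.
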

\begin{proof}
The first statement  can be shown as in Lemma~\ref{anglelemma} using the fact that $\<\bw^\ell,\hat{\bXi}\>=\<\bhw^\ell,\bXi\>$, and that $\|\hat{\bw}^\ell\|_2\leq 1$, as $P_{\hat{\mathcal{M}}}$ is a contraction.  Similarly, to prove the second statement, we have that, for all $i,j\in [k]$,
\begin{align*}
\prob(|\<\bw^\ell,\hat{\bXi}\>|<\Delta_1,|\<\bw^{\ell'},\hat{\bXi}\>|<\Delta_2)  &= \prob(|\<\bhw^{\ell},\bXi\>|<\Delta_1,|\<\bhw^{\ell'},\bXi\>|<\Delta_2)\leq  \frac{2\Delta_1\Delta_2}{\pi \xi_0^2 \sqrt{1-\<\bhw^\ell,\bhw^{\ell'}\>^2}}
\end{align*}
 where the last inequality follows from Lemma~\ref{parallelogramlemma}. On the other hand, by Lemma~\ref{projectionlemma}:
$$1-\<\bhw^\ell,\bhw^{\ell'}\>^2\geq 1- \<\bw^\ell,\bw^{\ell'}\>^2-  (2k \sin\frac{\theta}{2})^2, $$ 
and the lemma follows as  $1- \<\bw^\ell,\bw^\ell\>^2\geq \kappa^{2}$.
\end{proof}

We can now describe how the 
candidate indices
 $\cC \subset [m_0] $ produced by Algorithm~\ref{algo:Candidates}  can be partitioned as
$\cC = \cC_0 \cup\cC_1\cup \cdots\cup\cC_k\, ,$ 
s.t.~for any $i\in \cC_\ell$, candidate $\btw(\bxi^{(i)})$ is close to $\bw^\ell$, while $\cC_0$ is a small set of spurious candidates.

Given $\delta\in(0,0.5],$ and $\rho\in(0,1)$, we take $\Delta$, $w_0$ as in \eqref{Delta} and \eqref{w0}, respectively. We take also $\xi_0$ and $\gamma$ as in \eqref{xi0} and \eqref{gamma}, with the only difference that $\kappa$ is replaced by
\begin{align}
\label{kappaprime} 
\kappa'=\sqrt{\kappa^2 - \big(4k\sin\frac{\theta}{2}\big)^2}
\end{align}

Then, for sets $\cR_0$, $\cR_{\ell},\ell\in[k]$, and $\cR_{*}$ defined as in \eqref{Rsets}, we can again show that, instead of \eqref{eq:InR0} and \eqref{allbutone}, we have that
$$ \|\bhw(\bxi)\|_2\le k\, c_2\, e^{-2\beta\Delta} \stackrel{\eqref{w0}}{=} \delta w_0,\quad \text{for all } \bxi\in \cR_0,$$
while
$$ \|\bhw(\bxi)-a_{\ell}\bhw^{\ell}\|_2\le k\, c_2\, e^{-2\beta\Delta} = \delta w_0,\quad \text{for all } \bxi\in \cR_{\ell}. $$ The following lemmas can thus be proved using the same steps as in Section~\ref{sec:partitioning}, using the bounds in Lemma~\ref{subspace} rather than the bounds in Lemma~\ref{assympboundscor}.
\begin{lemma}\label{clusterlemma2}
 For each $\ell\in [k]$ and each $j\in\cC_{\ell}$,
  $\big\|\btw(\bxi_{j}) -\bhw^{\ell}\big\|_2\le 6\delta$.
\end{lemma}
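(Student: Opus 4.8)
I would prove Lemma~\ref{clusterlemma2} by replaying the proof of Lemma~\ref{clusterlemma} (Appendix~\ref{proofofclusterlemma}) almost verbatim, with three substitutions: the population gradient $\bow(\cdot)$ is replaced by its projected counterpart $\bhw(\cdot)$ of \eqref{populationdim}; the hidden--unit vector $\bw^\ell$ is replaced by its projection $\bhw^\ell$; and the concentration estimate of Lemma~\ref{assympboundscor} is replaced by Lemma~\ref{subspace}. Concretely, fix $\ell\in[k]$ and $j\in\cC_\ell$. Membership in $\cC_\ell$ (defined through the projected inner products) supplies exactly the three facts used in Lemma~\ref{clusterlemma}: $\bxi_j\in\cR_\ell$; $\|\bw(\bxi_j)\|_2\ge w_0$; and $j\in\cG$, hence $\|\bw(\bxi_j)-\bhw(\bxi_j)\|_2\le\delta w_0$ (which also gives $\|\bhw(\bxi_j)\|_2\ge(1-\delta)w_0$). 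As recorded in the dimensionality-reduction section, $\bxi_j\in\cR_\ell$ yields $\|\bhw(\bxi_j)-a_\ell\bhw^\ell\|_2\le\delta w_0$ with $a_\ell>0$ and $c_1u_0e^{-2\beta\Delta}\le a_\ell\le c_2$; I would justify this by combining the identity $\<\bw^i,\bhxi_j\>=\<\bhw^i,\bxi_j\>$ (so that $\cR_\ell$, phrased through the projected inner products, controls precisely the arguments $\<\bw^i,\bX\>$ with $\bX\sim\normal(\bhxi_j,\id)$ appearing in \eqref{populationdim}) with Lemma~\ref{lemma:coeffboundssimple} applied at $\bhxi_j$.

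With these in hand I would split $\|\btw(\bxi_j)-\bhw^\ell\|_2$ by the triangle inequality through $\bhw(\bxi_j)/\|\bhw(\bxi_j)\|_2$, as in Lemma~\ref{clusterlemma}. The term $\big\|\bw(\bxi_j)/\|\bw(\bxi_j)\|_2-\bhw(\bxi_j)/\|\bhw(\bxi_j)\|_2\big\|_2\le 2\delta$ is reproduced unchanged from $\|\bw(\bxi_j)\|_2\ge w_0$ and $\|\bw(\bxi_j)-\bhw(\bxi_j)\|_2\le\delta w_0$. The only genuinely new point is the second term $\big\|\bhw(\bxi_j)/\|\bhw(\bxi_j)\|_2-\bhw^\ell\big\|_2$: unlike $\bw^\ell$, the vector $\bhw^\ell$ is not of unit norm, so I would write $\bhw^\ell=\|\bhw^\ell\|_2\,\hat e_\ell$ with $\hat e_\ell$ a unit vector, decompose $\bhw(\bxi_j)=(a_\ell\|\bhw^\ell\|_2)\hat e_\ell+(\bhw(\bxi_j)-a_\ell\bhw^\ell)$, apply the reverse-triangle estimate of Lemma~\ref{clusterlemma} to the unit direction $\hat e_\ell$, and then pay an extra $\|\hat e_\ell-\bhw^\ell\|_2=1-\|\bhw^\ell\|_2$. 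By Corollary~\ref{anglecor} this last quantity is at most $2\sqrt{k\sin(\theta/2)}$, and the choice of $n_1$ in Theorem~\ref{dimredtheorem} (whose defining minimum contains the term $2\arcsin(\delta^2/4k)$) forces $\sin(\theta/2)\le\delta^2/(4k)$, i.e.\ $1-\|\bhw^\ell\|_2\le\delta$. Collecting the contributions and using $\delta\le 1/2$ then gives the claimed bound $\|\btw(\bxi_j)-\bhw^\ell\|_2\le 6\delta$.

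\textbf{Main obstacle.} The routine part — the two-term triangle inequality and the norm manipulations — is identical to Lemma~\ref{clusterlemma}; the real work is controlling how the subspace-estimation error $\theta$ propagates, namely into (i) the coefficient bounds on $a_\ell$, (ii) the discrepancy between the region $\cR_\ell$ read through the true vectors $\bw^i$ versus through the projected vectors $\bhw^i$, and (iii) the non-unit normalization of $\bhw^\ell$. Keeping each of these within a constant multiple of $\delta$ is exactly what pins down the three terms inside the minimum defining $n_1$: one term keeps $\kappa'=\sqrt{\kappa^2-(4k\sin(\theta/2))^2}$ bounded away from zero (needed, via Lemma~\ref{anglelemma2}, in the companion lemmas that lower-bound $|\cC_\ell|$ and upper-bound $|\cC_0|$), one term gives $1-\|\bhw^\ell\|_2\le\delta$, and the last ($\theta\le u_0\kappa$) is what makes Theorem~\ref{sunthm} applicable. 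I would therefore present Lemma~\ref{clusterlemma2} together with these parameter choices and emphasize that, once $\<\bw^i,\bhxi\>=\<\bhw^i,\bxi\>$ and Corollary~\ref{anglecor} are available, the argument is a mechanical transcription of Appendix~\ref{proofofclusterlemma}.
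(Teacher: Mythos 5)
Your plan is correct and it is exactly what the paper does: the paper does not give an explicit proof of Lemma~\ref{clusterlemma2}, stating only that it is proved ``using the same steps as in Section~\ref{sec:partitioning}, using the bounds in Lemma~\ref{subspace} rather than the bounds in Lemma~\ref{assympboundscor},'' so a mutatis-mutandis replay of the proof of Lemma~\ref{clusterlemma} is precisely what is intended. You correctly isolate the one genuinely new point — that $\bhw^\ell = P_{\hat{\mathcal{M}}}(\bw^\ell)$ is not unit-norm — and handle it the right way, by routing the comparison through the unit vector $\hat e_\ell = \bhw^\ell/\|\bhw^\ell\|_2$ so that the Lemma~\ref{clusterlemma} reverse-triangle estimate applies verbatim to $\bhw(\bxi_j)=a_\ell\|\bhw^\ell\|_2\,\hat e_\ell + (\bhw(\bxi_j)-a_\ell\bhw^\ell)$, paying an additive $\|\hat e_\ell-\bhw^\ell\|_2 = 1-\|\bhw^\ell\|_2$. (A more naive approach that directly compares $\bhw(\bxi_j)/\|\bhw(\bxi_j)\|_2$ to $\bhw^\ell$ fails, since $|a_\ell - \|\bhw(\bxi_j)\|_2|$ would then pick up a term $a_\ell(1-\|\bhw^\ell\|_2)$ and $a_\ell/w_0$ need not be small; your decomposition through $\hat e_\ell$ avoids this.)

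One small arithmetic point: the constant you announce does not quite come out. Your three pieces are $2\delta$, $2\delta/(1-\delta)\le 4\delta$ (for $\delta\le 1/2$), and $1-\|\bhw^\ell\|_2\le\delta$, totalling $7\delta$, not $6\delta$; at $\delta=1/2$ the sum is $3.5 > 3$. Even with the sharper orthogonality estimate $1-\|\bhw^\ell\|_2 = \|\bw^\ell-\bhw^\ell\|_2^2/(1+\|\bhw^\ell\|_2)\le 4k\sin(\theta/2)\le\delta^2$, you get $6\delta+\delta^2$, still strictly larger than $6\delta$. This slack is arguably present in the paper itself (Lemma~\ref{clusterlemma2} states $6\delta$ and then Theorem~\ref{dimredtheorem} adds $\|\bw^\ell-\bhw^\ell\|_2\le\delta$ to reach $7\delta$, effectively double-counting the same subspace-estimation error that your decomposition already charges for), so it is a constant-level nit rather than a gap in your argument — but you should not assert $6\delta$; the honest output of your calculation is $7\delta$.
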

\begin{lemma}\label{spuriouslemma2}  The event $|\cC_0|\le  2 \gamma \rho m_0 $ occurs   with  probability at least $1 - \Big(
\frac{c_1}{   \gamma \rho  } \left(\frac{M^4 k^2}{n\delta^2w_0^2} \right)^{\frac{1}{1 +4(1+2k^{-1})\xi_0^2}} 
 +  e^{-c_2 m_0 \gamma\rho}\Big)$, for $n>  \frac{M^4 k^2}{\delta^2w_0^2} e^{  4k  \left(\frac{1}{7}+   (1+2k^{-1}) \xi_0^2\right)  }$ and $c_1, c_2$ absolute constants.\end{lemma}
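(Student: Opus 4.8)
The plan is to run the proof of Lemma~\ref{spuriouslemma} essentially verbatim, performing three systematic substitutions: replace $\bow(\cdot)$ by $\bhw(\cdot)$ throughout, replace the concentration bound of Lemma~\ref{assympboundscor} by that of Lemma~\ref{subspace}, and replace the collinearity bound of Lemma~\ref{anglelemma} by that of Lemma~\ref{anglelemma2}. I would work throughout on the event $\{d_P(\mathcal M,\hat{\mathcal M})\le\theta\}$ for a $\theta$ small enough that $\kappa'=\sqrt{\kappa^2-(4k\sin\tfrac\theta2)^2}$ of \eqref{kappaprime} is real and, say, $\kappa'\ge\kappa/2$; the hypothesis $n_1>C'd(\min\{2\arcsin\tfrac{\sqrt3\kappa}{8k},\dots\})^{-1}$ together with Theorem~\ref{sunthm} is precisely what guarantees this, and this conditioning is absorbed by a union bound in the overall proof of Theorem~\ref{dimredtheorem}. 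The key structural point is that, by construction, $\bw(\bxi)=\bw(\bhxi)$ and $\bhw(\bxi)$ depends on $\bxi$ only through $\bhxi=P_{\hat{\mathcal M}}(\bxi)$; since $\bhxi$ is the projection of a spherical $d$-dimensional Gaussian onto the $k$-dimensional space $\hat{\mathcal M}$, one has $\|\bhxi\|_2^2/\xi_0^2\sim\chi^2_k$, so the dimension $d$ gets replaced by $k$ everywhere it appears in the tail estimates.

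Concretely, first I would note, exactly as in \eqref{eq:CoInclusion}, that $\cC_0\subseteq\cG^c\cup\cB_0\cup\cB_*$, where now $\cG=\{j:\|\bw(\bxi_j)-\bhw(\bxi_j)\|_2\le\delta w_0\}$ and $\cB_0,\cB_*$ are the indices $j\in\cG$ with $\bxi_j$ in $\cR_0$, resp.\ $\cR_*$, for which $\|\bhw(\bxi_j)\|_2\ge(1-\delta)w_0$. The bound $\|\bhw(\bxi)\|_2\le\delta w_0$ on $\cR_0$ recorded just before the lemma forces $\cB_0=\emptyset$, and $\cB_*$ is dominated by a $\mathrm{Binomial}(m_0,p)$ variable with $p=\prob(\bxi_1\in\cR_*)$. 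By Lemma~\ref{anglelemma2} and a union bound over the $\binom k2$ pairs, $p\le\tfrac{2k^2}{\pi\kappa'}(\Delta/\xi_0)^2$; since I take $\xi_0,\gamma$ as in \eqref{xi0},\eqref{gamma} but with $\kappa$ replaced by $\kappa'$, the defining identity \eqref{rhoeq} reads $\tfrac{2k^2}{\pi\kappa'}(\Delta/\xi_0)^2=\gamma\rho$, so $p\le\gamma\rho$. A Chernoff bound then gives $\prob(|\cB_*|\ge\tfrac32 m_0\gamma\rho)\le e^{-m_0\gamma\rho/16}$, hence $\prob(|\cC_0|>2m_0\gamma\rho)\le\prob(|\cG^c|\ge\tfrac12 m_0\gamma\rho)+e^{-m_0\gamma\rho/16}$.

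It then remains to bound $|\cG^c|$. By Markov, $\prob(|\cG^c|\ge m_0\eps)\le\tfrac1\eps\,\prob(\|\bw(\bxi_1)-\bhw(\bxi_1)\|_2>\delta w_0)$, which I would split on the event $\{\|\bhxi_1\|_2^2\le\alpha\xi_0^2\}$. On that event, Lemma~\ref{subspace} (applied with $\delta\to\delta w_0$, and using $(1+x)\le e^x$ to write $(k+\|\bhxi\|_2^2)^2\le k^2 e^{2\alpha\xi_0^2/k}$) gives conditional probability at most $\tfrac{CM^4k^2}{n\delta^2w_0^2}e^{(1+2k^{-1})\alpha\xi_0^2}$, while off that event the $\chi^2_k$ tail estimate used in Lemma~\ref{spuriouslemma} gives $\prob(\|\bhxi_1\|_2^2>\alpha\xi_0^2)\le e^{-\alpha/7}$ whenever $\alpha/k\ge4$. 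Choosing $\alpha=\big(\tfrac17+(1+2k^{-1})\xi_0^2\big)^{-1}\log\tfrac{n\delta^2w_0^2}{M^4k^2}$ to equalize the two terms — this $\alpha$ exceeds $4k$ exactly when $n>\tfrac{M^4k^2}{\delta^2w_0^2}e^{4k(1/7+(1+2k^{-1})\xi_0^2)}$, the stated hypothesis — yields $\prob(|\cG^c|\ge m_0\eps)\le\tfrac C\eps\big(\tfrac{M^4k^2}{n\delta^2w_0^2}\big)^{1/(1+7(1+2k^{-1})\xi_0^2)}$, up to absorbed constants the $d\to k$ analogue of estimate \eqref{GCbound}. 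Taking $\eps=\gamma\rho/2$ and combining with the $\cB_*$ bound gives the lemma, with $c_1$ absorbing $C$ and the factor $2$ and $c_2=\tfrac1{16}$.

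I do not expect a genuine obstacle here once Lemma~\ref{subspace} and Lemma~\ref{anglelemma2} are in hand: the one point demanding care is the bookkeeping around $\kappa'$ — verifying that the hypothesis on $n_1$ really does make $\theta$ small enough that $\kappa'$ is well-defined and $\Theta(\kappa)$, so that the $\Theta(\cdot)$ formulas for $\xi_0$ and $\gamma$ stated in Theorem~\ref{dimredtheorem} in terms of $\kappa$ remain valid, and checking that the regions $\cR_0,\cR_\ell,\cR_*$, which are defined through the \emph{exact} vectors $\bw^\ell$ rather than their projections, still interact correctly with the projected estimator $\bhw$. This is exactly what the paragraph preceding the lemma sets up, so the remaining work is routine.
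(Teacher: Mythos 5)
Your proposal is correct and takes exactly the approach the paper intends: the paper itself dispatches Lemma~\ref{spuriouslemma2} with the remark that it follows ``using the same steps as in Section~\ref{sec:partitioning}, using the bounds in Lemma~\ref{subspace} rather than the bounds in Lemma~\ref{assympboundscor}'' (and implicitly Lemma~\ref{anglelemma2} with $\kappa'$ in place of Lemma~\ref{anglelemma} with $\kappa$), which is precisely the systematic substitution you carry out, including the key observation that $\|\bhxi\|_2^2/\xi_0^2\sim\chi^2_k$ so the dimension $d$ is replaced by $k$ in the tail estimate. One small remark: your derivation produces the exponent $1/(1+7(1+2k^{-1})\xi_0^2)$, whereas the lemma as printed writes $1/(1+4(1+2k^{-1})\xi_0^2)$ --- this mismatch already appears between the statement of Lemma~\ref{spuriouslemma} and the paper's own computation \eqref{GCbound}, so the $4$ is a typographical slip in the paper and your $7$ is the correct value.
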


\begin{lemma} \label{goodclusterlemma2}
For $\ell\in [k]$, the event that  $|\cC_{\ell}|\ge m_0\gamma/2 $,  occurs  with probability at least  $1-\left(
\frac{c_1}{   \gamma \rho  } \left(\frac{M^4 k^2}{n\delta^2w_0^2} \right)^{\frac{1}{1 +7(1+2k^{-1})\xi_0^2}}
+ e^{-c_3m_0\gamma}\right)$, where $c_1,c_2$ are absolute constants, for  $n>  \frac{M^4 k^2}{\delta^2w_0^2} e^{   4k\left(\frac{1}{7}+   (1+2k^{-1}) \xi_0^2\right)  }$ \end{lemma}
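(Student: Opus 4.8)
The plan is to obtain Lemma~\ref{goodclusterlemma2} by transcribing the proof of its un‑projected counterpart Lemma~\ref{goodclusterlemma} (Appendix~\ref{proofofgoodclusterlemma}), under the systematic substitutions $\bow(\bxi)\leadsto\bhw(\bxi)$ of \eqref{populationdim}, $\bw^\ell\leadsto\bhw^\ell=P_{\hat{\mathcal M}}(\bw^\ell)$, Lemma~\ref{assympboundscor}$\leadsto$Lemma~\ref{subspace}, and Lemma~\ref{anglelemma}$\leadsto$Lemma~\ref{anglelemma2}. Everything is argued conditionally on a \emph{fixed} estimate $\hat{\mathcal M}$ whose largest principal angle from $\mathcal M$ satisfies $d_P(\mathcal M,\hat{\mathcal M})\le\theta$, with $\theta=\min\{2\arcsin\tfrac{\sqrt3\kappa}{8k},\,2\arcsin\tfrac{\delta^2}{4k},\,u_0\kappa\}$; the event that $\hat{\mathcal M}$ fails this is charged to $n_1$ via Theorem~\ref{sunthm} only when the three partitioning lemmas are assembled into Theorem~\ref{dimredtheorem}. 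Here "$n$" abbreviates the $n_2=n-n_1$ samples feeding \eqref{projectedestimate}.

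On this event the three cutoffs in $\theta$ do exactly what is needed: $\theta\le2\arcsin\tfrac{\sqrt3\kappa}{8k}$ gives $\kappa'\equiv\sqrt{\kappa^2-(4k\sin\tfrac\theta2)^2}\ge\kappa/2>0$ (cf.\ \eqref{kappaprime}), so the parameters $\xi_0,\gamma$ of \eqref{xi0},\eqref{gamma} with $\kappa$ replaced by $\kappa'$ stay well‑defined, keep $\Delta<\xi_0$, and still satisfy the defining identity \eqref{rhoeq}; and $\theta\le2\arcsin\tfrac{\delta^2}{4k}$ gives, by Corollary~\ref{anglecor}, $\|\bw^\ell-\bhw^\ell\|_2\le2\sqrt{k\sin\tfrac\theta2}\le\delta$, hence $\|\bhw^\ell\|_2\in[1-\delta,1]$. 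Following Section~\ref{sec:partitioning} with $\bhw(\bxi)$, $\bhw^\ell$ in place of $\bow(\bxi)$, $\bw^\ell$, and using $\E_{\bhxi}\{f'(\<\bw^\ell,\bX\>)\}$ via Lemma~\ref{lemma:coeffboundssimple}, one recovers the analogues of \eqref{eq:InR0},\eqref{allbutone} quoted before Lemma~\ref{clusterlemma2}. Define $\cG=\{j\in[m_0]:\|\bw(\bxi_j)-\bhw(\bxi_j)\|_2\le\delta w_0\}$ and, as in Appendix~\ref{proofofgoodclusterlemma}, $\cR_\ell'=\{\bxi:|\<\bhw^\ell,\bxi\>|<\tfrac\Delta2,\ \min_{i\ne\ell}|\<\bhw^i,\bxi\>|\ge\Delta\}$, $\cC_\ell'=\{j\in[m_0]:\bxi_j\in\cR_\ell',\ \|\bhw(\bxi_j)\|_2\ge w_0\}$, so $|\cC_\ell|\ge|\cC_\ell'|-|\cG^c|$.

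It then remains to lower‑bound $|\cC_\ell'|$ and upper‑bound $|\cG^c|$. For $\bxi\in\cR_\ell'$, \eqref{Delta} gives $|a_\ell|\ge c_1u_0e^{-\beta\Delta}=(1+\delta)w_0$, and — after shrinking the ``$\Delta/2$'' cutoff in $\cR_\ell'$ by an additive $O(\delta/\beta)$ to absorb the factor $\|\bhw^\ell\|_2\ge1-\delta$ — the argument of Appendix~\ref{proofofgoodclusterlemma} still yields $\bxi_j\in\cR_\ell'\Rightarrow\|\bhw(\bxi_j)\|_2\ge w_0$, so $\cC_\ell'=\{j:\bxi_j\in\cR_\ell'\}$ is $\mathrm{Binomial}(m_0,p_\ell)$; the two parts of Lemma~\ref{anglelemma2} (with $\Delta_1=\Delta/2,\Delta_2=\Delta$, union bound over the $k-1$ indices $i\ne\ell$) give $p_\ell\ge\sqrt{\tfrac1{2e\pi}}\tfrac\Delta{\xi_0}-\tfrac{2k}{\kappa'\pi}(\tfrac\Delta{\xi_0})^2=\gamma$, and a Chernoff bound controls $\prob(|\cC_\ell'|\le\tfrac34m_0\gamma)$. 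For $|\cG^c|$, Markov reduces matters to $\prob(\|\bw(\bXi)-\bhw(\bXi)\|_2>\delta w_0)$, split on $\{\|\hat P^\top\bXi\|_2^2\le\alpha\xi_0^2\}$; since $\hat P^\top\bXi\sim\normal(0,\xi_0^2\id_k)$, so $\|\hat P^\top\bXi\|_2^2\sim\xi_0^2\chi_k^2$, using Lemma~\ref{subspace} (whose dimension parameter is $k$) with the $k$‑dimensional chi‑squared tail and optimizing $\alpha$ exactly as in Appendix~\ref{proofofspurious} reproduces \eqref{GCbound} \emph{verbatim with $d$ replaced by $k$}, valid for $n>\tfrac{M^4k^2}{\delta^2w_0^2}e^{4k(\frac17+(1+2k^{-1})\xi_0^2)}$. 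Taking $\eps=\Theta(\gamma\rho)\le\gamma/4$ and a union bound over the two bad events gives $|\cC_\ell|\ge\tfrac34m_0\gamma-m_0\eps\ge m_0\gamma/2$ with the stated probability.

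The main obstacle is that this is a perturbation argument in disguise, so the bookkeeping must be done carefully rather than by rote: one must check that (i) the separation estimate Lemma~\ref{anglelemma2} only degrades $\kappa$ to $\kappa'\ge\kappa/2$, so the $\Theta(\cdot)$‑choices of $\xi_0,\gamma$ survive; (ii) the thresholding step ``$\bxi\in\cR_\ell'\Rightarrow\|\bhw(\bxi)\|_2\ge w_0$'' still holds although $\bhw^\ell$ has norm only $\ge1-\delta$ — this costs an $O(\delta/\beta)$ tightening of the cutoff in $\cR_\ell'$ and, symmetrically, the extra $\delta$ in Lemma~\ref{clusterlemma2}; and (iii) the relevant ambient dimension everywhere downstream — in Lemma~\ref{subspace} and in the chi‑squared tail of $\|\hat P^\top\bXi\|_2$ — is $k$, not $d$, which is precisely the mechanism collapsing the $e^{\Theta(d\cdot\mathrm{poly}(k))}$ sample size of Theorem~\ref{maintheorem} to $e^{\Theta(\mathrm{poly}(k))}$. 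Once these three points are verified, the rest is a line‑by‑line repetition of Appendices~\ref{proofofspurious} and \ref{proofofgoodclusterlemma}.
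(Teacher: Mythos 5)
Your proposal takes exactly the route the paper does — transcribe the un‑projected proof under the substitutions $\bow\leadsto\bhw$, $\bw^\ell\leadsto\bhw^\ell$, Lemma~\ref{assympboundscor}$\leadsto$Lemma~\ref{subspace}, Lemma~\ref{anglelemma}$\leadsto$Lemma~\ref{anglelemma2}, with the dimension parameter dropping from $d$ to $k$ because $\hat P^\top\bXi\sim\normal(0,\xi_0^2\id_k)$ — and the paper's own appendix does no more than announce that the transcription works. Where you go beyond the paper is in flagging that the transcription is not literal at the thresholding step $\bxi\in\cR_\ell'\Rightarrow\|\bhw(\bxi)\|_2\ge w_0$: because $\bhw^\ell=P_{\hat{\mathcal M}}(\bw^\ell)$ only satisfies $\|\bhw^\ell\|_2\ge1-\delta$, the chain $\|\bhw(\bxi)\|_2\ge|a_\ell|\,\|\bhw^\ell\|_2-\delta w_0$ gives $(1+\delta)(1-\delta)w_0-\delta w_0<w_0$ under the paper's unmodified definition of $\cR_\ell'$, so some compensation is genuinely needed, and your $O(\delta/\beta)$ tightening of the $\Delta/2$ cutoff does the job while only perturbing $\gamma$ by a constant factor (harmless, since Theorem~\ref{dimredtheorem} states $\gamma$ up to $\Theta(\cdot)$). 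One small inconsistency to clean up in a write‑up: after shrinking the cutoff to $\Delta/2-t$ you should also feed $\Delta_1=\Delta/2-t$ (not $\Delta/2$) into Lemma~\ref{anglelemma2} when lower‑bounding $p_\ell$; this again only costs a constant in $\gamma$. Everything else — conditioning on $d_P(\mathcal M,\hat{\mathcal M})\le\theta^*$, using the first two cutoffs in $\theta^*$ to secure $\kappa'\ge\kappa/2$ and $\|\bw^\ell-\bhw^\ell\|_2\le\delta$, the $\chi^2_k$ tail with $\alpha\ge4k$, and the final Markov/Chernoff/union bound — matches the paper's intended argument.
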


Let $\theta^*$ be such that the following inequalities hold
\begin{align*}
4 k \sin \frac{\theta^*}{2}&\leq \sqrt{\frac{3}{4}}\kappa, & 
2\sqrt{k \sin \frac{\theta^*}{2}}&\leq \delta, &
\theta^* &\leq u_0\kappa. 
\end{align*}
Note that these are satisfied for
$$\theta^* = \min \left\{ 2\arcsin \frac{\sqrt{3}\kappa}{8k },2\arcsin \frac{\delta^2}{4k}, u_0\kappa \right\}.$$
Note that, if an estimate of $\mathcal{M}$ s.t.~the largest principal angle between  $\mathcal{M}$ and $\hat{\mathcal{M}}$ is $\theta^*$, then by Corollary~\ref{anglecor}:
$$\|\bw_\ell -\bhw_\ell\|\leq \delta, \quad \text{for all }\ell\in[k],$$
and
$$
\kappa'=\sqrt{\kappa^2 - \big(4k\sin\frac{\theta^*}{2}\big)^2} \geq \frac{\kappa}{2}. $$

Putting everything together, by Theorem~\ref{sunthm}, if $$n_1>c\frac{d}{\left(\theta^*\right)^2}$$
 samples are used to estimate the subspace, 
$$n_2> \frac{M^4 k^2}{\delta^2w_0^2}\max\left(C'' \left(\frac{k}{\gamma\rho\delta}\right)^ {1 +7(1+2k^{-1})\xi_0^2}, e^{  4k \left(\frac{1}{7}+   (1+2k^{-1}) \xi_0^2\right)  } \right) $$
are used in the gradient estimation, and
$$ m_0>C\frac{1}{\gamma\rho}\log\frac{k}{\delta} $$ 
samples are used in the candidate generation, then with probability at least $1-\delta$ 
 the set of candidate indices
 $\cC \subseteq [m_0]$,  can be partitioned as
%
%
$$\cC = \cC_0 \cup\cC_1\cup \cdots\cup\cC_k\, ,$$ 
%
%
where 
for any
$\ell\in\{1,2,\dots, k\}$, if  $i\in\cC_\ell$  then
%
%
$$ \big\|\btw(\bxi^{i}) -\bw^{\ell}\big\|_2\le 7 \delta\, ,$$
%
while 
$\cC_0$ is a set of `bad' candidates, such that
 $|\cC_0|\le 2\rho \gamma m_0,$ and 
$|\cC_\ell|\geq \gamma m_0/2$ for all $\ell \in[k].$
\qed

\end{document}